\renewcommand{\geq}{\geqslant}
\renewcommand{\phi}{\varphi}
\newcommand{\argmax}{\operatornamewithlimits{argmax}}
\newcommand{\prof}[1]{{\boldsymbol{#1}}}
\newcommand{\N}{\mathcal{N}}
\newcommand{\imp}{\rightarrow}
\newcommand{\val}{\text{\it Val}}
\newcommand{\Edges}{V\!\times\! V}
\newcommand{\CONFIG}{P[S^+\!,S^-]}
\newcommand{\Fmaj}{F_{\text{\it maj}}}
\newtheorem{definition}{Definition}
\newtheorem{example}{Example}
\newtheorem{theorem}{Theorem}
\newtheorem{proposition}[theorem]{Proposition}
\newtheorem{lemma}[theorem]{Lemma}
\newtheorem{corollary}[theorem]{Corollary}
\newtheorem{fact}[theorem]{Fact}
\renewcommand{\cite}{\citep}
\newcommand{\thmcite}[1]{\citeauthor{#1}, \citeyear{#1}}
\begin{document}

%%%%%%%%%%%%%%%%%%%%%%%%%%%%%%%%%%%%%%%%%%%%%%%%%%%%%%%%%%%%%%%%%%%%%%%%%%%%%%%%

\title{Graph Aggregation\thanks{This work refines and extends papers presented at COMSOC-2012~\cite{EndrissGrandiCOMSOC2012} and ECAI-2014~\cite{EndrissGrandiECAI2014}. We are grateful for the extensive feedback received from several anonymous reviewers as well as from audiences at the  SSEAC Workshop on Social Choice and Social Software held in Kiel in 2012, the Dagstuhl Seminar on Computation and Incentives in Social Choice in 2012, the KNAW Academy Colloquium on Dependence Logic held at the Royal Netherlands Academy of Arts and Sciences in Amsterdam in 2014, a course on logical frameworks for multiagent aggregation given at the 26th European Summer School in Logic, Language and Information (ESSLLI-2014) in T\"ubingen in 2014, the Lorentz Center Workshop on Clusters, Games and Axioms held in Leiden in 2015, and lectures delivered at Sun Yat-Sen University in Guangzhou in 2014 as well as at \'Ecole Normale Sup\'erieure and Pierre \& Marie Curie University in Paris in 2016. This work was partly supported by COST Action IC1205 on Computational Social Choice. It was completed while the first author was hosted at the University of Toulouse in 2015 as well as Paris-Dauphine University, Pierre \& Marie Curie University, and the London School of Economics in~2016.}}

\author[1]{Ulle Endriss}
%\ead{ulle.endriss@uva.nl}

\author[2]{Umberto Grandi}
%\ead{umberto.grandi@ut-capitole.fr}

\affil[1]{ILLC, University of Amsterdam, The Netherlands}
\affil[2]{IRIT, University of Toulouse, France}
\date{}

\maketitle

\begin{abstract}
Graph aggregation is the process of computing a single output graph that constitutes a good compromise between several input graphs, each provided by a different source. One needs to perform graph aggregation in a wide variety of situations, e.g., when applying a voting rule (graphs as preference orders), when consolidating conflicting views regarding the relationships between arguments in a debate (graphs as abstract argumentation frameworks), or when computing a consensus between several alternative clusterings of a given dataset (graphs as equivalence relations). In this paper, we introduce a formal framework for graph aggregation grounded in social choice theory. Our focus is on understanding which properties shared by the individual input graphs will transfer to the output graph returned by a given aggregation rule. We consider both common properties of graphs, such as transitivity and reflexivity, and arbitrary properties expressible in certain fragments of modal logic. Our results establish several connections between the types of properties preserved under aggregation and the choice-theoretic axioms satisfied by the rules used. The most important of these results is a powerful impossibility theorem that generalises Arrow's seminal result for the aggregation of preference orders to a large collection of different types of graphs.
\end{abstract}

%\begin{keyword}
%social choice theory \sep
%collective rationality \sep
%impossibility theorems \sep
%graph theory \sep
%modal logic \sep
%preference aggregation \sep
%belief merging \sep
%consensus clustering \sep
%argumentation theory 
%\end{keyword}

%%%%%%%%%%%%%%%%%%%%%%%%%%%%%%%%%%%%%%%%%%%%%%%%%%%%%%%%%%%%%%%%%%%%%%%%%%%%%%%%
\section{Introduction}	   
%%%%%%%%%%%%%%%%%%%%%%%%%%%%%%%%%%%%%%%%%%%%%%%%%%%%%%%%%%%%%%%%%%%%%%%%%%%%%%%%

% WHAT IS GRAPH AGGREGATION?
\noindent
Suppose each of the members of a group of autonomous agents provides us with a different directed graph that is defined on a common set of vertices. Graph aggregation is the task of computing a single graph over the same set of vertices that, in some sense, represents a good compromise between the various individual views expressed by the agents. 
%In this paper, we introduce the problem of graph aggregation, we define a general framework for studying different methods of aggregation, we prove a number of results that highlight both opportunities and limitations in graph aggregation, and we demonstrate its relevance by instantiating our general results to a wide range of concrete application scenarios. 
%These application scenarios include preference aggregation, belief merging, consensus clustering, and argumentation in multiagent systems.
%
% WHY IS THIS IMPORTANT AND RELEVANT TO AI? / EXAMPLES FOR APPLICATIONS
Graphs are ubiquitous in computer science and artificial intelligence (AI). For example, in the context of decision support systems, an edge from vertex~$x$ to vertex~$y$ might indicate that alternative~$x$ is \emph{preferred} to alternative~$y$. In the context of modelling interactions taking place on an online debating platform, an edge from $x$ to $y$ might indicate that argument~$x$ undercuts or otherwise \emph{attacks} argument~$y$. And in the context of social network analysis, an edge from $x$ to $y$ might express that person~$x$ is \emph{influenced} by person~$y$. 
How to best perform graph aggregation is a relevant question in these three domains, as well as in any other domain where graphs are used as a modelling tool and where particular graphs may be supplied by different agents or originate from different sources. For example, in an election, i.e., in a group decision making context, we have to aggregate the preferences of several voters. In a debate, we sometimes have to aggregate the views of the individual participants in the debate. And when trying to understand the dynamics within a community, we sometimes have to aggregate information coming from several different social networks. %---that is, in this last example the agents are the providers of the different social networks.

% APPROACH: SCT 
In this paper, we introduce a formal framework for studying graph aggregation in general abstract terms and we discuss in detail how this general framework can be instantiated to specific application scenarios. We introduce a number of concrete methods for performing aggregation, but more importantly, our framework provides tools for evaluating what constitutes a ``good'' method of aggregation and it allows us to ask questions regarding the existence of methods that meet a certain set of requirements. Our approach is inspired by work in social choice theory~\cite{ArrowEtAlHBSCW2002}, which offers a rich framework for the study of aggregation rules for preferences---a very specific class of graphs. In particular, we adopt the \emph{axiomatic method} used in social choice theory, as well as other parts of economic theory, to identify intuitively desirable properties of aggregation methods, to define them in mathematically precise terms, and to systematically explore their logical consequences. 

% CENTRAL CONCEPT: COLLECTIVE RATIONALITY / DIFFERENT WAYS OF EXPRESSING PROPERTIES
An aggregation rule maps any given \emph{profile} of graphs, one for each agent, into a single graph, which we will often refer to as the \emph{collective graph}. The central concept we focus on in this paper is the \emph{collective rationality} of aggregation rules with respect to certain properties of graphs. Suppose we consider an agent rational only if the graph she provides has certain properties, such as being reflexive or transitive. Then we say that a given aggregation rule~$F$ is collectively rational with respect to that property of interest if and only if $F$ can guarantee that that property is preserved during aggregation. For example, if we aggregate individual graphs by computing their \emph{union} (i.e., if we include an edge from $x$ to $y$ in our collective graph if at least one of the individual graphs includes that edge), then it is easy to see that the property of \emph{reflexivity} will always transfer. On the other hand, the property of \emph{transitivity} will not always transfer. For example, if we aggregate two graphs over the set of vertices $V=\{x,y,z\}$, one consisting only of the edge $(x,y)$ and one consisting only of the edge $(y,z)$, then although each of these two graphs is (vacuously) transitive, their union is not, as it is missing the edge $(x,z)$. Thus, the union rule is collectively rational with respect to reflexivity, but not with respect to transitivity. We study collective rationality with respect to some such well-known and widely used properties of graphs, but also with respect to large families of graph properties that satisfy certain \emph{meta-properties}. We explore both a semantic and a syntactic approach to defining such meta-properties. In our semantic approach, we identify certain high-level features of graph properties that determine the kind of aggregation rules that are collectively rational with respect to them. For example, transitivity is what we call an ``implicative'' property: under certain circumstances (namely in the presence of an edge from $x$ to $y$), the inclusion of an edge from $y$ to $z$ \emph{implies} the inclusion of an edge from $x$ to $z$. In our syntactic approach, we consider graph properties that can be expressed in particular syntactic fragments of a logical language. To this end, we make use of the language of modal logic~\cite{BlackburnEtAl2001}. This allows us to establish links between the syntactic properties of the language used to express the integrity constraints we would like to see preserved during aggregation and the axiomatic properties of the rules used. 

%Moreover, a compact languages is needed to express constraints. Model checking is widespread and many modal languages it is quite easy [[check, examples]].

% PREVIEW OF RESULTS
We prove both \emph{possibility} and \emph{impossibility results}. A possibility result establishes that every aggregation rule belonging to a certain class of rules (typically defined in terms of certain axioms) is collectively rational with respect to all graph properties that satisfy a certain meta-property. An impossibility result, on the other hand, establishes that it is impossible to define an aggregation rule belonging to a certain class that would be collectively rational with respect to any graph property that meets a certain meta-property---or that the only such aggregation rules would be clearly very unattractive for other reasons. Our main result is such an impossibility theorem. It is a generalisation of Arrow's seminal result for preference aggregation~\cite{Arrow1963}. Arrow's Theorem says that no aggregation rule that satisfies the \emph{Pareto principle} and that is \emph{independent of irrelevant alternatives} can be guaranteed to always preserve the transitivity and completeness of the preference orders being aggregated. Here, the Pareto principle stipulates that the aggregation rule should respect any unanimously held strict preferences over pairs of alternatives (thus, if all individuals rank $x$ above $y$, so should the rule). The independence property expresses that it should be possible to decide on the relative rankings of alternatives in a pair-by-pair fashion (thus, to decide whether $x$ should be ranked above $y$, the rule should only have to consider the relative rankings of $x$ and $y$ provided by the individuals, rather than, say, how they rank $x$ and $z$). The only exceptions admitted by Arrow's Theorem are rules for fewer than three alternatives and rules that are \emph{dictatorial}, in the sense of always returning the preference order of one specific agent (the dictator). Our approach of working with meta-properties has two advantages. First, it permits us to give conceptually simple proofs for powerful results with a high degree of generality. Second, it makes it easy to instantiate our general results to obtain specific results for specific application scenarios. For example, Arrow's Theorem follows immediately from our more general result by checking that the properties of graphs that represent preference orders satisfy the meta-properties featuring in our theorem, yet our proof of the general theorem is arguably simpler than a direct proof of Arrow's Theorem. This is so, because the meta-properties we use very explicitly exhibit specific features required for the proof, while those features are somewhat hidden in the specific properties of transitivity and completeness. Similarly, we show how alternative instantiations of our general result easily generate both known and new results in other domains, such as the aggregation of plausibility orders (which has applications in nonmonotonic reasoning and belief merging) and the aggregation of equivalence relations (which has applications in clustering analysis).

% RELATED WORK / APPLICATIONS
Our work builds on and is related to contributions in the field of social choice theory, starting with the seminal contribution of \citet{Arrow1963}. This concerns, in particular, contributions to the theory of voting and preference aggregation~\cite{FishburnJET1970,KirmanSondermannJET1972,HanssonPC1976,Sen1986,PiniEtAlJLC2009,ArrowEtAlHBSCW2002}, but also judgment aggregation~\cite{ListPettitEP2002,GardenforsEP2006,DietrichListJTP2007,DokowHolzmanJET2010,HerzbergEckertMSS2012,GrandiEndrissAIJ2013,ListPuppe2009}. In fact, in terms of levels of generality, graph aggregation may be regarded as occupying the middle ground between preference aggregation (most specific) and judgment aggregation (most general). In computer science, these frameworks are studied in the field of computational social choice~\cite{BrandtEtAlHBCOMSOC2016}. As we shall discuss in some detail, graph aggregation is an abstraction of several more specific forms of aggregation taking place in a wide range of different domains. Preference aggregation is but one example. Aggregation of specific types of graphs has been studied, for instance, in nonmonotonic reasoning~\cite{DoyleWellmanAIJ1991}, belief merging~\cite{MaynardZhangLehmannJAIR2003}, social network analysis~\cite{WhiteEtAlAJS1976}, clustering~\cite{FishburnRubinsteinJC1986}, and argumentation in multiagent systems~\cite{TohmeEtAlFoIKS2008}. As we shall see, several of the results obtained in these earlier contributions are simple corollaries of our general results on graph aggregation. 

% PAPER OVERVIEW
The remainder of this paper is organised as follows. In Section~\ref{sec:aggregation}, we introduce our framework for graph aggregation. This includes the discussion of several application scenarios, the definition of a number of concrete aggregation rules, and the formulation of various axioms identifying intuitively desirable properties of such rules. It also includes the definition of the concept of collective rationality. Finally, we prove a number of basic results in Section~\ref{sec:aggregation}: characterisation results linking rules and axioms, as well as possibility results linking axioms and collective rationality requirements.
In Section~\ref{sec:impossibility}, we present our impossibility results for graph aggregation rules that are collectively rational with respect to graph properties meeting certain semantically defined meta-properties. There are two such results. One identifies conditions under which the only available rules are so-called \emph{oligarchies}, under which the outcome is always the intersection of the graphs provided by a subset of the agents (the oligarchs). A second result shows that, under slightly stronger assumptions, the only available rules are the \emph{dictatorships}, where a single agent completely determines the outcome for every possible profile. Much of Section~\ref{sec:impossibility} is devoted to the definition and illustration of the meta-properties featuring in these results. Once they are in place, the proofs are relatively simple.  
In Section~\ref{sec:modal}, we introduce our approach to describing collective rationality requirements in syntactic terms, using the language of modal logic. Our results in Section~\ref{sec:modal} establish simple conditions on the syntax of the specification of a graph property that are sufficient for guaranteeing that the property in question will be preserved under aggregation.  The grounding of our approach in modal logic also allows us to provide a deeper analysis of the concept of collective rationality by considering the preservation of properties at three different levels, corresponding to the three levels naturally defined by the notions of Kripke frame, Kripke model, and possible world, respectively.  
In Section~\ref{sec:applications}, we discuss four of our application scenarios in more detail, focusing on application scenarios previously discussed in the AI literature. We show how our general results allow us to derive new simple proofs of known results, how they clarify the status of some of these results, and how they allow us to obtain new results in these domains of application.
Section~\ref{sec:conclusion}, finally, concludes with a brief summary of our results and pointers to possible directions for future work.

%%%%%%%%%%%%%%%%%%%%%%%%%%%%%%%%%%%%%%%%%%%%%%%%%%%%%%%%%%%%%%%%%%%%%%%%%%%%%%%%
\section{Graph Aggregation}\label{sec:aggregation}
%%%%%%%%%%%%%%%%%%%%%%%%%%%%%%%%%%%%%%%%%%%%%%%%%%%%%%%%%%%%%%%%%%%%%%%%%%%%%%%%

\noindent
In this section, we introduce a simple framework for graph aggregation. The basic definitions are given in Section~\ref{sec:definitions}. While this is a general framework that is independent of specific application scenarios and specific choices regarding the aggregation rule used, we briefly discuss several such specific scenarios in Section~\ref{sec:examples} and suggest definitions for several specific aggregation rules in Section~\ref{sec:rules}. We then approach the analysis of aggregation rules from two different but complementary angles. First, in Section~\ref{sec:axioms}, we define several \emph{axiomatic properties} of aggregation rules that a user may wish to impose as requirements when looking for a ``fair'' or ``well-behaved'' aggregation rule for a specific application. We also prove a number of simple results that show how some of these axioms relate to each other and to some of the aggregation rules defined earlier. Second, in Section~\ref{sec:CR}, we introduce the central concept of \emph{collective rationality} and we prove a number of simple positive results that show how enforcing certain axioms allows us to guarantee collective rationality with respect to certain graph properties.

%%%%%%%%%%%%%%%%%%%%%%%%%%%%%%%%%%%%%%%%%%%%%%%%%%%%%%%%%%%%%%%%%%%%%%%%%%%%%%%%
\subsection{Basic Notation and Terminology}\label{sec:definitions}
%%%%%%%%%%%%%%%%%%%%%%%%%%%%%%%%%%%%%%%%%%%%%%%%%%%%%%%%%%%%%%%%%%%%%%%%%%%%%%%%

\noindent
Fix a finite set of \emph{vertices} $V\!$. A (directed) \emph{graph} $G=\langle V,E\rangle$ based on $V$ is defined by a set of \emph{edges} $E\subseteq\Edges$.
We write $xEy$ for $(x,y)\in E$.
As $V$ is fixed, $G$ is in fact fully determined by $E$. We therefore identify sets of edges $E\subseteq\Edges$ with the graphs $G=\langle V,E\rangle$ they define. 
For any kind of set~$S$, we use $2^S$ to denote the powerset of~$S$.
So $2^{\Edges}$ is the set of all graphs.
We use $E(x):=\{y\in V \mid (x,y)\in E\}$ to denote the set of \emph{successors} of a vertex~$x$ in a set of edges~$E$ and $E^{-1}(y) := \{x\in V \mid (x,y)\in E\}$ to denote the set of \emph{predecessors} of~$y$ in~$E$.

\begin{table}[t]
\centering
\begin{tabular}{ll} \toprule
\textsc{Property} & \textsc{First-Order Condition} \\ \midrule
Reflexivity & $\forall x.xEx$ \\
Irreflexivity & $\neg\exists x.xEx$ \\
Symmetry & $\forall xy.(xEy \imp yEx)$ \\
Antisymmetry & $\forall xy.(xEy\wedge yEx\imp x=y)$ \\
Right Euclidean & $\forall xyz.[(xEy\wedge xEz) \imp yEz]$ \\
Left Euclidean  & $\forall xyz.[(xEy\wedge zEy) \imp zEx]$ \\
Transitivity & $\forall xyz.[(xEy\wedge yEz) \imp xEz]$ \\
Negative Transitivity & $\forall xyz.[xEy \imp (xEz\vee zEy)]$ \\
Connectedness & $\forall xyz.[(xEy\wedge xEz) \imp (yEz \vee zEy)]$ \\
Completeness & $\forall xy.[x\not=y\imp (xEy \vee yEx)]$ \\
%Strong Completeness & $\forall xy.(xEy \vee yEx)$ \\
%Functionality & $\forall xyz.(xEy\wedge xEz\imp y=z)$ \\
%Church-Rosser & $\forall xy.[xEy\wedge xEz \imp \exists w.(yEw\wedge zEw)]$ \\
Nontriviality & $\exists xy.xEy$ \\
Seriality & $\forall x.\exists y.xEy$ \\
%Acyclicity & $\neg\exists x_1 x_2\cdots x_k.[x_1 E x_2 \wedge \cdots \wedge x_{k-1}E x_k \wedge x_k E x_1]$ \\ % not first-order!
\bottomrule 
\end{tabular}
\caption{Common properties of directed graphs.\label{tab:graph-properties}}
\end{table}

A given graph may or may not satisfy a specific \emph{property}, such as transitivity or reflexivity. Table~\ref{tab:graph-properties} recalls the definitions of several such properties.\footnote{Some of these may be less well known than others, so let us briefly review the less familiar definitions. The two Euclidean properties encode Euclid's idea that ``things which equal the same thing also equal one another''. Negative transitivity, a property commonly assumed in the economics literature on preferences, may equivalently be expressed as $\forall xyz.[(\neg xEy\wedge\neg yEz)\imp\neg xEz]$, which explains the name of the property. Completeness requires any two distinct vertices to be related one way or the other. Connectedness only requires two (not necessarily distinct) vertices to be related one way or the other if they are both reachable from some common predecessor (the term ``connectedness'' is commonly used in the modal logic literature~\cite{BlackburnEtAl2001}). Nontriviality excludes the empty graph, while seriality (also a term used in the modal logic literature) requires every vertex to have at least one successor.}
We will often be interested in families of graphs that all satisfy several of these properties. For instance, a \emph{weak order} is a directed graph that is reflexive, transitive, and complete.
It will often be useful to think of a graph property $P$, such as transitivity, as a subset of $2^{\Edges}$ (the set of all graphs over the set of vertices~$V$).
For two disjoint sets of edges $S^+$ and $S^-$ and a graph property $P\subseteq 2^{\Edges}$, let $\CONFIG = \{E\in P\mid S^+\subseteq E\ \mbox{and}\ S^-\cap E=\emptyset\}$ denote the set of graphs in $P$ that include all of the edges in $S^+$ and none of those in~$S^-$.

%\uenote{What is the correct terminology for what we call ``connected'' in graph theory? Our terminology seems common in modal logic, but not in graph theory.}
%\uenote{Strong completeness could be removed.}
%\uenote{Acyclicity is mentioned informally in the text, but not defined. Probably should indeed be omitted at this point, as it is not first-order definable and has none of our three meta-properties.}

Let $\N=\{1,\ldots,n\}$ be a finite set of (two or more) \emph{individuals} (or \emph{agents}). We will often refer to subsets of $\N$ as \emph{coalitions} of individuals.
Suppose every individual $i\in\N$ specifies a graph $E_i\subseteq\Edges\!$. This gives rise to a \emph{profile} $\prof{E}=(E_1,\ldots,E_n)$.
We use $N^\prof{E}_e:=\{i\in\N \mid e\in E_i\}$ to denote the coalition of individuals accepting edge~$e$ under profile~$\prof{E}$. 

\begin{definition}
An \textbf{aggregation rule} is a function $F:(2^{\Edges})^n\to2^{\Edges}$, mapping any given profile of individual graphs into a single graph.
\end{definition}

\noindent
We will sometimes denote the outcome $F(\prof{E})$ obtained when applying an aggregation rule~$F$ to a profile $\prof{E}$ simply as $E$ and refer to it as the  \emph{collective graph}. An example for an aggregation rule is the \emph{majority rule}, accepting a given edge if and only if more than half of the individuals accept it. More examples will be provided in Section~\ref{sec:rules}.

%%%%%%%%%%%%%%%%%%%%%%%%%%%%%%%%%%%%%%%%%%%%%%%%%%%%%%%%%%%%%%%%%%%%%%%%%%%%%%%%
\subsection{Examples of Application Scenarios}\label{sec:examples}
%%%%%%%%%%%%%%%%%%%%%%%%%%%%%%%%%%%%%%%%%%%%%%%%%%%%%%%%%%%%%%%%%%%%%%%%%%%%%%%%

\noindent
Directed graphs are ubiquitous in computer science and beyond. They have been used as modelling devices for a wide range of applications. We now sketch a number of different application scenarios for graph aggregation, each requiring different types of graphs (satisfying different properties) to model relevant objects of interest, and each requiring different types of aggregation rules.

\begin{example}[Preferences]
Our main example for a graph aggregation problem will be preference aggregation as classically studied in social choice theory~\cite{Arrow1963}. In this context, vertices are interpreted as alternatives available in an election and the graphs considered are weak orders on these alternatives, interpreted as preference orders. Our aggregation rules then reduce to so-called social welfare functions. Social welfare functions, which return a preference order for every profile of individual preference orders, are similar objects as voting rules, which only return a winning alternative for every profile. While the types of preferences typically considered in classical social choice theory are required to be complete, recent work in AI has also addressed the aggregation of partial preference orders~\cite{PiniEtAlJLC2009}, corresponding to a larger family of graphs than the weak orders. In the context of aggregating complex preferences defined over combinatorial domains, graph aggregation can also be used to decide which preferential dependencies between different variables one should try to respect, based on the dependencies reported by the individual decision makers~\cite{AiriauEtAlIJCAI2011}.
\end{example}

\begin{example}[Knowledge]
If we think of $V$ as a set of possible worlds, then a graph on $V$ that is reflexive and transitive (and possibly also symmetric) can be used to model an agent's knowledge: $(x,y)$ being an edge means that, if $x$ is the actual world, then our agent will consider $y$ a possible world~\cite{Hintikka1962}. 
%If we have several individuals each providing a model of our agent's knowledge, then we might want to aggregate that information into a consensus model of the agent's knowledge.\footnote{That is, our individuals are not situated ``within'' such an epistemic model, but rather they each provide us with a full such model.}
If we aggregate the graphs of several agents by taking their intersection, then the resulting collective graph represents the distributed knowledge of the group, i.e., the knowledge the members of the group can infer by pooling all their individual resources. If, on the other hand, we aggregate by taking the union of the individual graphs, then we obtain what is sometimes called the shared or mutual knowledge of the individual agents, i.e., the part of the knowledge available to each and every individual on their own. Finally, if we aggregate by computing the transitive closure of the union of the individual graphs, then we obtain a model of the group's common knowledge~\cite[p.~512]{Egre2011}. These concepts play a role in disciplines as diverse as epistemology~\cite{LewisAJP1996}, game theory~\cite{AumannAS1976}, and distributed systems~\cite{HalpernMosesJACM1990}.
\end{example}

\begin{example}[Nonmonotonic reasoning]
When an intelligent agent attempts to update her beliefs or to decide what action to take, she may resort to several patterns of common-sense inference that will sometimes be in conflict with each other. To take a famous example, we may wish to infer that Nixon is a pacifist, because he is a Quaker and Quakers by default are pacifists, and we may at the same time wish to infer that Nixon is not a pacifist, because he is a Republican and Republicans by default are not pacifists. In a popular approach to nonmonotonic reasoning in AI, such default inference rules are modelled as graphs that encode the relative plausibility of different conclusions~\cite{Shoham1987}. Thus, here the possible conclusions are the vertices and we obtain a graph by linking one vertex with another, if the former is considered at least as plausible as the latter. Conflict resolution between different rules of inference then requires us to aggregate such plausibility orders, to be able to determine what the ultimately most plausible state of the world might be~\cite{DoyleWellmanAIJ1991}.
\end{example}

\begin{example}[Social networks]
We may also think of each of the graphs in a profile as a different social network relating members of the same population. One of these networks might describe work relations, another might model family relations, and a third might have been induced from similarities in online purchasing behaviour. Social networks are often modelled using undirected graphs, which we can simulate in our framework by requiring all graphs to be symmetric. Aggregating individual graphs then amounts to finding a single meta-network that describes relationships at a global level. Alternatively, we may wish to aggregate several graphs representing snapshots of the same social network at different points in time.  The meta-network obtained can be helpful when studying the social structures within the population under scrutiny~\cite{WhiteEtAlAJS1976}. 
\end{example}

%\uenote{They really seem to do some kind of aggregation in \cite{WhiteEtAlAJS1976}, but I did not check the details. An additional reference supporting the notion that social network aggregation is not a silly idea would be good. UG: I did not find anything where aggregation methods are explicitely used. There is a wikipedia page on this: \url{https://en.wikipedia.org/wiki/Social_network_aggregation}, but it's too general and not exactly on the topic.}

\begin{example}[Clustering]
Clustering is the attempt of partitioning a given set of data points into several clusters. The intention is that the data points in the same cluster should be more similar to each other than each of them is to data points belonging to one of the other clusters. This is useful in many disciplines, including information retrieval and molecular biology, to name but two examples. However, the field is lacking a precise definition of what constitutes a ``correct'' partitioning of the data and there are many different clustering algorithms, such as $k$-means or single-linkage clustering, and even more parameterisations of those basic algorithms~\cite{TanEtAl2005}. Observe that every partitioning that might get returned by a clustering algorithm induces an equivalence relation (i.e., a graph that is reflexive, symmetric, and transitive): two data points are equivalent if and only if they belong to the same cluster. Finding a compromise between the solutions suggested by several clustering algorithms is what is known as consensus clustering~\cite{GionisEtAlTKDD2007}. This thus amounts to aggregating several graphs that are equivalence relations.
\end{example}

\begin{example}[Argumentation]
In a so-called abstract argumentation framework, arguments are taken to be vertices in a graph and attacks between arguments are modelled as directed edges between them~\cite{DungAIJ1995}. A graph property of interest in this context is acyclicity, as that makes it easier to decide which arguments to ultimately accept. If we think of $V$ as the collection of arguments proposed in a debate, a profile $\prof{E}=(E_1,\dots, E_n)$ specifies an attack relation for each of a number of agents that we may wish to aggregate into a collective attack relation before attempting to determine which of the arguments might be acceptable to the group. Recent work has addressed the challenge of aggregating several abstract argumentation frameworks from a number of angles, e.g., by proposing concrete aggregation methods grounded in work on belief merging \cite{CosteMarquisEtAlAIJ2007}, by investigating the computational complexity of aggregation~\cite{DunneEtAlCOMMA2012}, and by analysing what kinds of profiles we may reasonably expect to encounter in this context~\cite{AiriauEtAlAAMAS2016}.
\end{example}

\begin{example}[Logic]
Graph aggregation is also at the core of recent work on the aggregation of different logics~\cite{WenLiuLORI2013}. The central idea here is that every logic is defined by a consequence relation between formulas. Thus, given a set of formulas, we can think of a logic~$\mathcal{L}$ as the graph corresponding to the consequence relation defining~$\mathcal{L}$. Aggregating several such graphs then gives rise to a new logic.
%In related work, \citet{WenLiuLORI2013} introduce a problem they call \emph{logic aggregation:} Several agents each have their own consequence relation over formulas (defining a logic) and we try to find a good compromise, i.e., a collective consequence relation. 
Thus, this is an instance of our graph aggregation problem, except that for the case of logic aggregation it is more natural to model the set of vertices as being infinite.\footnote{All results reported in this paper remain true if we permit graphs with infinite sets~$V$ of vertices. However, for ease of exposition and as most applications are more naturally modelled using finite graphs, we do not explore this generalisation here. The finiteness of the set~$\N$ of agents, however, is crucial. It will be exploited in the proofs of Lemmas~\ref{lem:oligarchy-filter} and~\ref{lem:dictator-ultrafilter} below, on which all of our theorems rely.} 
%\uenote{Actually, I'm not completely sure the point about finiteness of $V$ not mattering is correct ... Maybe in the neutrality lemma we need finiteness to show that we can really traverse the entire set of pairs of vertices? We could also just omit this footnote.}
\end{example}

\noindent
Recall that we have assumed that every individual specifies a graph on \emph{the same} set of vertices~$V$. This is a natural assumption to make in all of our examples above, but in general we might also be interested in aggregating graphs defined on different sets of vertices. For instance, Coste-Marquis et al.~\cite{CosteMarquisEtAlAIJ2007} have argued that, in the context of merging argumentation frameworks, the case of agents who are not all aware of the exact same set of arguments is of great practical interest. Observe that also in this case our framework is applicable, as we may think of $V$ as the union of all the individual sets of vertices (with each individual only providing edges involving ``her'' vertices).

We will return to several of these application scenarios in greater detail in Section~\ref{sec:applications}.

%%%%%%%%%%%%%%%%%%%%%%%%%%%%%%%%%%%%%%%%%%%%%%%%%%%%%%%%%%%%%%%%%%%%%%%%%%%%%%%%
\subsection{Aggregation Rules}\label{sec:rules}
%%%%%%%%%%%%%%%%%%%%%%%%%%%%%%%%%%%%%%%%%%%%%%%%%%%%%%%%%%%%%%%%%%%%%%%%%%%%%%%%

\noindent
Next, we define a number of concrete aggregation rules. We begin with three that are particularly simple, the first of which we have already introduced informally.

\begin{definition}
The (strict) \textbf{majority rule} is the aggregation $\Fmaj$ with $\Fmaj : \prof{E} \mapsto \{e\in\Edges : |N^{\prof{E}}_e| > \frac{n}{2}\}$. 
\end{definition}

\begin{definition}
The \textbf{intersection rule} is the aggregation rule $F_\cap$ with $F_\cap : \prof{E} \mapsto E_1\cap\cdots\cap E_n$.
\end{definition}

\begin{definition}
The \textbf{union rule} is the aggregation rule $F_\cup$ with $F_\cup : \prof{E} \mapsto E_1\cup\cdots\cup E_n$.
\end{definition}

\noindent
In related contexts, the intersection rule is also known as the \emph{unanimity rule}, as it requires unanimous approval from all individuals for an edge to be accepted. Similarly, the union rule is a \emph{nomination rule}, as nomination by just one individual is enough for an edge to get accepted.

Under a \emph{quota rule}, an edge will be included in the collective graph if the number of individuals accepting it meets a certain quota. A \emph{uniform} quota rule uses the same quota for every edge.

\begin{definition}%[Quota rules]
A \textbf{quota rule} is an aggregation rule $F_q$ defined via a function $q : \Edges \to\{0,1,\ldots,n{+}1\}$, associating each edge with a quota, by stipulating $F_q : \prof{E} \mapsto \{e\in\Edges : |N^{\prof{E}}_e| \geq q(e)\}$. $F_q$~is called \textbf{uniform} in case $q$ is a constant function.
\end{definition}

\noindent
The class of uniform quota rules includes the three simple rules we have seen earlier as special cases:
the (strict) \emph{majority rule}~$\Fmaj$ is the uniform quota rule with $q=\lceil\frac{n+1}{2}\rceil$,
%The \emph{weak majority rule} is the uniform quota rule with $q=\lceil\frac{n}{2}\rceil$. 
%Observe that for an odd number $n$ of agents, the weak and the strict majority rule coincide.
the \emph{intersection rule}~$F_\cap$ is the uniform quota rule with $q=n$, and
the \emph{union rule}~$F_\cup$ is the uniform quota rule with $q=1$.
We call the uniform quota rules with $q=0$ and $q=n{+}1$ the \emph{trivial} quota rules; $q=0$ means that \emph{all} edges will be included in the collective graph and $q=n{+}1$ means that \emph{no} edge will be included (independently of the profile encountered).
The idea of using quota rules is natural and widespread. For example, quota rules have also been studied in judgment aggregation~\cite{DietrichListJTP2007}. 

We now introduce a new class of aggregation rules specifically designed for graphs that is inspired by approval voting~\cite{BramsFishburn2007}. Imagine we associate with each vertex an election in which all the possible successors of that vertex are the candidates (and in which there may be more than one winner). Each individual votes by stating which vertices they consider acceptable successors.

\begin{definition}%[Successor-approval rules]
A \textbf{successor-approval rule} is an aggregation rule~$F_v$ defined via a function $v:(2^V)^n\to 2^V$, associating each profile of approval sets of vertices with a winning set of vertices, by stipulating $F : \prof{E} \mapsto \{(x,y)\in\Edges \mid y\in v(E_1(x),\ldots,E_n(x))\}$. 
\end{definition}

\noindent
We call $v$ the \emph{choice function} associated with $F_v$. It takes a vector of sets of vertices, one for each agent, and returns another such set. We will only be interested in choice functions~$v$ that are $(i)$~\emph{anonymous} and $(ii)$~\emph{neutral}, i.e., for which $(i)$~$v(S_1,\ldots,S_n)=v(S_{\pi(1)},\ldots,S_{\pi(n)})$ for any permutation $\pi:\N\to\N$ and for which $(ii)$~$\{i\in\N \mid x\in S_i\} = \{i\in\N \mid y\in S_i\}$ entails $x\in v(S_1,\ldots,S_n) \Leftrightarrow y\in v(S_1,\ldots,S_n)$. 
There are a number of natural choices for~$v$. For example, if we use the classical approval voting rule $v:(S_1,\ldots,S_n) \mapsto \argmax_{x\in V}|\{i\in\N : x\in S_i\}|$, we end up accepting for each vertex those outgoing edges that have maximal support.
Alternatively, we might want to accept all edges receiving above-average support. While classical approval voting will typically result in very ``sparse'' output graphs, intuitively the latter rule will return graphs that have similar attributes as the input graphs.
A third option is to use ``even-and-equal'' cumulative voting with $v : (S_1,\ldots,S_n) \mapsto \argmax_{x\in V} \sum_{i|x\in S_i}\frac{1}{|S_i|}$, i.e., to let each individual distribute her weight evenly over the successors she approves of. This would be attractive, for instance, under an epistemic interpretation, where agents specifying fewer edges might be considered more certain about those edges.\footnote{Note that in case no individual graph has any outgoing edges for~$x$, the successor-approval rules defined in terms of the $\argmax$-operator would accept \emph{all} edges emanating from $x$. This will usually not be desirable and in practice has to be taken care of by defining a suitable exception.}
Finally, observe that the uniform quota rules (but not the general quota rules) are a special case of the successor-approval rules. We obtain $F_q$ with the constant function $q:e\mapsto k$, mapping any given edge to the fixed quota~$k$, by using $v : (S_1,\ldots,S_n) \mapsto \{x\in V : |\{i\in\N : x\in S_i\}| \geq k\}$.

While we will not do so in this paper, it is also possible to adapt the \emph{distance-based rules}---familiar from preference aggregation, belief merging, and judgment aggregation~\citep{Kemeny1959,KoniecznyPinoPerezJLC2002,MillerOshersonSCW2009}---to the case of graph aggregation. Such rules select a collective graph that satisfies certain properties and that minimises the distance to the individual graphs (for a suitable notion of distance and a suitable form of aggregating such distances). A downside of this approach is that distance-based rules are typically computationally intractable~\citep{HemaspaandraEtAlTCS2005,EndrissEtAlJAIR2012,LangSlavkovikECAI2014,EndrissDeHaanAAMAS2015}, while quota and successor-approval rules have very low complexity. 

We can also adapt the \emph{representative-voter rules}~\citep{EndrissGrandiAAAI2014} to the case of graph aggregation. Here, the idea is to return one of the input graphs as the output, and for every profile to pick the input graph that in some sense is ``most representative'' of the views of the group. 

\begin{definition}\label{def:representative-voter-rule}
A \textbf{representative-voter rule} is an aggregation rule~$F$ that is such that for every profile~$\prof{E}$ there exists an individual $i^\star\in\N$ such that $F(\prof{E})=E_{i^\star}$.
\end{definition}

\noindent
For instance, we might pick the input graph that is closest to the outcome of the majority rule. This majority-based representative-voter rule also has very low complexity. While we will not study any specific representative-voter rule in this paper, in Section~\ref{sec:world-CR} we will briefly discuss this class of rules as a whole.

%While of great practical importance, we shall not consider distance-based or representative-voter rules any further in this paper, because they all ensure that the collective graph satisfies the graph properties we desire ``by design''. Thus, the main question we will investigate in this paper, namely the question into the circumstances under which desirable graph properties are preserved under aggregation simply does not arise for these rules.  

We conclude our presentation of concrete (families of) aggregation rules with a number of rules that, intuitively speaking, are not very attractive.

\begin{definition}
The \textbf{dictatorship} of individual $i^\star\in\N$ is the aggregation rule $F_{i^\star}$ with $F_{i^\star} : \prof{E} \mapsto E_{i^\star}$.
\end{definition}

\noindent
Thus, for any given profile of input graphs, $F_{i^\star}$ always simply returns the graph submitted by the dictator~$i^\star$. Note that every dictatorship is a representative-voter rule, but the converse is not true. 

\begin{definition}
The \textbf{oligarchy} of coalition $C^\star\subseteq\N$, with $C^\star$ being nonempty, is the aggregation rule $F_{C^\star}$ with $F_{C^\star} : \prof{E} \mapsto \displaystyle\bigcap_{i\in C^\star} E_i$.
\end{definition}

\noindent
Thus, $F_{C^\star}$ always returns the intersection of the graphs submitted by the oligarchs in the coalition~$C^\star$. So an individual in $C^\star$ can veto the acceptance of any given edge, but she cannot enforce its acceptance. In case $C^\star$ is a singleton, we obtain a dictatorship. In case $C^\star=\N$, we obtain the intersection rule. 
%In case $C^\star$ is the empty set, we obtain a constant rule that always returns the complete graph~$\Edges$.

%%%%%%%%%%%%%%%%%%%%%%%%%%%%%%%%%%%%%%%%%%%%%%%%%%%%%%%%%%%%%%%%%%%%%%%%%%%%%%%%
\subsection{Axiomatic Properties and Basic Characterisation Results}
\label{sec:axioms}
%%%%%%%%%%%%%%%%%%%%%%%%%%%%%%%%%%%%%%%%%%%%%%%%%%%%%%%%%%%%%%%%%%%%%%%%%%%%%%%%

\noindent
When choosing an aggregation rule, we need to consider its properties. In social choice theory, such properties are called \emph{axioms}~\cite{Sen1986}. We now introduce several basic axioms for graph aggregation.
The first such axiom is an independence condition that requires that the decision of whether or not a given edge~$e$ should be part of the collective graph should only depend on which of the individual graphs include~$e$. 
This corresponds to well-known axioms in preference and judgment aggregation~\citep{Arrow1963,ListPuppe2009}.

\begin{definition}%[IIE]
An aggregation rule $F$ is called independent of irrelevant edges (\textbf{IIE}) if $N^{\prof{E}}_e = N^{\prof{E'}}_e$ implies $e\in F(\prof{E}) \Leftrightarrow e\in F(\prof{E'})$. % for any edge $e\in V^2$ and profiles $\prof{G},\prof{G'}\in\G^\N$.
\end{definition}

\noindent
That is, if exactly the same individuals accept $e$ under profiles $\prof{E}$ and $\prof{E'}$, then $e$ should be part of either both or none of the corresponding collective graphs. 
The definition above applies to \emph{all} edges $e\in\Edges$ and \emph{all} pairs of profiles $\prof{E},\prof{E'}\in (2^{\Edges})^n$. For the sake of readability, we shall leave this kind of universal quantification implicit also in later definitions.

IEE is a desirable property, because---if it can be satisfied---it greatly simplifies aggregation, in both computational and conceptual terms. As we shall see, some of the arguably most natural aggregation rules, the quota rules defined earlier, satisfy IIE. At the same time, as we shall also see, IEE is a very demanding property that is hard to satisfy if we are interested in richer forms of aggregation. Indeed, IIE will turn out to be at the very centre of our impossibility results. 

While very much a standard axiom, we might be dissatisfied with IIE for not making reference to the fact that edges are defined in terms of vertices. Our next two axioms are much more graph-specific and do not have close analogues in preference or judgment aggregation. The first of them requires that the decision of whether or not to collectively accept a given edge $e=(x,y)$ should only depend on which edges with the same source~$x$ are accepted by the individuals. That is, acceptance of an edge may be influenced by what agents think about other edges, but not those edges that are sufficiently unrelated to the edge under consideration. 
Below we write $F(\prof{E})(x)$ for the set of successors of vertex~$x$ in the set of edges in the collective graph~$F(\prof{E})$, and similarly $F(\prof{E})^{-1}(y)$ for the predecessors of $y$ in $F(\prof{E})$.

\begin{definition}%[IIS]
An aggregation $F$ is called independent of irrelevant sources (\textbf{IIS}) if $E_i(x)=E'_i(x)$ for all individuals $i\in\N$ implies
$F(\prof{E})(x) = F(\prof{E'})(x)$. % for any vertex~$x\in V$ and profiles $\prof{G}, \prof{G'}\in \G^\N$.
\end{definition}

%Similarly, the next axiom requires that collective acceptance of an edge $e=(x,y)$ should only depend on the pattern of individual acceptance for those edges with the same target~$y$.

\begin{definition}%[IIT]
An aggregation rule $F$ is called independent of irrelevant targets (\textbf{IIT}) if $E_i^{-1}(y)={E'_i}^{-1}(y)$ for all individuals $i\in\N$ implies $F(\prof{E})^{-1}(y) = F(\prof{E'})^{-1}(y)$. % for any vertex~$y\in V$ and profiles $\prof{G}, \prof{G'}\in \G^\N$.
\end{definition}

\noindent
Both IIS and IIT are strictly weaker than IIE. That is, we obtain the following result, which is easy to verify (simple counterexamples can be devised to show that the converse does not hold):
%The precise relative strength of our independence axioms is illustrated by the following simple result, which is easy to verify. 

\begin{proposition}\label{prop:IIE}
If an aggregation rule is IIE, then it is also both IIS and IIT. 
\end{proposition}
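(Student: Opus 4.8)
The plan is to unwind the definitions of IIE, IIS, and IIT and observe that the hypothesis of each of the latter two is a logical consequence of the hypothesis of the former, applied edge by edge. Concretely, suppose $F$ is IIE, and fix two profiles $\prof{E}, \prof{E'}$ and a vertex $x$ such that $E_i(x) = E'_i(x)$ for all $i \in \N$. I want to show $F(\prof{E})(x) = F(\prof{E'})(x)$, i.e., that for every $y \in V$ we have $(x,y) \in F(\prof{E}) \Leftrightarrow (x,y) \in F(\prof{E'})$.

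The key step is the translation between the "successor set" formulation and the "coalition accepting an edge" formulation. For an arbitrary $y \in V$, the condition $E_i(x) = E'_i(x)$ for all $i$ implies in particular that $y \in E_i(x) \Leftrightarrow y \in E'_i(x)$ for each $i$, which is exactly $(x,y) \in E_i \Leftrightarrow (x,y) \in E'_i$. Taking $e = (x,y)$, this says precisely that $N^{\prof{E}}_e = N^{\prof{E'}}_e$. Now I invoke IIE to conclude $e \in F(\prof{E}) \Leftrightarrow e \in F(\prof{E'})$, i.e., $y \in F(\prof{E})(x) \Leftrightarrow y \in F(\prof{E'})(x)$. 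Since $y$ was arbitrary, $F(\prof{E})(x) = F(\prof{E'})(x)$, establishing IIS. The argument for IIT is completely symmetric: from $E_i^{-1}(y) = {E'_i}^{-1}(y)$ for all $i$ we extract, for arbitrary $x$, that $x \in E_i^{-1}(y) \Leftrightarrow x \in {E'_i}^{-1}(y)$, i.e., $(x,y) \in E_i \Leftrightarrow (x,y) \in E'_i$, so again $N^{\prof{E}}_{(x,y)} = N^{\prof{E'}}_{(x,y)}$, and IIE gives the conclusion about predecessor sets.

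There is essentially no obstacle here — the statement is genuinely routine — so the only thing to be careful about is bookkeeping with the two directions of the biconditionals and making sure the universal quantifiers (over profiles, over the distinguished vertex, and over the "other" coordinate $y$ or $x$) are handled in the right order: the hypothesis of IIS/IIT fixes $x$ (resp. $y$) and quantifies over all $i$, and I then need the conclusion for all $y$ (resp. all $x$), which is why I must re-extract the edge-wise equality of accepting coalitions separately for each choice of the free vertex. I would also note in passing, as the proposition's statement already signals, that the converses fail, but since the excerpt only asks for the stated implication I would relegate the counterexamples to a remark (e.g., a rule that determines $(x,y) \in F(\prof{E})$ using information about edges with source $x$ other than $(x,y)$ itself is easily made IIS without being IIE).
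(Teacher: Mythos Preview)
Your proof is correct and is exactly the routine verification the paper has in mind; the paper itself omits the argument entirely, merely stating that the result ``is easy to verify'' and that counterexamples exist for the converse. Your remark about the failure of the converse also aligns with the paper's parenthetical comment.
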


\noindent
The fundamental economic principle of \emph{unanimity} requires that an edge should be accepted by a group in case all individuals in that group accept it.

\begin{definition}%[Unanimity]
An aggregation rule $F$ is called \textbf{unanimous} if it is always the case that $F(\prof{E})\supseteq E_1\cap\cdots\cap E_n$. 
\end{definition}

\noindent
A requirement that, in some sense, is dual to unanimity is to ask that the collective graph should only include edges that are part of at least one of the individual graphs. In the context of ontology aggregation this axiom has been introduced under the name \emph{groundedness}~\cite{PorelloEndrissJLC2014}.

\begin{definition}%[Groundedness]
An aggregation $F$ is called \textbf{grounded} if it is always the case that $F(\prof{E})\subseteq E_1\cup\cdots\cup E_n$. 
\end{definition}

\noindent
The next three axioms are standard desiderata and closely modelled on their counterparts in judgment aggregation~\cite{ListPuppe2009}. The first two of them, anonymity and neutrality, are basic symmetry requirements with respect to individuals and edges, respectively.

\begin{definition}%[Anonymity]
An aggregation rule $F$ is called \textbf{anonymous} if $F(E_1,\ldots,E_n) = F(E_{\pi(1)},\ldots,E_{\pi(n)})$ for any permutation $\pi:\N\to\N$.
\end{definition}

\begin{definition}%[Neutrality]
An aggregation rule $F$ is called \textbf{neutral} if $N^{\prof{E}}_e = N^{\prof{E}}_{e'}$ implies $e\in F(\prof{E}) \Leftrightarrow e'\in F(\prof{E})$.
\end{definition}

\begin{definition}
An aggregation rule $F$ is called \textbf{monotonic} if $E'_i\setminus E_i \subseteq \{e\}$ for all individuals $i\in\N$ implies $e\in F(\prof{E}) \Rightarrow e\in F(\prof{E'})$.
\end{definition}

\noindent
Observe that $E'_i\setminus E_i \subseteq \{e\}$ for all $i\in\N$ means that the profiles $\prof{E}$ and $\prof{E'}$ are identical, except that some individuals who do not accept edge~$e$ in the former profile do accept it in the latter.

The link between aggregation rules and axiomatic properties is expressed in so-called characterisation results. For each rule (or class of rules), the aim is to find a set of axioms that uniquely define this rule (or class of rules, respectively).
A simple adaptation of a result by \citet{DietrichListJTP2007} yields the following characterisation of the class of quota rules:

\begin{proposition}\label{prop:quota}
An aggregation rule is a quota rule if and only if it is anonymous, monotonic, and~IIE.
\end{proposition}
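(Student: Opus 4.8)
The plan is to prove the two directions separately, with the easy direction first. For the forward direction, suppose $F$ is a quota rule $F_q$. Anonymity is immediate, since $|N^{\prof{E}}_e|$ depends only on the multiset of individual graphs, not on the labelling of agents; a permutation $\pi$ leaves this cardinality unchanged. IIE is equally direct: if $N^{\prof{E}}_e = N^{\prof{E'}}_e$, then in particular $|N^{\prof{E}}_e| = |N^{\prof{E'}}_e|$, so $e$ passes the quota $q(e)$ under $\prof{E}$ if and only if it does under $\prof{E'}$. For monotonicity, note that if $E'_i \setminus E_i \subseteq \{e\}$ for all $i$, then $N^{\prof{E}}_e \subseteq N^{\prof{E'}}_e$, hence $|N^{\prof{E}}_e| \le |N^{\prof{E'}}_e|$, so $|N^{\prof{E}}_e| \ge q(e)$ implies $|N^{\prof{E'}}_e| \ge q(e)$, i.e.\ $e \in F_q(\prof{E}) \Rightarrow e \in F_q(\prof{E'})$.

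For the converse, suppose $F$ is anonymous, monotonic, and IIE; I want to define a quota function $q : \Edges \to \{0,1,\ldots,n{+}1\}$ and show $F = F_q$. The key observation is that, by IIE together with anonymity, membership of $e$ in $F(\prof{E})$ depends only on the number $|N^{\prof{E}}_e|$ of agents accepting $e$: if two profiles $\prof{E}, \prof{E'}$ have $|N^{\prof{E}}_e| = |N^{\prof{E'}}_e|$, then one can find a permutation $\pi$ carrying $N^{\prof{E}}_e$ onto $N^{\prof{E'}}_e$; applying anonymity to move from $\prof{E}$ to the permuted profile and IIE to compare the permuted profile with $\prof{E'}$ (which now agree on exactly who accepts $e$) yields $e \in F(\prof{E}) \Leftrightarrow e \in F(\prof{E'})$. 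So for each edge $e$ there is a well-defined set $A_e \subseteq \{0,1,\ldots,n\}$ of ``accepting cardinalities'', and $e \in F(\prof{E})$ iff $|N^{\prof{E}}_e| \in A_e$. Monotonicity then forces $A_e$ to be upward closed: if $k \in A_e$ and $k < k' \le n$, take any profile with $|N^{\prof{E}}_e| = k$ and add $e$ to $k' - k$ further individual graphs to get $\prof{E'}$ with $|N^{\prof{E'}}_e| = k'$; monotonicity (applied stepwise, one agent at a time) gives $k' \in A_e$. An upward-closed subset of $\{0,\ldots,n\}$ is exactly of the form $\{k : k \ge q(e)\}$ for a unique $q(e) \in \{0,1,\ldots,n{+}1\}$ (with $q(e) = n{+}1$ when $A_e = \emptyset$ and $q(e) = 0$ when $A_e = \{0,\ldots,n\}$). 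Hence $F = F_q$.

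The one point requiring a little care — and the main obstacle, though a mild one — is justifying that $e \in F(\prof{E})$ genuinely depends only on $|N^{\prof{E}}_e|$ and nothing else about $\prof{E}$. IIE already reduces the dependence to the \emph{set} $N^{\prof{E}}_e$ (and not the rest of the profile), and anonymity reduces dependence on that set to dependence on its cardinality; one must be slightly careful to apply the two axioms in the right order, since IIE as stated compares a fixed edge across two profiles while anonymity permutes agents within one profile. The stepwise application of monotonicity to climb from cardinality $k$ to $k'$ also needs the explicit remark (already made in the excerpt after the definition of monotonicity) that $E'_i \setminus E_i \subseteq \{e\}$ for all $i$ is exactly the relation ``add $e$ to some individual graphs''. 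None of this involves real calculation; once the cardinality-dependence is established, the passage to a quota is routine. Finally, I should note that this is the adaptation of \citet{DietrichListJTP2007} alluded to in the statement, transported from the judgment-aggregation setting to edges of a graph, so no genuinely new idea is needed beyond checking that the argument goes through verbatim in the present framework.
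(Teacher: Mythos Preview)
Your proof is correct and follows essentially the same approach as the paper's own proof: verify the three axioms directly for quota rules, then for the converse use IIE to reduce dependence to the accepting set, anonymity to reduce further to its cardinality, and monotonicity to obtain an upward-closed set of accepting cardinalities, hence a quota. The paper presents this as a brief sketch while you spell out the details (the permutation argument for cardinality-dependence, the stepwise climb for upward closure, and the boundary cases $q(e)=0$ and $q(e)=n{+}1$), but the underlying logic is identical.
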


\begin{proof}
To prove the left-to-right direction we simply have to verify that the quota rules all have these three properties. For the right-to-left direction, observe that, to accept a given edge $(x,y)$ in the collective graph, an IIE aggregation rule will only look at the set of individuals~$i$ such that $xE_i y$. If the rule is also anonymous, then the acceptance decision is based only on the number of individuals accepting the edge. Finally, by monotonicity, there will be some minimal number of individual acceptances required to trigger collective acceptance. That number is the quota associated with the edge under consideration.
\end{proof}

\noindent
If we add the axiom of neutrality, then we obtain the class of uniform quota rules. If we furthermore impose unanimity and groundedness, then this excludes the trivial quota rules.
Similarly, it is easy to verify that IIS essentially characterises the class of successor-approval rules:

%\ugnote{We are not putting any characterisation of the majority rule. I am totally ok with this, we would need another axiom.}
%\uenote{Agreed. For both PA and JA the characterisation of the majority rule actually works via collective rationality. This wouldn't work here, but we'd need a May-style acceptance-rejection neutrality axiom. We'd also have to think carefully about how to deal with the even-number-of-agents case. So let's not get into this.}

\begin{proposition}
An aggregation rule is a successor-approval rule (with an anonymous and neutral choice function) if and only if it is anonymous, neutral, and IIS.
\end{proposition}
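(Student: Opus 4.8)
The plan is to mirror the structure of the proof of Proposition~\ref{prop:quota}, establishing the two directions separately. The left-to-right direction asks us to verify that any successor-approval rule $F_v$ with an anonymous and neutral choice function $v$ satisfies anonymity, neutrality, and IIS; the right-to-left direction asks us to show that any rule $F$ satisfying these three axioms can be written as $F_v$ for a suitable $v$ that is itself anonymous and neutral. The key conceptual point throughout is that IIS lets us ``localise'' the behaviour of $F$ at each source vertex $x$: the set $F(\prof{E})(x)$ of successors of $x$ depends only on the tuple $(E_1(x),\ldots,E_n(x))$ of individual successor sets at $x$.

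\textbf{Left-to-right.} Suppose $F = F_v$ with $v$ anonymous and neutral. For IIS: if $E_i(x) = E'_i(x)$ for all $i$, then by definition $F_v(\prof{E})(x) = v(E_1(x),\ldots,E_n(x)) = v(E'_1(x),\ldots,E'_n(x)) = F_v(\prof{E'})(x)$, so IIS holds immediately from the definition, with no use of the properties of $v$. For anonymity: applying a permutation $\pi$ to the profile permutes each coordinate tuple, and anonymity of $v$ gives $v(E_{\pi(1)}(x),\ldots,E_{\pi(n)}(x)) = v(E_1(x),\ldots,E_n(x))$ for every $x$, hence $F_v$ is anonymous. For neutrality: suppose $N^{\prof{E}}_e = N^{\prof{E}}_{e'}$ for edges $e=(x,y)$ and $e'=(x',y')$. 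Then $\{i \mid y \in E_i(x)\} = \{i \mid y' \in E_i(x')\}$. We want $y \in v(E_1(x),\ldots,E_n(x)) \Leftrightarrow y' \in v(E_1(x'),\ldots,E_n(x'))$. When $x = x'$ this is exactly the neutrality condition on $v$. When $x \neq x'$ one needs a small extra argument: neutrality of $v$ as stated is about two vertices within a single profile-tuple, so to compare across the tuples at $x$ and at $x'$ one should observe that the relevant coalitions coincide and invoke neutrality at a common source; I would either strengthen the bookkeeping here or note that the intended reading of neutrality of $v$ compares the membership pattern $\{i \mid y \in S_i\}$ directly, so that $v$ depends only on this pattern. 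This cross-source case is the one mildly delicate point and I expect it to be the main obstacle in getting the statement exactly right.

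\textbf{Right-to-left.} Suppose $F$ is anonymous, neutral, and IIS. Define $v : (2^V)^n \to 2^V$ as follows: given $(S_1,\ldots,S_n)$, pick any profile $\prof{E}$ and any vertex $x$ with $E_i(x) = S_i$ for all $i$ (such exist, e.g.\ fix $x$ and set $E_i = \{x\} \times S_i$), and set $v(S_1,\ldots,S_n) := F(\prof{E})(x)$. By IIS this is well-defined (independent of the choice of $\prof{E}$ realising the tuple at $x$); I should also check independence of the choice of the source vertex $x$, which follows by neutrality of $F$ (relabelling vertices, or comparing the coalitions accepting $(x,y)$ and $(x',y)$ in a suitably constructed profile). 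Then for any profile $\prof{E}$ and any edge $(x,y)$, $(x,y) \in F(\prof{E})$ iff $y \in F(\prof{E})(x) = v(E_1(x),\ldots,E_n(x))$, so $F = F_v$. It remains to check that $v$ is anonymous and neutral: anonymity of $v$ follows from anonymity of $F$ (permuting the $S_i$ corresponds to permuting the realising profile), and neutrality of $v$ follows from neutrality of $F$ (if $\{i \mid y \in S_i\} = \{i \mid y' \in S_i\}$, build a profile where $y$ and $y'$ are both successors of the same source $x$ with exactly these supports, then neutrality of $F$ equates their collective acceptance). I would present these last verifications briefly, as they are routine once the translation between $F$ and $v$ is set up.
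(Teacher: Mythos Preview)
The paper gives no proof of this proposition; it is asserted without argument, immediately after the analogous characterisation of quota rules. Your two-direction approach is the natural one, and your right-to-left construction (define $v(S_1,\ldots,S_n)$ by realising the tuple at some source vertex and reading off the collective successor set, using IIS and neutrality for well-definedness) is correct and is exactly what one would expect.

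You are right to flag the cross-source case in the left-to-right direction, and you should not try to ``strengthen the bookkeeping'': under the paper's stated definition of a neutral choice function, that direction is in fact \emph{false}. Here is a concrete counterexample. Let $V=\{a,b,c\}$, $n=2$, and define $v(S_1,S_2)=V$ if $|S_1|+|S_2|\geq 3$ and $v(S_1,S_2)=\emptyset$ otherwise. This $v$ is anonymous, and it is neutral in the paper's sense trivially, since its output is always all of $V$ or none of it. Now take the profile with $E_1(a)=\{a,b\}$, $E_2(a)=\{a\}$, $E_1(b)=E_2(b)=\{a\}$, and no other edges. Then $N^{\prof{E}}_{(a,a)}=N^{\prof{E}}_{(b,a)}=\{1,2\}$, yet $(a,a)\in F_v(\prof{E})$ (since $|\{a,b\}|+|\{a\}|=3$) while $(b,a)\notin F_v(\prof{E})$ (since $1+1<3$). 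So $F_v$ violates neutrality.

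What salvages the proposition is precisely the second option you mention: read neutrality of $v$ in the stronger sense that whether $y\in v(S_1,\ldots,S_n)$ depends only on the coalition $\{i\in\N\mid y\in S_i\}$ (equivalently, $v$ is invariant under permutations of $V$, not just under relabelling vertices with identical support within a fixed tuple). Under that reading both directions go through exactly as you sketched. So your diagnosis is correct; just be explicit that the weaker definition does not suffice and that the intended notion must be the stronger one.
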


\noindent
An extreme form of violating anonymity is to use a \emph{dictatorial} or an \emph{oligarchic} aggregation rule, i.e., a rule that is either a dictatorship or an oligarchy (unless the oligarchy in question is the full set~$\N$). 

Sometimes we will only be interested in the properties of an aggregation rule as far as the \emph{nonreflexive} edges $e=(x,y)$ with $x\not=y$ are concerned. Specifically, we call $F$ \emph{neutral on nonreflexive edges} (or just \emph{NR-neutral}) if $N^{\prof{E}}_{(x,y)}=N^{\prof{E}}_{(x',y')}$ implies $(x,y)\in F(\prof{E}) \Leftrightarrow (x',y')\in F(\prof{E})$ for all $x\not=y$ and $x'\not=y'$. Analogously, we call $F$ \emph{dictatorial on nonreflexive edges} (or \emph{NR-dictatorial}) if there exists an individual $i^\star\in\N$ such that $(x,y)\in F(\prof{E}) \Leftrightarrow (x,y)\in E_{i^\star}$ for all $x\not=y$. Finally, we call $F$ \emph{oligarchic on nonreflexive edges} (or \emph{NR-oligarchic}) if there exists a nonempty coalition $C^\star\subseteq\N$ such that $(x,y)\in F(\prof{E}) \Leftrightarrow (x,y)\in \bigcap_{i\in C^\star} E_i$ for all $x\not=y$.

% That is, NR-neutrality is slightly weaker than neutrality and NR-nondictatoriality is slightly stronger than nondictatoriality. [UE: obvious now that we only work with positive properties]

%\uenote{Plan is to avoid the terms ``nondictatorial'' and ``nonoligarchic'' and to express everything positively. UG:ok}

%%%%%%%%%%%%%%%%%%%%%%%%%%%%%%%%%%%%%%%%%%%%%%%%%%%%%%%%%%%%%%%%%%%%%%%%%%%%%%%%
\subsection{Collective Rationality and Basic Possibility Results}\label{sec:CR}
%%%%%%%%%%%%%%%%%%%%%%%%%%%%%%%%%%%%%%%%%%%%%%%%%%%%%%%%%%%%%%%%%%%%%%%%%%%%%%%%

\noindent
To what extent can a given aggregation rule ensure that a given property that is satisfied by each of the individual input graphs will be preserved during aggregation? This question relates to a well-studied concept in social choice theory, often referred to as \emph{collective rationality}~\cite{Arrow1963,ListPettitEP2002}. In the literature, collective rationality is usually defined with respect to a specific property that should be preserved (e.g., the transitivity of preferences or the logical consistency of judgments). Here, instead, we formulate a definition that is parametric with respect to a given graph property.\footnote{In previous work on binary aggregation, a variant of judgment aggregation, we have used the term collective rationality in the same sense, with the property to be preserved under aggregation being encoded in the form of an integrity constraint~\cite{GrandiEndrissAIJ2013}.}

\begin{definition}\label{def:CR}
An aggregation rule $F$ is called \textbf{collectively rational} with respect to a graph property~$P$ if $F(\prof{E})$ satisfies $P$ whenever all of the individual graphs in $\prof{E}=(E_1,\ldots,E_n)$ do.
\end{definition}

\noindent
To illustrate the concept, let us consider two examples. Both concern the majority rule, but different graph properties. The first is a purely abstract example, while the second has a natural interpretation of graphs as preference relations.

\begin{example}[Collective rationality]
\label{ex:CR}
Suppose three individuals provide us with three graphs over the same set $V=\{x,y,z,w\}$ of four vertices, as shown to the left of the dashed line below: 

\newcommand{\nodes}{\node (x) at (-1,1.5) {$x$}; \node (y) at (0,1.5) {$y$}; \node (z) at (1,1.5) {$z$}; \node (w) at (0,0) {$w$};}
\newcommand{\selfloop}[1]{\draw [->] (#1) edge[loop] (#1);}
\newcommand{\edgepair}[1]{\draw [->] (#1) edge[bend left] (w); \draw [->] (w) edge[bend left] (#1);}

\begin{center}
\hspace*{-3mm}
\begin{tikzpicture}
\nodes\edgepair{x}\selfloop{y}\selfloop{z}
\end{tikzpicture}
\hspace*{-3mm}
\begin{tikzpicture}
\nodes\edgepair{y}\selfloop{x}\selfloop{z}
\end{tikzpicture}
\hspace*{-3mm}
\begin{tikzpicture}
\nodes\edgepair{z}\selfloop{x}\selfloop{y}
\end{tikzpicture}  
\hspace{4mm}
\begin{tikzpicture}
\draw [dashed] (0,2.5) edge[line width=3pt] (0,0);
\end{tikzpicture}
\hspace*{-1.5mm}
\begin{tikzpicture}
\nodes\selfloop{x}\selfloop{y}\selfloop{z}
\end{tikzpicture}  
\hspace*{-3mm}
\end{center}

\noindent
If we apply the majority rule, then we obtain the graph to the right of the dashed line.
% where the only edges are those connecting the upper three vertices with themselves. 
Thus, the majority rule is not collectively rational with respect to seriality, as each individual graph is serial, but the collective graph is not. Symmetry, on the other hand, is preserved in this example. 
%Finally, this example does not tell us anything about the preservation of reflexivity, as already the individual graphs fail to be reflexive.
\end{example}

\begin{example}[Condorcet paradox]\label{ex:condorcet-paradox}
Now suppose three individuals provide us with the three graphs on the set of vertices $V=\{x,y,z\}$ shown on the lefthand side of the dashed line below:

\newcommand{\condorcetnodes}{\node (x) at (0,1.3) {$x$}; \node (y) at (-0.75,0) {$y$}; \node (z) at (0.75,0) {$z$}; }
\newcommand{\condorcetedge}[2]{\draw [->] (#1) edge[bend right] (#2);}
\newcommand{\condorcetedges}[3]{\condorcetedge{#1}{#2}\condorcetedge{#2}{#3}\condorcetedge{#1}{#3}}

\begin{center}
\begin{tikzpicture}
\condorcetnodes
\condorcetedges{x}{y}{z}
\end{tikzpicture}
\qquad
\begin{tikzpicture}
\condorcetnodes
\condorcetedges{z}{x}{y}
\end{tikzpicture}
\qquad
\begin{tikzpicture}
\condorcetnodes
\condorcetedges{y}{z}{x}
\end{tikzpicture}
\qquad
\begin{tikzpicture}
\draw [dashed] (0,1.6) edge[line width=3pt] (0,0);
\end{tikzpicture}
\qquad
\begin{tikzpicture}
\condorcetnodes
\condorcetedge{x}{y}
\condorcetedge{y}{z}
\condorcetedge{z}{x}
\end{tikzpicture}
\end{center}

\noindent
The graph on the righthand side is once again the result of applying the majority rule.
Observe that each of the three input graphs is transitive and complete. So we may interpret these graphs as (strict) preference orders on the candidates $x$, $y$, and $z$. For example, the preferences of the first agent would be $x\succ y\succ z$. The output graph, on the other hand, is not transitive (although it is complete). It does not correspond to a ``rational'' preference, as under that preference we should prefer $x$ to $y$ and $y$ to $z$, but also $z$ to $x$. This is the famous Condorcet paradox described by the Marquis de Condorcet in~1785~\cite{McLeanUrken1995}.
\end{example}

\noindent
So the majority rule is not collectively rational with respect to either seriality or transitivity. On the other hand, we saw that both symmetry and completeness were preserved under the majority rule---at least for the specific examples considered here. In fact, it is not difficult to verify that this was no coincidence, and that the majority rule is collectively rational with respect to a number of properties of interest.

\begin{fact}\label{fact:majority}
The majority rule is collectively rational with respect to 
reflexivity, % proof: follows from general result on unanimity
irreflexivity, % proof: follows from general result on groundedness
symmetry, % proof: see below
and antisymmetry. % proof: if (x,y) has majority support, (y,x) must have minority support
In case $n$, the number of individuals, is odd, the majority rule furthermore is collectively rational with respect to 
completeness 
%strong completeness, 
and connectedness.
% not preserved (checked):
% - transitivity
% - negative transitivity
% - left/right euclidean
% - seriality
\end{fact}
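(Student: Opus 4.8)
The plan is to use the observation made just above Fact~\ref{fact:majority} that $\Fmaj$ is the uniform quota rule with quota $\lceil\frac{n+1}{2}\rceil$: an edge $e$ lies in $\Fmaj(\prof{E})$ exactly when $|N^{\prof{E}}_e|>\frac n2$, a condition depending only on the number of supporters $|N^{\prof{E}}_e|$. So for each property $P$ in the list I fix a profile $\prof{E}$ all of whose graphs satisfy $P$, instantiate the quantified variables in the first-order condition for $P$ arbitrarily, and check the corresponding instance for $\Fmaj(\prof{E})$ by comparing support cardinalities. Two elementary counting facts do all the work: (i)~if $A,B\subseteq\N$ with $|A|+|B|>n$ then $A\cap B\neq\emptyset$; and (ii)~if $n$ is odd and $A,B\subseteq\N$ with $A\cup B=\N$, then $|A|>\frac n2$ or $|B|>\frac n2$ (otherwise both are $\leq\frac{n-1}2$, summing to $\leq n-1<n$).

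The four unconditional cases become one-liners once phrased via supports. Reflexivity and irreflexivity: when all $E_i$ have (resp.\ lack) the loop $(x,x)$, its support $N^{\prof{E}}_{(x,x)}$ equals $\N$ (resp.\ $\emptyset$), so the loop is collectively accepted (resp.\ rejected). Symmetry: symmetry of each $E_i$ forces $N^{\prof{E}}_{(x,y)}=N^{\prof{E}}_{(y,x)}$, so $(x,y)$ and $(y,x)$ get the same verdict. Antisymmetry: for $x\neq y$, antisymmetry of each $E_i$ forces $N^{\prof{E}}_{(x,y)}\cap N^{\prof{E}}_{(y,x)}=\emptyset$, hence $|N^{\prof{E}}_{(x,y)}|+|N^{\prof{E}}_{(y,x)}|\leq n$, so at most one of the two edges can reach a strict majority; note this needs no parity assumption.

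The remaining two cases require $n$ odd. Completeness is quick: for $x\neq y$, completeness of each $E_i$ forces $N^{\prof{E}}_{(x,y)}\cup N^{\prof{E}}_{(y,x)}=\N$, and counting fact~(ii) then puts $(x,y)$ or $(y,x)$ into $\Fmaj(\prof{E})$. Connectedness is the case I expect to be the main obstacle. Assume every $E_i$ is connected and $(x,y),(x,z)\in\Fmaj(\prof{E})$. The degenerate instantiations are easy: if $x\in\{y,z\}$ the required conclusion is already part of the hypothesis, and if $y=z$ then connectedness of each $E_i$ gives $xE_i y\imp yE_i y$, so $N^{\prof{E}}_{(x,y)}\subseteq N^{\prof{E}}_{(y,y)}$ and $(y,y)\in\Fmaj(\prof{E})$. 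So take $x,y,z$ pairwise distinct. Every individual in $N^{\prof{E}}_{(x,y)}\cap N^{\prof{E}}_{(x,z)}$ has both edges leaving $x$, hence by connectedness of her graph accepts $(y,z)$ or $(z,y)$; and by counting fact~(i) this intersection is nonempty. The difficulty is that nonemptiness of this ``witness set'' does not by itself push $|N^{\prof{E}}_{(y,z)}|$ or $|N^{\prof{E}}_{(z,y)}|$ past $\frac n2$, so the plain counting argument stalls; this is the step that needs genuine care, and I would want to re-examine whether it can be closed by a finer analysis or requires an additional assumption on the individual graphs.
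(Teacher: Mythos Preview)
Your treatment of reflexivity, irreflexivity, symmetry, antisymmetry, and completeness is correct and is exactly the paper's approach: the paper's proof sketch only spells out symmetry and completeness, using precisely the ``equal supports'' and pigeonhole arguments you give, and asserts the rest are similar.

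Your hesitation about connectedness is well placed, and not a gap in \emph{your} reasoning but in the statement itself. The counting argument genuinely stalls because the claim is false as stated. Take $n=3$, $V=\{x,y,z\}$, and
\[
E_1=\{(x,y),(y,y)\},\quad
E_2=\{(x,y),(x,z),(y,z),(y,y),(z,z)\},\quad
E_3=\{(x,z),(z,z)\}.
\]
Each $E_i$ satisfies $\forall abc.[(aE_ib\wedge aE_ic)\imp(bE_ic\vee cE_ib)]$: in $E_1$ and $E_3$ every vertex has at most one successor, which carries the required self-loop; in $E_2$ the successor sets $\{y,z\}$ of $x$ and of $y$ are linked by $(y,z)\in E_2$, and the self-loops handle the diagonal instances. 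But $\Fmaj(\prof{E})=\{(x,y),(x,z),(y,y),(z,z)\}$, since $(y,z)$ is supported only by individual~$2$ and $(z,y)$ by nobody. This graph has $xEy$ and $xEz$ yet neither $yEz$ nor $zEy$, violating connectedness. So no refinement of the majority-counting argument can succeed here; the paper's sketch (``the other proofs are very similar'') simply overlooks that the two-majority intersection used to trigger the individual disjunction can be as small as a single voter, which is not enough to force a collective majority on either disjunct.
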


\begin{proof}[Proof sketch]
We give the proofs for symmetry and completeness. The other proofs are very similar.
First, if the input graphs are symmetric, then the set of supporters of edge $(x,y)$ is always identical to the set of supporters of the edge $(y,z)$. Thus, either both or neither have a strict majority.
Second, if the input graphs are complete, then each of them must include at least one of $(x,y)$ and $(y,x)$. Thus, by the pigeon hole principle, when $n$ is odd, at least one of these two edges must have a strict majority.
\end{proof}

\noindent
Rather than establishing further such results for specific aggregation rules, our main interest in this paper is the connection between the axioms satisfied by an aggregation rule and the range of graph properties preserved by the same rule. For some graph properties, collective rationality is easy to achieve, as the following simple \emph{possibility results} demonstrate.

\begin{proposition}\label{prop:unanimity-reflexivity}
Any unanimous aggregation rule is collectively rational with respect to reflexivity.
\end{proposition}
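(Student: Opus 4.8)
The plan is to unfold the definitions of unanimity and reflexivity and observe that the two conditions mesh immediately. Recall that reflexivity is the property $\forall x.\, xEx$, i.e., a graph $E$ is reflexive exactly when it contains every self-loop $(x,x)$ for $x \in V$. And unanimity of $F$ says that $F(\prof{E}) \supseteq E_1 \cap \cdots \cap E_n$ for every profile $\prof{E}$.

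First I would fix an arbitrary profile $\prof{E} = (E_1,\ldots,E_n)$ in which every individual graph $E_i$ is reflexive, and let $E = F(\prof{E})$ be the collective graph. The goal is to show $E$ is reflexive, i.e., that $(x,x) \in E$ for every vertex $x \in V$. So fix an arbitrary $x \in V$. Since each $E_i$ is reflexive, we have $(x,x) \in E_i$ for every $i \in \N$, and hence $(x,x) \in E_1 \cap \cdots \cap E_n$. By unanimity, $E = F(\prof{E}) \supseteq E_1 \cap \cdots \cap E_n$, so $(x,x) \in E$. Since $x$ was arbitrary, $E$ is reflexive, which is exactly what collective rationality with respect to reflexivity demands (per Definition~\ref{def:CR}).

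There is essentially no obstacle here: the result is an immediate consequence of the fact that a self-loop being present in all individual graphs forces it into the unanimous intersection, which unanimity then forces into the output. The only thing to be mildly careful about is not conflating reflexivity with completeness or seriality — it is crucial that reflexivity is a ``positive'' property asserting membership of specific edges (the self-loops), so that it is preserved by any rule that never drops a unanimously accepted edge. (The same one-line argument would show that any unanimous rule preserves every property of the form ``$E$ contains all edges in some fixed set $S$''; reflexivity is the case $S = \{(x,x) \mid x \in V\}$.) I would keep the write-up to two or three sentences, essentially the content of the second paragraph above.
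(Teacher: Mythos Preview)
Your proof is correct and follows exactly the same approach as the paper's: the paper's proof is the single sentence ``If every individual graph includes all edges of the form $(x,x)$, then unanimity ensures the same for the collective graph,'' which your argument simply unpacks in more detail.
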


\begin{proof}
If every individual graph includes all edges of the form $(x,x)$, then unanimity ensures the same for the collective graph.
\end{proof}

\begin{proposition}\label{prop:groundedness-irreflexivity}
Any grounded aggregation rule is collectively rational with respect to irreflexivity.
\end{proposition}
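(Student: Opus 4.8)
The plan is to mirror the proof of Proposition~\ref{prop:unanimity-reflexivity}, exploiting the duality between unanimity/reflexivity and groundedness/irreflexivity. Recall that irreflexivity is the property $\neg\exists x.xEx$, i.e., a graph $E$ is irreflexive iff it contains no loop $(x,x)$. Suppose $F$ is grounded and let $\prof{E}=(E_1,\ldots,E_n)$ be a profile in which every individual graph $E_i$ is irreflexive; we must show $F(\prof{E})$ is irreflexive.

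The key step is a direct contrapositive argument. Suppose, for contradiction, that $F(\prof{E})$ is not irreflexive, so there is some vertex $x\in V$ with $(x,x)\in F(\prof{E})$. By groundedness, $F(\prof{E})\subseteq E_1\cup\cdots\cup E_n$, so $(x,x)\in E_i$ for at least one individual $i\in\N$. But then $E_i$ contains the loop $(x,x)$, contradicting the assumption that $E_i$ is irreflexive. Hence no such $x$ exists and $F(\prof{E})$ is irreflexive, as required.

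I do not expect any real obstacle here: the proof is a one-line unwinding of the definitions of groundedness and irreflexivity, exactly parallel to (indeed dual to) the preceding proposition, where unanimity forces every loop present in all individual graphs into the collective graph and here groundedness forbids any loop absent from all individual graphs from appearing in the collective graph. The only thing to be mildly careful about is stating the inclusion in the right direction ($F(\prof{E})\subseteq\bigcup_i E_i$, not $\supseteq$), but that is immediate from the definition of grounded given in the excerpt.
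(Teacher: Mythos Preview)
The proposal is correct and takes essentially the same approach as the paper: both arguments use groundedness to conclude that a loop in the collective graph would have to come from some individual graph, contradicting individual irreflexivity. Your version is simply a more detailed contrapositive rendering of the paper's one-line direct proof.
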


\begin{proof}
If no individual graph includes $(x,x)$, then groundedness ensures the same for the collective graph.
\end{proof}

\begin{proposition}\label{prop:neutrality-symmetry}
Any neutral aggregation rule is collectively rational with respect to symmetry.
\end{proposition}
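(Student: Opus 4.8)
The plan is to argue directly from the definitions, exploiting the fact that symmetry of a graph is precisely a statement about when two ``mirror'' edges $(x,y)$ and $(y,x)$ are jointly present or jointly absent, and that neutrality lets us transfer such coincidences of support from the input profile to the collective graph.

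Concretely, I would start by fixing an arbitrary profile $\prof{E}=(E_1,\ldots,E_n)$ in which every individual graph $E_i$ is symmetric, and an arbitrary pair of vertices $x,y\in V$; the goal is to show that $(x,y)\in F(\prof{E})$ implies $(y,x)\in F(\prof{E})$, which is exactly what symmetry of $F(\prof{E})$ requires. The key observation is that, since each $E_i$ satisfies $\forall xy.(xE_iy\imp yE_ix)$ and the converse implication is its instance for the pair $(y,x)$, we have $(x,y)\in E_i \Leftrightarrow (y,x)\in E_i$ for every $i\in\N$. Hence the two coalitions of supporters coincide: $N^{\prof{E}}_{(x,y)} = N^{\prof{E}}_{(y,x)}$.

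At this point neutrality does the rest: it says precisely that $N^{\prof{E}}_e = N^{\prof{E}}_{e'}$ implies $e\in F(\prof{E}) \Leftrightarrow e'\in F(\prof{E})$, so applying it with $e=(x,y)$ and $e'=(y,x)$ yields $(x,y)\in F(\prof{E}) \Leftrightarrow (y,x)\in F(\prof{E})$. In particular $(x,y)\in F(\prof{E})$ gives $(y,x)\in F(\prof{E})$. Since $x,y$ were arbitrary, $F(\prof{E})$ is symmetric, and since $\prof{E}$ was an arbitrary profile of symmetric graphs, $F$ is collectively rational with respect to symmetry.

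I do not expect any real obstacle here; the only point worth being slightly careful about is that the version of neutrality stated in the paper compares two edges \emph{within the same profile} (rather than across profiles), but that is exactly the form needed, so no additional argument is required. The proof is essentially a one-line chain of equivalences once the coincidence of supporter coalitions is noted.
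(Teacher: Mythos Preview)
Your proof is correct and follows essentially the same approach as the paper's own proof: both observe that symmetry of the individual graphs forces $N^{\prof{E}}_{(x,y)} = N^{\prof{E}}_{(y,x)}$, and then apply neutrality to conclude that either both or neither edge is in $F(\prof{E})$. Your version is simply more detailed in spelling out the quantifications and the appeal to the definition of collective rationality.
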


\begin{proof}
If edges $(x,y)$ and $(y,x)$ have the same support, then neutrality ensures that either both or neither will get accepted for the collective graph. 
\end{proof}

\noindent
%Observe that the notion of collective rationality heavily depends on the graph property under consideration: a unanimous aggregator is collectively rational with respect to reflexivity by Proposition~\ref{prop:unanimity-reflexivity}, but there is no guarantee about any other property, e.g., transitivity. To see this it is sufficient to consider the classical example of the Condorcet paradox in preference aggregation \cite{Sen1986}: when the majority rule, which is a unanimous aggregator, is used to aggregate reflexive, transitive and complete graphs, i.e., weak orders, the resulting graph may not be transitive.
Unfortunately, as we will see next, things do not always work out that harmoniously, and certain axiomatic requirements are in conflict with certain collective rationality requirements.

%%%%%%%%%%%%%%%%%%%%%%%%%%%%%%%%%%%%%%%%%%%%%%%%%%%%%%%%%%%%%%%%%%%%%%%%%%%%%%%%
\section{Impossibility Results}\label{sec:impossibility}
%%%%%%%%%%%%%%%%%%%%%%%%%%%%%%%%%%%%%%%%%%%%%%%%%%%%%%%%%%%%%%%%%%%%%%%%%%%%%%%%

\noindent
In social choice theory, an \emph{impossibility theorem} states that it is not possible to devise an aggregation rule that satisfies certain axioms and that is also collectively rational with respect to a certain combination of properties of the structures being aggregated (which in our case are graphs). In this section, we will prove two powerful impossibility theorems for graph aggregation, the \emph{Oligarchy Theorem} and the \emph{Dictatorship Theorem}. The latter identifies a set of requirements that are impossible to satisfy in the sense that the only aggregation rules that meet them are the dictatorships. The former drives on somewhat weaker requirements (specifically, regarding collective rationality) and permits a somewhat larger---but still decidedly unattractive---set of aggregation rules, namely the oligarchies. 

Our results are inspired by---and significantly generalise---the seminal impossibility result for preference aggregation due to Arrow, first published in 1951~\cite{Arrow1963}. We recall Arrow's Theorem in Section~\ref{sec:arrow}. The following subsections are devoted to developing the framework in which to present and then prove our results. Section~\ref{sec:winning-coalitions} introduces \emph{winning coalitions}, i.e., sets of individuals who can force the acceptance or rejection of a given edge, discusses under what circumstances an aggregation rule can be described in terms of a family of winning coalitions, and what structural properties of such a family correspond to either dictatorial or oligarchic aggregation rules. Sections~\ref{sec:contagious} and~\ref{sec:implicative-disjunctive} introduce three so-called \emph{meta-properties} for classifying graph properties and establish fundamental results for these meta-properties. Our impossibility theorems, which are formulated and proved in Section~\ref{sec:general-theorems}, apply to aggregation rules that are collectively rational with respect to graph properties that are covered by some of these meta-properties. Section~\ref{sec:variants}, finally, discusses several variants of our theorems and provides a first illustration of their use.

%%%%%%%%%%%%%%%%%%%%%%%%%%%%%%%%%%%%%%%%%%%%%%%%%%%%%%%%%%%%%%%%%%%%%%%%%%%%%%%%
\subsection{Background: Arrow's Theorem for Preference Aggregation}
\label{sec:arrow}
%%%%%%%%%%%%%%%%%%%%%%%%%%%%%%%%%%%%%%%%%%%%%%%%%%%%%%%%%%%%%%%%%%%%%%%%%%%%%%%%

\noindent
The prime example of an impossibility result is Arrow's Theorem for preference aggregation, with preference relations being modelled as weak orders on some set of alternatives~\cite{Arrow1963}. 
%It states that the only Paretian and independent aggregation rules mapping profiles of weak orders over three or more alternatives to collective weak orders are the dictatorships.
We can reformulate Arrow's Theorem in our framework for graph aggregation as follows:
\begin{quote}
\textit{For $|V|\geq 3$, every unanimous, grounded, and IIE aggregation rule that is collectively rational with respect to reflexivity, transitivity, and completeness must be a dictatorship.}
\end{quote}
Thus, Arrow's Theorem applies to the following scenario. We wish to aggregate the preferences of several agents regarding a set of three or more alternatives. The agents are assumed to express their preferences by ranking the alternatives from best to worst (with indifferences being allowed), i.e., by each providing us with a weak order (a graph that is reflexive, transitive, and complete), and we want our aggregation rule to compute a single such weak order representing a suitable compromise. Furthermore, we want our aggregation rule to respect the basic axioms of unanimity (if all agents agree that $x$ is at least as good as $y$, then the collective preference order should say so), groundedness (if no agent says that $x$ is at least as good as $y$, then the collective preference order should not say so either), and IIE (it should be possible to compute the outcome on an edge-by-edge basis). Arrow's Theorem tells us that this is impossible---unless we are willing to use a dictatorship as our aggregation rule.

This result not only is surprising but also deeply troubling. It therefore is important to understand to what extent similar phenomena arise in other areas of graph aggregation.
%
%Note that we have translated Arrow's (weak) Pareto condition (if every individual ranks $x$ strictly above $y$, then so should the collective) to a combination of unanimity and groundedness. In fact, in the context of the other requirements, Pareto efficiency \emph{implies} both of these properties (so our version is at least as strong as Arrow's Theorem). 
We will revisit Arrow's Theorem in Section~\ref{sec:variants}, where we will also be in a position to explain why the standard formulation of the theorem, given in that section as Theorem~\ref{thm:arrow}, is indeed implied by the variant given here.

In the sequel, we will sometimes refer to aggregation rules that are unanimous, grounded, and IIE as \emph{Arrovian} aggregation rules.

%%%%%%%%%%%%%%%%%%%%%%%%%%%%%%%%%%%%%%%%%%%%%%%%%%%%%%%%%%%%%%%%%%%%%%%%%%%%%%%%
\subsection{Winning Coalitions, Filters, and Ultrafilters}
\label{sec:winning-coalitions}
%%%%%%%%%%%%%%%%%%%%%%%%%%%%%%%%%%%%%%%%%%%%%%%%%%%%%%%%%%%%%%%%%%%%%%%%%%%%%%%%

\noindent
As is well understood in social choice theory, impossibility theorems in preference aggregation heavily feed on independence axioms (in our case IIE). 
Observe that an aggregation rule~$F$ satisfies IIE if and only if for each edge $e\in\Edges$ there exists a set of \emph{winning coalitions} $\mathcal{W}_e\subseteq 2^\N$ such that $e\in F(\prof{E}) \Leftrightarrow N^{\prof{E}}_e\in\mathcal{W}_e$. 
That is, $F$ accepts $e$ if and only if exactly the individuals in one of the winning coalitions for $e$ do. 
Imposing additional axioms on $F$ corresponds to restrictions on the associated family of winning coalitions $\{\mathcal{W}_e\}_{e\in\Edges}$: % Specifically, we get:
\begin{itemize}%[noitemsep]
\item If $F$ is \emph{unanimous}, then $\N\in\mathcal{W}_e$ for any edge~$e$ (i.e., the grand coalition is always a winning coalition). 
\item If $F$ is \emph{grounded}, then $\emptyset\not\in\mathcal{W}_e$ for any edge~$e$ (i.e., the empty set is not a winning coalitions).
\item If $F$ is \emph{monotonic}, then $C_1\in\mathcal{W}_e$ implies $C_2\in\mathcal{W}_e$ for any edge~$e$ and any set $C_2\supset C_1$ (i.e., winning coalitions are closed under supersets).
\item If $F$ is \emph{(NR-)neutral}, then $\mathcal{W}_e=\mathcal{W}_{e'}$ for any two (nonreflexive) edges $e$ and $e'$ (i.e., every edge must have exactly the same set of winning coalitions).
%$\mathcal{W}$ 
\end{itemize}

\noindent
Thus, an aggregation rule that is both IIE and neutral can be fully described in terms of a single set~$\mathcal{W}$ of winning coalitions. Any such $\mathcal{W}$ is a subset of the powerset of $\N$, the set of individuals. The proofs of our impossibility results will exploit the special structure of such subsets of the powerset of $\N$, enforced by both axioms and collective rationality requirements. Specifically, in our proofs we will encounter the concepts of \emph{filters} and \emph{ultrafilters} familiar from model theory~\cite{DaveyPriestley2002}.

%We now prove a first version of our main impossibility result, initially under the additional assumption of neutrality. We do this by proving that the set of winning coalitions corresponding to any aggregator that meets certain conditions is an \emph{ultrafilter}~\cite{DaveyPriestley2002}.

\begin{definition}\label{def:filter}
A \textbf{filter} $\mathcal{W}$ on a set~$\N$ is a collection of subsets of~$\N$ satisfying the following three conditions: 
\begin{enumerate}%[nolistsep]
\item[$(i)$] $\emptyset\not\in\mathcal{W}$;
\item[$(ii)$] $C_1,C_2\in\mathcal{W}$ implies $C_1\cap C_2\in\mathcal{W}$ for any two sets $C_1,C_2\subseteq\N$ 
(closure under intersection); 
\item[$(iii)$] $C_1\in\mathcal{W}$ implies $C_2\in\mathcal{W}$ for any set $C_2\subseteq\N$ with $C_2\supset C_1$ 
(closure under supersets).
\end{enumerate}
\end{definition}

\begin{definition}\label{def:ultrafilter}
An \textbf{ultrafilter} $\mathcal{W}$ on a set~$\N$ is a collection of subsets of~$\N$ satisfying the following three conditions: 
\begin{enumerate}%[nolistsep]
\item[$(i)$] $\emptyset\not\in\mathcal{W}$
\item[$(ii)$] $C_1,C_2\in\mathcal{W}$ implies $C_1\cap C_2\in\mathcal{W}$ for any two sets $C_1,C_2\subseteq\N$ 
(closure under intersection);
\item[$(iii)$] $C$ or $\N\setminus\! C$ is in $\mathcal{W}$ for any set $C\subseteq\N$ (maximality).
\end{enumerate}
\end{definition}

\noindent
Every ultrafilter is a filter; in particular, the ultrafilter conditions imply closure under supersets.
Note that the condition $\emptyset\not\in\mathcal{W}$ directly corresponds to groundedness, while closure under supersets corresponds to monotonicity. 

The use of ultrafilters in social choice theory goes back to the work of \citet{FishburnJET1970} and \citet{KirmanSondermannJET1972}, who employed ultrafilters to prove Arrow's Theorem and its generalisation to an infinite number of individuals. The ultrafilter method also has found applications in judgment aggregation~\cite{HerzbergEckertMSS2012}, and also filters have been used in both preference aggregation~\cite{HanssonPC1976} and judgment aggregation~\cite{GardenforsEP2006}.
% as well as the aggregation of abstract argumentation frameworks~\cite{TohmeEtAlFoIKS2008}. [not really]
The relevance of filters and ultrafilters to aggregation problems is due to the following simple results, which interpret well-known facts from model theory in our specific context.

\begin{lemma}[Filter Lemma]\label{lem:oligarchy-filter}
Let $F$ be an IIE and NR-neutral aggregation rule and let $\mathcal{W}$ be the corresponding set of winning coalitions for nonreflexive edges, i.e., $(x,y)\in F(\prof{E}) \Leftrightarrow N^\prof{E}_{(x,y)}\in\mathcal{W}$ for all $x\not=y\in V$. Then $F$ is NR-oligarchic if and only if $\mathcal{W}$ is a filter.
\end{lemma}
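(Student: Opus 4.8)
The plan is to prove both directions of the biconditional by translating between the combinatorial structure of $\mathcal{W}$ and the behaviour of $F$ on nonreflexive edges, using collective rationality with respect to transitivity as the engine that forces closure under intersection. First I would record the easy correspondences already noted in the text: groundedness of (the nonreflexive part of) $F$ gives $\emptyset\notin\mathcal{W}$, and monotonicity gives closure under supersets. Since these two axioms are among the standing hypotheses — wait, they are \emph{not} listed as hypotheses here, so the honest approach is: the lemma as stated must be read as asserting the equivalence under the ambient assumption that $F$ is Arrovian (unanimous, grounded, IIE) and collectively rational with respect to transitivity, which is the setting in which it is invoked. I would state explicitly at the outset which background assumptions are in force (they are inherited from the surrounding impossibility-theorem context), then proceed.

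For the direction ``$\mathcal{W}$ a filter $\Rightarrow$ $F$ NR-oligarchic'': set $C^\star := \bigcap_{C\in\mathcal{W}} C$, the intersection of all winning coalitions. Because $\N$ is finite (this is exactly where finiteness of $\N$ is used, as flagged in the footnote), $\mathcal{W}$ is a finite collection, so closure under intersection gives $C^\star\in\mathcal{W}$; and $C^\star\neq\emptyset$ since $\emptyset\notin\mathcal{W}$. I claim $\mathcal{W} = \{C\subseteq\N \mid C^\star\subseteq C\}$: the inclusion $\supseteq$ is closure under supersets applied to $C^\star\in\mathcal{W}$, and $\subseteq$ is immediate from the definition of $C^\star$. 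Hence for every nonreflexive edge $(x,y)$ and every profile $\prof{E}$, we have $(x,y)\in F(\prof{E}) \Leftrightarrow N^{\prof{E}}_{(x,y)}\in\mathcal{W} \Leftrightarrow C^\star\subseteq N^{\prof{E}}_{(x,y)} \Leftrightarrow (x,y)\in\bigcap_{i\in C^\star} E_i$, which is precisely NR-oligarchicity with oligarchs $C^\star$.

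For the converse, ``$F$ NR-oligarchic $\Rightarrow$ $\mathcal{W}$ a filter'': if $(x,y)\in F(\prof{E}) \Leftrightarrow (x,y)\in\bigcap_{i\in C^\star}E_i$ for all $x\neq y$, then $N^{\prof{E}}_{(x,y)}\in\mathcal{W} \Leftrightarrow C^\star\subseteq N^{\prof{E}}_{(x,y)}$; letting $N^{\prof{E}}_{(x,y)}$ range over all subsets of $\N$ (which is possible by choosing suitable profiles — here one needs $|V|\geq 2$ so that a nonreflexive edge exists, and one realises an arbitrary coalition as the set of acceptors of that edge) shows $\mathcal{W} = \{C \mid C^\star\subseteq C\}$, which is visibly a filter: $\emptyset\notin\mathcal{W}$ since $C^\star\neq\emptyset$, it is upward closed, and $C^\star\subseteq C_1$ and $C^\star\subseteq C_2$ give $C^\star\subseteq C_1\cap C_2$.

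I expect the main obstacle to be the ``forward'' direction in the form actually needed downstream — namely showing that the hypotheses genuinely force $\mathcal{W}$ to be closed under intersection in the first place (the step that makes $C^\star$ a winning coalition nontrivial), since the bare statement above quietly presupposes it. The real content there is the standard ``contraction''/transitivity argument: given $C_1,C_2\in\mathcal{W}$, build a profile over three vertices $x,y,z$ in which the acceptors of $(x,y)$ are $C_1$, the acceptors of $(y,z)$ are $C_2$, and the acceptors of $(x,z)$ are $C_1\cap C_2$, with all individual graphs transitive; collective rationality with respect to transitivity then forces $(x,z)\in F(\prof{E})$, i.e. $C_1\cap C_2\in\mathcal{W}$. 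Managing the bookkeeping of this profile so that each $E_i$ is indeed transitive (and consistent with whatever reflexivity requirements are imposed) while the three relevant supports come out as prescribed is the delicate part; everything else is the routine set-theoretic translation sketched above.
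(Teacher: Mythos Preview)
Your core arguments for both directions are correct and essentially match the paper's proof. However, you have misread the lemma's hypotheses: the statement is self-contained and requires only that $F$ be IIE and NR-neutral, so that a single family $\mathcal{W}$ of winning coalitions for nonreflexive edges exists. No Arrovian axioms (unanimity, groundedness) and no collective rationality with respect to transitivity are needed here, and you should not invoke them.

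In particular, your final paragraph is a red herring. The Filter Lemma does not ask you to \emph{prove} that $\mathcal{W}$ is closed under intersection from first principles; closure under intersection is either part of the hypothesis (in the direction ``$\mathcal{W}$ is a filter $\Rightarrow$ $F$ is NR-oligarchic'') or an immediate consequence of the explicit description $\mathcal{W} = \{C \subseteq \N : C^\star \subseteq C\}$ (in the other direction). The transitivity-based contraction argument you sketch---building a profile on three vertices to force $C_1\cap C_2\in\mathcal{W}$---belongs to the proof of the Oligarchy Theorem (Theorem~\ref{thm:oligarchy}), where one must show that the Arrovian axioms plus collective rationality with respect to an implicative property force $\mathcal{W}$ to be a filter; the Filter Lemma is then invoked as a black box to conclude NR-oligarchicity. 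You have conflated the lemma with the theorem that uses it.

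Strip out the spurious ambient hypotheses and the final paragraph, and what remains is exactly the paper's proof: for ($\Leftarrow$), set $C^\star := \bigcap_{C\in\mathcal{W}} C$, use finiteness of $\N$ and closure under intersection to get $C^\star\in\mathcal{W}$, nonemptiness from $\emptyset\notin\mathcal{W}$, and closure under supersets to identify $\mathcal{W}$ with the supersets of $C^\star$; for ($\Rightarrow$), read off $\mathcal{W}=\{C : C^\star\subseteq C\}$ directly from the definition of NR-oligarchic and verify the three filter conditions.
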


\begin{proof}
$(\Rightarrow)$
Recall that $F$ being NR-oligarchic means that there exists a nonempty coalition $C^\star$ such that a given nonreflexive edge is accepted if and only if all the agents in~$C^\star$ accept it. Thus, the winning coalitions are exactly $C^\star$ and its supersets. This family of sets does not include the empty set and is closed under both intersection and supersets.

$(\Leftarrow)$
Suppose $F$ is determined by the filter~$\mathcal{W}$ as far as nonreflexive edges are concerned. Let $C^\star := \bigcap_{C\in\mathcal{C}} C$, which is well-defined due to $\N$ being finite. Observe that $C^\star$ must be nonempty, due to the first two filter conditions. Now note that $F$ is NR-oligarchic with respect to coalition~$C^\star$.
\end{proof}

\begin{lemma}[Ultrafilter Lemma]\label{lem:dictator-ultrafilter}
Let $F$ be an IIE and NR-neutral aggregation rule and let $\mathcal{W}$ be the corresponding set of winning coalitions for nonreflexive edges, i.e., $(x,y)\in F(\prof{E}) \Leftrightarrow N^\prof{E}_{(x,y)}\in\mathcal{W}$ for all $x\not=y\in V$. Then $F$ is NR-dictatorial if and only if $\mathcal{W}$ is an ultrafilter.
\end{lemma}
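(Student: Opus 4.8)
The plan is to mirror the structure of the Filter Lemma above, now strengthening ``filter'' to ``ultrafilter'' and ``NR-oligarchic'' to ``NR-dictatorial''. For the direction from NR-dictatorial to ultrafilter, I would observe that if $F$ is NR-dictatorial with dictator $i^\star$, then a nonreflexive edge is accepted precisely when $i^\star$ accepts it, so $\mathcal{W} = \{C\subseteq\N \mid i^\star\in C\}$. This collection manifestly excludes $\emptyset$, is closed under intersection (if $i^\star$ is in both $C_1$ and $C_2$ then it is in $C_1\cap C_2$), and is maximal ($i^\star$ lies in exactly one of $C$ and $\N\setminus C$). Hence $\mathcal{W}$ is an ultrafilter. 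This direction is routine.

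For the converse, suppose $\mathcal{W}$ is an ultrafilter determining $F$ on nonreflexive edges. Since every ultrafilter is a filter, the Filter Lemma already tells us $F$ is NR-oligarchic with respect to the coalition $C^\star := \bigcap_{C\in\mathcal{W}} C$, which is well-defined and nonempty because $\N$ is finite (this is exactly where finiteness of $\N$ is used, as flagged in the footnote). So it remains only to show that $|C^\star| = 1$, i.e., that the oligarchy is in fact a dictatorship. Suppose for contradiction that $C^\star$ contains two distinct individuals; then we can partition $\N$ into two sets $A$ and $\N\setminus A$ each of which omits at least one element of $C^\star$, so neither $A$ nor $\N\setminus A$ can be a superset of $C^\star$, and hence (since $\mathcal{W}$ consists exactly of the supersets of $C^\star$, being a filter with smallest element $C^\star$) neither $A$ nor $\N\setminus A$ is in $\mathcal{W}$. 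This contradicts the maximality condition $(iii)$ of the ultrafilter. Therefore $C^\star$ is a singleton $\{i^\star\}$, and $F$ is NR-dictatorial with dictator $i^\star$.

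The one subtlety worth stating carefully is the claim that a filter $\mathcal{W}$ on a finite set $\N$ consists of \emph{exactly} the supersets of $C^\star = \bigcap_{C\in\mathcal{W}} C$: closure under supersets gives one inclusion, and closure under (finitely many) intersections gives $C^\star\in\mathcal{W}$, hence any member is a superset of $C^\star$, giving the other. With this in hand the maximality argument above is immediate, so I do not expect any real obstacle — the proof is essentially a packaging of standard model-theoretic facts, just as the authors announce. The only place requiring genuine care is making sure finiteness of $\N$ is invoked explicitly (for well-definedness of $C^\star$ and for the finite-intersection step), since for infinite $\N$ the analogue fails.
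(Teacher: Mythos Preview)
Your proof is correct, but the $(\Leftarrow)$ direction takes a different route from the paper's. The paper argues by descent: given any $C\in\mathcal{W}$ with $|C|\geq 2$ and any nonempty $C'\subsetneq C$, maximality places one of $C'$ or $\N\setminus C'$ in $\mathcal{W}$, and intersecting with $C$ then yields a nonempty proper subset of $C$ lying in $\mathcal{W}$; iterating (using finiteness of~$\N$) produces a singleton. You instead invoke the Filter Lemma up front to obtain the oligarchy~$C^\star$, characterise $\mathcal{W}$ as exactly the supersets of~$C^\star$, and apply maximality just once to exclude $|C^\star|\geq 2$. Your approach is more modular---it reuses the Filter Lemma rather than re-deriving its content---while the paper's direct descent is self-contained and makes more visible how maximality repeatedly shrinks winning coalitions. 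Both arguments lean on finiteness of~$\N$ in essentially the same place (yours via $C^\star\in\mathcal{W}$, the paper's via termination of the descent), so your remark about the footnote is on point.
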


\begin{proof}
$(\Rightarrow)$
$F$ being NR-dictatorial means that there exists an  $i^\star\in\N$ such that the winning coalitions for nonreflexive edges are exactly $\{i^\star\}$ and its supersets. This family of sets does not include the empty set, is closed under intersection, and maximal.

$(\Leftarrow)$
Suppose $F$ is determined by the ultrafilter~$\mathcal{W}$ as far as nonreflexive edges are concerned. 
Take an arbitrary $C\in\mathcal{W}$ with $|C|\geq 2$ and consider any nonempty $C'\subsetneq C$. 
By maximality, one of $C'$ and $\N\setminus\! C'$ must be in $\mathcal{W}$.
Thus, by closure under intersection, one of $C\cap C'=C'$ and $C\cap(\N\setminus\! C')=C\setminus\!C'$ must be in $\mathcal{W}$ as well. Observe that both of these sets are nonempty and of lower cardinality than~$C$.
To summarise, we have just shown for any $C\in\mathcal{W}$ with $|C|\geq 2$ at least one nonempty proper subset of $C$ is also in $\mathcal{W}$. 
By maximality, $\mathcal{W}$ is not empty. So take any $C\in\mathcal{W}$.
Due to $\N$ being finite, we can apply our reduction rule a finite number of times to infer that $\mathcal{W}$ must include some singleton $\{i^\star\} \subsetneq \cdots \subsetneq C$. Hence, $F$ is an NR-dictatorship with dictator~$i^\star$.
\end{proof}
%%%%%%%%%%%%%%%%%%%%%%%%%%%%%%%%%%%%%%%%%%%%%%%%%%%%%%%%%%%%%%%%%%%%%%%%%%%%%%%%
\subsection{The Neutrality Axiom and Contagious Graph Properties}\label{sec:contagious}
%%%%%%%%%%%%%%%%%%%%%%%%%%%%%%%%%%%%%%%%%%%%%%%%%%%%%%%%%%%%%%%%%%%%%%%%%%%%%%%%

\noindent
Recall that the neutrality axiom is required to be able to work with a single family of winning coalitions as outlined earlier, yet this axiom does not feature in Arrow's Theorem. As we shall see soon, the reason we do not need to assume neutrality is that, in Arrow's setting, the same restriction on winning coalitions is already enforced by collective rationality with respect to transitivity. This is an interesting link between a specific collective rationality requirement and a specific axiom. In the literature, this fact is often called the \emph{Contagion Lemma}~\cite{Sen1986}, although the connection to neutrality is not usually made explicit.
The same kind of result can also be obtained for other graph properties with a similar structure. 
Let us now develop a definition for a class of graph properties that will allow us to derive neutrality.

Recall that $\CONFIG$ denotes the set of graphs with property~$P$ that include all of the edges in $S^+$ and none of those in $S^-$. We start with a technical definition.

%\begin{definition}\label{def:contagious-prerequisites}
%Let $x,y,x,w\in V$.
%A graph property $P\subseteq 2^{V\times V}\!$ is \textbf{$xy/zw$-contagious} if there exists an edge $e\in V\!\times\! V$ such that 
%$(i)$~for every graph $E\in P$ it is the case that $(x,y),\,e\in E$ implies $(z,w)\in E$ and
%$(ii)$~there exist graphs $E_0,E_1\in P$ such that $E_0\cap\{(x,y),e,(z,w)\}=\{e\}$ and $\{(x,y),e,(z,w)\}\subseteq E_1$.
%\end{definition}

\begin{definition}\label{def:xyzw-contagious}
Let $x,y,z,w\in V$.
A graph property $P\subseteq 2^{\Edges}$ is called \textbf{$\boldsymbol{xy/zw}$-contagious} if there exist two disjoint sets $S^+\!,S^-\subseteq\Edges$ such that the following conditions hold:
\begin{enumerate}
\item[$(i)$] for every graph $E\in\CONFIG$ it is the case that $(x,y)\in E$ implies $(z,w)\in E$; and
\item[$(ii)$] there exist graphs $E_0,E_1\in\CONFIG$ with $(z,w)\not\in E_0$ and $(x,y)\in E_1$.
\end{enumerate}
\end{definition}

\noindent
Part~$(i)$ of Definition~\ref{def:xyzw-contagious} says that, if you accept edge $(x,y)$, then you must also accept edge $(z,w)$---at least if the side condition of you also accepting all the edges in $S^+$ but none of those in $S^-$ is met. 
%That is, an $xy/zw$-contagious property expands the graph from $(x,y)$ to $(z,w)$ under the preconditions expressed by $\CONFIG$. 
That is, the property of $xy/zw$-contagiousness may be paraphrased as the formula $[\bigwedge S^+ \wedge \neg\bigvee S^-] \imp [xEy\imp zEw]$. 
Part~$(ii)$ is a richness condition that says that you have the option of accepting neither or both of $(x,y)$ and $(z,w)$. 
It requires the existence of a graph $E_0$ where neither $(x,y)$ nor $(z,w)$ are accepted, and the existence of a graph $E_1$ where both $(x,y)$ and $(z,w)$ are accepted.

Contagiousness with respect to two given edges will be useful for our purposes if those two edges stand in a specific relationship to each other. The following definition captures the relevant cases.

\begin{definition}\label{def:contagious}
A graph property $P\subseteq 2^{\Edges}$ is called \textbf{contagious} if it satisfies at least one of the three conditions below:  
\begin{enumerate}%[noitemsep]
\item[$(i)$]   $P$ is $xy/yz$-contagious for all triples of distinct vertices $x,y,z\in V$.
\item[$(ii)$]  $P$ is $xy/zx$-contagious for all triples of distinct vertices $x,y,z\in V$.
\item[$(iii)$] $P$ is $xy/xz$-contagious and $xy/zy$-contagious for all triples of distinct vertices $x,y,z\in V$.
%\item[$(i)$]   For some edge $e\in V\!\times\! V$: $(x,y),\,e\in E$ implies $(y,z)\in E$. 
%\item[$(ii)$]  For some edge $e\in V\!\times\! V$: $(x,y),\,e\in E$ implies $(z,x)\in E$. 
%\item[$(iii)$] For some edges $e_1,e_2\in V\!\times\! V$: $(x,y),\,e_1\in E$ implies $(x,z)\in E$, and $(x,y),\,e_2\in E$ implies $(z,y)\in E$. 
\end{enumerate}
\end{definition}

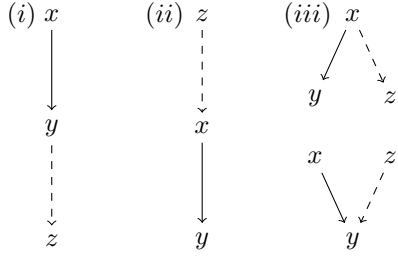
\begin{figure}[t]
\centering
\begin{tikzpicture}[auto]
\node at (-0.4,3) {$(i)$};
\node at (1.5,3) {$(ii)$};
\node at (3.4,3) {$(iii)$};
\node (x1) at (0,3) {$x$};
\node (y1) at (0,1.5) {$y$};
\node (z1) at (0,0) {$z$};
\draw [->] (x1) edge (y1);
\draw [->] (y1) edge[dashed] (z1);
\node (x2) at (2,1.5) {$x$};
\node (y2) at (2,0) {$y$};
\node (z2) at (2,3) {$z$};
\draw [->] (x2) edge (y2);
\draw [->] (z2) edge[dashed] (x2);
\node (x3) at (4,3) {$x$};
\node (y3) at (3.5,1.9) {$y$};
\node (z3) at (4.5,1.9) {$z$};
\draw [->] (x3) edge (y3);
\draw [->] (x3) edge[dashed] (z3);
\node (x4) at (3.5,1.1) {$x$};
\node (y4) at (4,0) {$y$};
\node (z4) at (4.5,1.1) {$z$};
\draw [->] (x4) edge (y4);
\draw [->] (z4) edge[dashed] (y4);
\end{tikzpicture}
\caption{Illustration of Definition~\ref{def:contagious}, indicating given edges (solid) and implied edges (dashed).\label{fig:contagious}}
\end{figure}

\noindent
That is, Definition~\ref{def:contagious} covers pairs of edges where $(i)$~the second edge is a successor of the first edge, where $(ii)$~the second edge is a predecessor of the first edge, and where $(iii)$~the two edges share either a starting point or an end point.
This covers all cases of two edges meeting in one point.
The three cases are illustrated in Figure~\ref{fig:contagious}.
As will become clear in the proof of Lemma~\ref{lem:neutrality}, case~$(iii)$ differs from the other two, as only one of these two types of connections would not be sufficient to ``traverse'' the full graph.

%The property of being right Euclidean is an example for a contagious graph property, as it satisfies condition~$(i)$. To see this, let the edge $e$ postulated by Definition~\ref{def:contagious-prerequisites} be equal to $(x,z)$. The richness condition of Definition~\ref{def:contagious-prerequisites} is also satisfied, as the Euclidean propert does not force us to reject either $(x,y)$ or $e=(x,z)$.
%Being left Euclidean is contagious by condition~$(ii)$, with $e=(z,y)$.
%Transitivity, finally, satisfies condition~$(iii)$: let $e=(y,z)$  for $xy/zx$-contagiousness and $e=(z,x)$ for $xy/zy$-contagiousness.  

\begin{fact}\label{fact:contagiousness}
For $|V|\geq 3$, the two Euclidean properties, transitivity, negative transitivity, and connectedness are all contagious graph properties.
\end{fact}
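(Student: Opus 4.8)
The plan is to verify Definition~\ref{def:contagious} separately for each of the five named properties, in each case exhibiting explicit witnessing sets $S^+$ and $S^-$ and then checking conditions $(i)$ and $(ii)$ of Definition~\ref{def:xyzw-contagious}. The key observation driving the choice of which clause of Definition~\ref{def:contagious} applies is simply to read off the shape of the relevant first-order condition from Table~\ref{tab:graph-properties}: transitivity $\forall xyz.[(xEy\wedge yEz)\imp xEz]$ gives an implication from an edge $(y,z)$ to an edge $(x,z)$ under the side condition $xEy$, which matches clause $(iii)$ (shared endpoint~$z$) after relabelling; the right Euclidean property $\forall xyz.[(xEy\wedge xEz)\imp yEz]$ gives an implication with side condition involving a shared source, again clause $(iii)$; negative transitivity $\forall xyz.[xEy\imp(xEz\vee zEy)]$ is handled by taking $S^-$ to contain one of the two disjuncts so that the remaining disjunct is forced; and connectedness $\forall xyz.[(xEy\wedge xEz)\imp(yEz\vee zEy)]$ is similar, again with a disjunct pushed into $S^-$.

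**Key steps.** First I would treat transitivity: given distinct $x,y,z$, to show $xy/zy$-contagiousness (clause $(iii)$, second half), set $S^+ = \{(z,x)\}$ and $S^- = \emptyset$; then any $E$ with $(z,x)\in E$ and $(x,y)\in E$ must, by transitivity, contain $(z,y)$, giving $(i)$, and for $(ii)$ take $E_0 = \{(z,x)\}$ (vacuously transitive, missing $(z,y)$) and $E_1 = \{(z,x),(x,y),(z,y)\}\cup\{\text{all loops}\}$—or just the full graph—to get a transitive graph containing $(x,y)$. The $xy/xz$-contagious half is symmetric, using $S^+=\{(y,z)\}$ on the edge-relabelling $(x,y)$ forces $(x,z)$ via $(x,y)\wedge(y,z)\imp(x,z)$. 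For the two Euclidean properties the arithmetic is identical in spirit: one picks the single side-condition edge dictated by the premise of the first-order formula, puts it in $S^+$, and checks that a minimal graph and the complete graph serve as $E_0$ and $E_1$. For negative transitivity and connectedness, I would instead exploit $S^-$: e.g.\ for negative transitivity and distinct $x,y,z$, to get $xy/zy$-contagiousness set $S^+=\emptyset$, $S^-=\{(x,z)\}$; then $(x,y)\in E$ together with $(x,z)\notin E$ forces $(z,y)\in E$ by the contrapositive reading $\forall xyz.[(\neg xEz\wedge\neg zEy)\imp\neg xEy]$. The richness condition $(ii)$ is the routine part throughout—one checks that the empty graph (or a near-empty graph respecting $S^+,S^-$) and the full graph (which trivially satisfies every property in the list, since each first-order condition is a universally quantified implication with a conclusion that the complete graph makes true) lie in $\CONFIG$.

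**Main obstacle.** The one genuinely non-routine point is being careful about which of the three clauses of Definition~\ref{def:contagious} each property satisfies, and in the $(iii)$ case verifying \emph{both} the $xy/xz$ and the $xy/zy$ versions, since that clause is a conjunction. For connectedness in particular, because its premise $(xEy\wedge xEz)$ involves a common source, one must check that the side-condition edge $(x,z)$ (or $(x,y)$) can legitimately go into $S^+$ without clashing with the edge being reasoned about, and that putting the "other" disjunct into $S^-$ is consistent with still having a graph $E_1\in\CONFIG$ that contains the triggering edge—here the complete graph again rescues us only after we note that connectedness only constrains pairs reachable from a common predecessor, so the relevant $E_1$ must be chosen with a little care (a graph on three vertices containing exactly the needed edges and their "closure" suffices). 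I expect no deep difficulty, only the bookkeeping of matching five properties against the labelled cases of Figure~\ref{fig:contagious}, and the need to double-check that the hypothesis $|V|\geq 3$ is exactly what lets us pick three distinct vertices $x,y,z$ to instantiate the quantifiers.
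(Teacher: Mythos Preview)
Your overall approach matches the paper's: verify Definition~\ref{def:contagious} case by case, reading off the side-condition edges from the first-order formulas in Table~\ref{tab:graph-properties} and putting them into $S^+$ or $S^-$. Your treatment of transitivity and negative transitivity under clause~$(iii)$ is exactly what the paper does.

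However, you have misread which clause of Definition~\ref{def:contagious} governs the Euclidean properties and connectedness. The three clauses classify a property according to the relationship between the \emph{trigger} edge $(x,y)$ and the \emph{conclusion} edge---not between the side-condition edge and the trigger. For the right Euclidean property, with side condition $(x,z)\in S^+$, the trigger $(x,y)$ forces the conclusion $(y,z)$; since $(y,z)$ is the \emph{successor} of $(x,y)$, this is clause~$(i)$, not~$(iii)$. Likewise the left Euclidean property (rewritten as $[zEy]\imp[xEy\imp zEx]$) falls under clause~$(ii)$, and connectedness (rewritten as $[xEz\wedge\neg zEy]\imp[xEy\imp yEz]$) again falls under clause~$(i)$. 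The paper makes these assignments explicit.

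This matters because clause~$(iii)$ is a \emph{conjunction}: you must establish both $xy/xz$- and $xy/zy$-contagiousness. If you genuinely attempted to verify clause~$(iii)$ for the right Euclidean property, the $xy/xz$ half would not go through---there is no choice of $S^+,S^-$ under which a right-Euclidean graph containing $(x,y)$ is forced to contain $(x,z)$ while still admitting the richness witnesses $E_0,E_1$. (Try it: the only way to force a conclusion of the form $xEz$ from the right-Euclidean axiom is via premises $aEx\wedge aEz$, neither of which can be the trigger $(x,y)$ for distinct $x,y,z$.) So your mislabelling is not merely cosmetic: the single verification you describe for each Euclidean property actually establishes clause~$(i)$ or~$(ii)$, which is enough for contagiousness, but calling it clause~$(iii)$ and then only checking one implication would leave a genuine gap. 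Once you correct the clause assignments, your sketch goes through essentially as the paper's proof does.
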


\begin{proof}
Let us first consider the property of being a right-Euclidean graph. It satisfies condition~$(i)$ of Definition~\ref{def:contagious}. To prove this, we will show that the right-Euclidean property is $xy/yz$-contagious for all triples $x,y,z\in V$. Let $S^+=\{(x,z)\}$ and $S^-=\emptyset$, i.e., $\CONFIG$ is the set of all right-Euclidean graphs containing $(x,z)$. 
Condition~$(i)$ of Definition~\ref{def:xyzw-contagious} is met: any graph in $\CONFIG$ contains $(x,z)$; therefore, by the right-Euclidean property, $(y,z)$ needs to be accepted whenever $(x,y)$ is. 
Condition~$(ii)$ is also satisfied. Let $E_0$ be the graph only containing the single edge $(x,z)$, and let $E_1$ be the graph containing exactly the three edges $(x,y)$, $(y,z)$, and $(x,z)$. Both graphs are right-Euclidean and, since they include $(x,z)$, they also belong to $\CONFIG$.

An alternative way of seeing that the right-Euclidean property is contagious is to observe that it is equivalent to the formula $[xEz] \imp [xEy\imp yEz]$, with all variables universally quantified.
Similarly, the left-Euclidean property, which can be rewritten as $[zEy] \imp [xEy\imp zEx]$, is contagious by condition~$(ii)$.
Transitivity satisfies condition~$(iii)$, as we can rewrite it either as $[yEz]\imp [xEy\imp xEz]$ or as $[zEx]\imp [xEy\imp zEy]$.
Negative transitivity can be rewritten either as $[\neg(zEy)] \imp [xEy \imp xEz]$ or as $[\neg(xEz)] \imp [xEy \imp zEy]$, and this property thus also satisfies condition~$(iii)$. 
Connectedness, finally, can be rewritten as $[xEz \wedge \neg zEy] \imp [xEy\imp yEz]$ and thus satisfies condition~$(i)$.
For all these properties, the richness conditions are easily verified to hold as well.
\end{proof}

\noindent
We are now ready to prove a powerful lemma, the \emph{Neutrality Lemma}, showing that any Arrovian aggregation rule that is collectively rational with respect to a contagious graph property must be neutral (at least as far as nonreflexive edges are concerned). This generalises a result often referred to as the \emph{Contagion Lemma} in the literature~\cite{Sen1986}.

\begin{lemma}[Neutrality Lemma]\label{lem:neutrality}
For $|V|\geq 3$, any unanimous, grounded, and IIE aggregation rule that is collectively rational with respect to a contagious graph property must be NR-neutral.
\end{lemma}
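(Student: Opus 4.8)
The goal is to show that any unanimous, grounded, IIE aggregation rule $F$ that is collectively rational with respect to a contagious property $P$ is NR-neutral, i.e., the winning coalitions agree across all nonreflexive edges. Since $F$ is IIE, for each edge $e$ there is a family $\mathcal{W}_e$ of winning coalitions with $e\in F(\prof{E})\Leftrightarrow N^{\prof{E}}_e\in\mathcal{W}_e$; unanimity gives $\N\in\mathcal{W}_e$ and groundedness gives $\emptyset\notin\mathcal{W}_e$. The plan is to prove a single ``transfer'' step: if $P$ is $xy/zw$-contagious, then $\mathcal{W}_{(x,y)}\subseteq\mathcal{W}_{(z,w)}$. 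The idea is to take a coalition $C\in\mathcal{W}_{(x,y)}$ and build a profile $\prof{E}$ of graphs all lying in $\CONFIG$ (the witnessing configuration for contagiousness) such that $N^{\prof{E}}_{(x,y)}=C$: let the individuals in $C$ submit the graph $E_1$ from part~$(ii)$ of Definition~\ref{def:xyzw-contagious} (which contains $(x,y)$, hence also $(z,w)$ by part~$(i)$), and the individuals outside $C$ submit $E_0$ (which contains neither $(x,y)$ nor $(z,w)$). Then every individual graph is in $P$; $N^{\prof{E}}_{(x,y)}=C$, so $(x,y)\in F(\prof{E})$; collective rationality forces $F(\prof{E})\in P$; since $F(\prof{E})$ also contains all of $S^+$ and none of $S^-$ (these hold in both $E_0$ and $E_1$, but one must check they are respected by $F$—this needs a little care, see below) it lies in $\CONFIG$, so part~$(i)$ of contagiousness gives $(z,w)\in F(\prof{E})$, whence $N^{\prof{E}}_{(z,w)}=C\in\mathcal{W}_{(z,w)}$. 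Applying this transfer step repeatedly along the connection pattern guaranteed by Definition~\ref{def:contagious} will chain all nonreflexive edges together.

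The subtlety with $S^+$ and $S^-$ is the main thing to get right. We need $F(\prof{E})$ to actually contain every edge of $S^+$ and avoid every edge of $S^-$ so that $F(\prof{E})\in\CONFIG$ and part~$(i)$ of $xy/zw$-contagiousness applies. For edges in $S^+$: they belong to both $E_0$ and $E_1$ (we may assume this without loss, or re-choose $E_0,E_1$ to include $S^+$—in fact the definition already puts $E_0,E_1\in\CONFIG$, so $S^+\subseteq E_0\cap E_1$ and $S^-\cap(E_0\cup E_1)=\emptyset$ automatically), so they are unanimously accepted, and unanimity gives them in $F(\prof{E})$. For edges in $S^-$: they are in neither $E_0$ nor $E_1$, so no individual accepts them, and groundedness keeps them out of $F(\prof{E})$. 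Hence $F(\prof{E})$ genuinely lies in $\CONFIG$, and the argument goes through. (One should also note that $(x,y)\notin S^-$ and $(z,w)\notin S^-$, and $(x,y),(z,w)$ need not be in $S^+$, which is exactly what parts $(i)$ and $(ii)$ of Definition~\ref{def:xyzw-contagious} guarantee is consistent.)

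Next, the chaining. Fix two arbitrary nonreflexive edges; we must show they have the same winning coalitions, and by symmetry of the argument it suffices to show $\mathcal{W}$ is constant, i.e., to get from any nonreflexive edge to any other via a sequence of transfer steps, each of which preserves (in fact only grows) the winning family, and then observe that since we can also go back, all inclusions are equalities. In case~$(i)$ of Definition~\ref{def:contagious} ($P$ is $xy/yz$-contagious for all distinct $x,y,z$), given nonreflexive edges $(a,b)$ and $(c,d)$, I would route $(a,b)\to(b,e)\to(e,c)\to(c,d)$ for a suitably chosen vertex $e\notin\{a,b,c,d\}$ (using $|V|\geq 3$, and splitting into subcases when the vertices are not all distinct—e.g. when $b=c$ we can go directly, when $\{a,b\}\cap\{c,d\}=\emptyset$ we need the intermediate hop). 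Case~$(ii)$ is the mirror image using predecessors. Case~$(iii)$ is the one requiring both $xy/xz$- and $xy/zy$-contagiousness: with only ``same source'' steps one can only move among edges out of a fixed vertex, and with only ``same target'' steps only among edges into a fixed vertex, but alternating the two lets us pivot—$(a,b)\to(a,c)\to(d,c)\to(d,b)\to\cdots$—and reach every nonreflexive edge. The main obstacle is this bookkeeping: carefully enumerating the degenerate configurations of source/target vertices and checking that in each case there is a valid chain of transfer steps staying within nonreflexive edges and using only the contagiousness hypotheses available; the single transfer step itself is routine once the $S^+,S^-$ point above is settled.
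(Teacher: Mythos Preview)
Your proposal is correct and follows essentially the same approach as the paper. The transfer step (showing $\mathcal{W}_{(x,y)}\subseteq\mathcal{W}_{(z,w)}$ from $xy/zw$-contagiousness by having the agents in $C$ submit $E_1$ and the rest $E_0$, then invoking unanimity for $S^+$, groundedness for $S^-$, and collective rationality) is identical to the paper's generic argument.

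The only organisational difference is in the chaining. The paper does not route directly between two arbitrary nonreflexive edges; instead it fixes any distinct $x,y,z$ and shows, in each of the three cases, that from $\mathcal{W}_{(x,y)}$ one can reach both $\mathcal{W}_{(y,z)}$ (the successor edge) and $\mathcal{W}_{(y,x)}$ (the inverse edge). Since these two moves generate all nonreflexive edges by iteration, this suffices. Your plan of constructing explicit chains between arbitrary pairs works too, but note that your proposed intermediate vertex $e\notin\{a,b,c,d\}$ need not exist when $|V|\in\{3,4\}$; in fact you never need it, since in case~$(i)$ the two-hop route $(a,b)\to(b,c)\to(c,d)$ already works whenever $a,b,c,d$ are distinct, and the remaining subcases are shorter. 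The paper's reduction to ``successor and inverse'' avoids this bookkeeping altogether.
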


\begin{proof}
We will first establish a generic result for collective rationality with respect to $xy/zw$-contagiousness. Let $x,y,z,w\in V$. Take any graph property $P$ that is $xy/zw$-contagious and take any aggregation rule~$F$ that is unanimous, grounded, IIE, and collectively rational with respect to~$P$. Let $\{\mathcal{W}_e\}_{e\in\Edges}$ be the family of winning coalitions associated with $F$. We want to show that $\mathcal{W}_{(x,y)} \subseteq \mathcal{W}_{(z,w)}$. 
So let $C$ be a coalition in $\mathcal{W}_{(x,y)}$. Let $S^+\!,S^-\subseteq\Edges$ and $E_0,E_1\in\CONFIG$ be defined as in Definition~\ref{def:xyzw-contagious}. 
Consider a profile $\prof{E}$ in which the individuals in $C$ propose graph $E_1$ and all others propose $E_0$. That is, all individuals accept the edges in $S^+\!$, none accept any of those in $S^-\!$, exactly the individuals in $C$ accept edge $(x,y)$, and exactly those in $C$ also accept $(z,w)$. Now consider the collective graph $F(\prof{E})$. By unanimity $S^+\subseteq F(\prof{E})$, by groundedness $S^-\cap F(\prof{E})=\emptyset$, and finally $(x,y)\in F(\prof{E})$ due to $C$ being a winning coalition for $(x,y)$. By collective rationality, $F(\prof{E})\in P$ and thus also $F(\prof{E})\in\CONFIG$. But then, due to $xy/zw$-contagiousness of $F(\prof{E})$, we get $(z,w)\in F(\prof{E})$. As it was exactly the individuals in $C$ who accepted $(z,w)$, coalition $C$ must be winning for $(z,w)$, i.e., $C\in\mathcal{W}_{(z,w)}$, and we are done.

We are now ready to prove the lemma.
Take any graph property $P$ that is contagious and take any aggregation rule~$F$ that is unanimous, grounded, IIE, and collectively rational with respect to~$P$. Let $\{\mathcal{W}_e\}_{e\in\Edges}$ be the family of winning coalitions associated with $F$. We need to show that there exists a unique $\mathcal{W}\subseteq2^\N$ such that $\mathcal{W}=\mathcal{W}_e$ for every nonreflexive edge~$e$. 
By unanimity, the sets $\mathcal{W}_e$ are not empty (because at least $\N\in\mathcal{W}_e$). Consider any three vertices $x,y,z\in V$ and any coalition $C\in\mathcal{W}_{(x,y)}$. We will show 
%$\mathcal{W}_{(x,y)} \subseteq \mathcal{W}_{(y,z)}$ and $\mathcal{W}_{(x,y)} \subseteq \mathcal{W}_{(y,x)}$, i.e., if $C$ is winning for $(x,y)$ then it 
that $C$ is also winning for both $(y,z)$ and $(y,x)$. 
%We will employ collective rationality to show that then also $C\in\mathcal{W}_{(y,z)}$. That is, we will show that, if $C$ is winning for $(x,y)$, then $C$ is also winning for its `successor edge' $(y,z)$. 
If we can show this for any $x, y, z$, then we are done, as we can then repeat the same method several times until all nonreflexive edges are covered.

For each of the three possible ways in which $P$ can be contagious (see Definition~\ref{def:contagious}), we will use different instances of our generic result for $xy/zw$-contagiousness above:
\begin{itemize}
\item First, if $P$ is contagious by virtue of condition~$(i)$, then we can use $xy/yz$-contagiousness to get $C\in\mathcal{W}_{(y,z)}$ and its instance $xy/yx$-contagiousness (with $z:=x$) to obtain also $C\in\mathcal{W}_{(y,x)}$.
\item Second, if $P$ is contagious due to condition~$(ii)$, we use $xy/yx$-contagiousness to get $C\in\mathcal{W}_{(y,x)}$, and then $yx/zy$-contagiousness to get $C\in\mathcal{W}_{(z,y)}$ and $zy/yz$-contagiousness to get $C\in\mathcal{W}_{(y,z)}$. 
\item Third, suppose $P$ is contagious by virtue of condition~$(iii)$. We first use $xy/zy$-contagiousness to obtain $C\in\mathcal{W}_{(z,y)}$ and then $zy/zx$-contagiousness to get $C\in\mathcal{W}_{(z,x)}$. From the latter, via $zx/yx$-contagiousness we get $C\in\mathcal{W}_{(y,x)}$. Finally, $yx/yz$-contagiousness then entails $C\in\mathcal{W}_{(y,z)}$.
\end{itemize}
Hence, we obtain the required transfer from one edge $(x,y)$ to both its successor $(y,z)$ and its inverse $(y,x)$ in all three cases, and our proof is complete.
\end{proof}

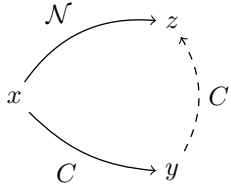
\begin{figure}[t]
\centering
\begin{tikzpicture}[auto]
  \node          (a)          {$x$};
  \node          (b) at (2.1,1) {$z$};
  \node          (c) at (2.1,-1) {$y$};

  \path[->] (a) edge [->, line width=0.5pt,bend left=30]        node       {$\N$} (b)
            (b) edge [<-, line width=0.5pt,bend left=30, dashed]        node        {$C$} (c) 
            (c) edge [<-,bend left=20, line width=0.5pt] node {$C$} (a);
\end{tikzpicture}
\caption{Collective rationality with respect to the right-Euclidean property implies neutrality.\label{fig:neutrality}}
\end{figure}

\noindent
Figure~\ref{fig:neutrality} provides an illustration of a specific instance of the main argument in the proof of Lemma~\ref{lem:neutrality} when the right-Euclidean property is considered, which is $xy/yz$-contagious by Fact~\ref{fact:contagiousness}.
We have $S^+=\{(x,z)\}$ and $S^-=\emptyset$. $E_1$ is the graph that accepts all three edges $(x,y)$, $(y,z)$ and $(x,z)$, and $E_0$ accepts only edge $(x,z)$. 
Consider profile $\prof{E}$, in which the individuals in $C$ choose $E_1$ and all others choose $E_0$.
That is, the individuals in $C$ accept $(x,y)$ and $(y,z)$, while $(x,z)$ is accepted by all individuals in $\N$. 
By unanimity, $(x,z)$ must be accepted, and due to $C\in \mathcal{W}_{(x,y)}$ also $(x,y)$ should be accepted. We can now conclude, since $F$ is collectively rational with respect to the right-Euclidean property, that $(y,z)$ should also be accepted, and hence that $C\in \mathcal{W}_{(y,z)}$.
It is then sufficient to consider all triples to obtain neutrality over all (nonreflexive) edges.

%%%%%%%%%%%%%%%%%%%%%%%%%%%%%%%%%%%%%%%%%%%%%%%%%%%%%%%%%%%%%%%%%%%%%%%%%%%%%%%%
\subsection{Implicative and Disjunctive Graph Properties}
\label{sec:implicative-disjunctive}
%%%%%%%%%%%%%%%%%%%%%%%%%%%%%%%%%%%%%%%%%%%%%%%%%%%%%%%%%%%%%%%%%%%%%%%%%%%%%%%%

\noindent
Let us briefly recapitulate where we are at this point. We now know that any Arrovian aggregation rule~$F$ that is collectively rational with respect to some contagious graph property~$P$ can be fully described in terms of a single family~$\mathcal{W}$ of winning coalitions, at least as far as $F$'s behaviour on nonreflexive edges is concerned. To prove our impossibility results, we will need to derive structural properties of $\mathcal{W}$ that allow us to infer that $\mathcal{W}$ is either a filter or an ultrafilter (so we can use Lemma~\ref{lem:oligarchy-filter} or~\ref{lem:dictator-ultrafilter}, respectively). 
%Some of these structural properties will come from additional axioms we will assume (namely, as we have already seen in Section~\ref{sec:winning-coalitions}, groundedness of $F$ implies $\emptyset\not\in\mathcal{W}$). But most importantly, most of 
These structural properties will be shown to follow from collective rationality requirements with respect to graph properties belonging to a certain class of such properties. 

We will now introduce two such classes of graph properties, or ``meta-properties'' as we shall also call them. Recall that we have already seen one meta-property, namely contagiousness (which, however, is much more complex than the following meta-properties). First, a graph property is \emph{implicative} if the inclusion of some edges can force the inclusion of a further edge, as is the case, for instance, for transitivity. The following definition makes this precise.

\begin{definition}\label{def:implicative}
A graph property $P\subseteq 2^{\Edges}$ is called \textbf{implicative} if there exist two disjoint sets $S^+\!,S^-\subseteq\Edges$ and three distinct edges $e_1,e_2,e_3\in\Edges\setminus(S^+\cup S^-)$ such that the following conditions hold:
\begin{enumerate}
\item[$(i)$] for every graph $E\in\CONFIG$ it is the case that $e_1,e_2\in E$ implies $e_3\in E$; and
\item[$(ii)$]there exist graphs $E_0, E_1, E_2, E_{13}, E_{123}\in \CONFIG$ with 
$E_0\cap\{e_1,e_2,e_3\}=\emptyset$, 
$E_1\cap\{e_1,e_2,e_3\}=\{e_1\}$, 
$E_2\cap\{e_1,e_2,e_3\}=\{e_2\}$, 
$E_{13}\cap\{e_1,e_2,e_3\}=\{e_1,e_3\}$, and
$\{e_1,e_2,e_3\}\subseteq E_{123}$. 
\end{enumerate}
\end{definition}

\noindent
Part~$(i)$ expresses that all graphs with property $P$ (that also include all edges in $S^+$ and none from $S^-$) must satisfy the formula $e_1\wedge e_2\rightarrow e_3$.
Part~$(ii)$ is a richness condition saying that accepting/rejecting any combination of $e_1$ and $e_2$ is possible, that $e_3$ need not be accepted unless both $e_1$ and $e_2$ are, and that $e_3$ can be accepted even if only the first antecedent~$e_1$ is.\footnote{In our earlier work, we did not require the existence of $E_{13}$~\citep{EndrissGrandiECAI2014}. The slightly stronger formulation used here is necessary to prove one of our general impossibility theorems (Theorem~\ref{thm:oligarchy}), but not the other (Theorem~\ref{thm:dictator}).}
Observe that Definition~\ref{def:implicative} has an existential form, i.e., we simply need to find two subsets $S^+$ and $S^-$ for the precondition, and three edges $e_1$, $e_2$ and $e_3$ that satisfy the two requirements $(i)$ and~$(ii)$. In this sense, implicativeness is much less demanding than contagiousness, which imposes conditions across the entire graph. 
%The third  meta-property to be introduced next also has this much simpler existential form.
%This meta-property for graph properties
Implicativeness may be paraphrased as the formula $[\bigwedge S^+ \wedge \neg\bigvee S^-] \imp [e_1 \wedge e_2 \imp e_3]$.

\begin{fact}\label{fact:implicativeness}
For $|V|\geq 3$, the two Euclidean properties, transitivity, and connectedness are all implicative graph properties.
\end{fact}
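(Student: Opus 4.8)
The plan is to exhibit, for each of the four properties, a concrete witness consisting of two disjoint edge sets $S^+,S^-$ and three distinct edges $e_1,e_2,e_3$, and then to verify the two conditions of Definition~\ref{def:implicative}. The guiding idea is exactly the paraphrase already recorded in the text: each of these properties can be rewritten as a universally quantified formula of the shape $[\bigwedge S^+ \wedge \neg\bigvee S^-]\imp[e_1\wedge e_2\imp e_3]$ after a suitable relabelling of variables. So the first step is, for each property, to stare at its first-order condition in Table~\ref{tab:graph-properties} and extract such a rewriting; since $|V|\geq 3$ we may pick three distinct vertices and instantiate the universally quantified variables at them. For transitivity, $\forall xyz.(xEy\wedge yEz\imp xEz)$ is already literally of this form with $S^+=S^-=\emptyset$, $e_1=(x,y)$, $e_2=(y,z)$, $e_3=(x,z)$ (note these are pairwise distinct once $x,y,z$ are distinct). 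For the right-Euclidean property, $\forall xyz.((xEy\wedge xEz)\imp yEz)$ gives $e_1=(x,y)$, $e_2=(x,z)$, $e_3=(y,z)$, again with empty side conditions; the left-Euclidean case is symmetric. For connectedness, $\forall xyz.((xEy\wedge xEz)\imp(yEz\vee zEy))$ is not yet implicative on its face because of the disjunction in the consequent, so here I would move one disjunct into the antecedent: it is equivalent to $\forall xyz.((xEy\wedge xEz\wedge\neg zEy)\imp yEz)$, which matches the template with $S^+=\emptyset$, $S^-=\{(z,y)\}$, $e_1=(x,y)$, $e_2=(x,z)$, $e_3=(y,z)$ — and one must check that $(z,y)\notin\{e_1,e_2,e_3\}$, which holds since $x,y,z$ are distinct.

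With the witnesses fixed, condition~$(i)$ of Definition~\ref{def:implicative} is then immediate in every case: any graph in $\CONFIG$ satisfies $P$, contains the edges of $S^+$ and avoids those of $S^-$, and hence the rewritten universal formula forces $e_3\in E$ whenever $e_1,e_2\in E$. So the only real work is condition~$(ii)$, the richness requirement: I must produce, for each property, five graphs $E_0,E_1,E_2,E_{13},E_{123}$ that all lie in $\CONFIG$ (i.e.\ satisfy $P$ and respect $S^+,S^-$) and whose intersections with $\{e_1,e_2,e_3\}$ realise, respectively, $\emptyset$, $\{e_1\}$, $\{e_2\}$, $\{e_1,e_3\}$, and $\{e_1,e_2,e_3\}$. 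The natural recipe is to take each $E_j$ to be the smallest graph containing the prescribed subset of $\{e_1,e_2,e_3\}$ together with $S^+$, then, if necessary, close it up minimally so as to satisfy $P$ while being careful not to introduce any edge of $S^-$ or any of the forbidden edges from $\{e_1,e_2,e_3\}$. For transitivity and the Euclidean properties, where $S^+=S^-=\emptyset$, the subgraphs on just the three relevant vertices are tiny, and one checks by hand that e.g.\ the single-edge graph $\{e_1\}$ is (vacuously) transitive / Euclidean, that $\{e_1,e_3\}$ is too (it contains no "composable" pair), and that $\{e_1,e_2,e_3\}$ is exactly the transitive/Euclidean triangle; $E_0$ is the empty graph. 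The connectedness case needs marginally more care because $S^-=\{(z,y)\}$ must be respected: one must confirm that the chosen graphs never contain $(z,y)$, and that connectedness — which only constrains pairs of successors of a common predecessor — is not accidentally violated; restricting attention to graphs whose edges all lie among the three vertices $x,y,z$ and checking the finitely many predecessor-pairs suffices.

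The main obstacle, and the only place where one could slip, is condition~$(ii)$ for connectedness: unlike the other three properties, it carries a nonempty $S^-$, so each of the five witness graphs must simultaneously (a) satisfy connectedness, (b) contain no edge of $S^-$, and (c) hit $\{e_1,e_2,e_3\}$ in exactly the prescribed way, and these three demands interact — in particular $E_{13}$ and $E_{123}$ contain $e_1=(x,y)$ and $e_3=(y,z)$, so one must ensure $y$ has no out-edges that would force a connectedness obligation implicating $(z,y)$. The clean way around this is to keep every witness graph supported on $\{x,y,z\}$ and, wherever connectedness would otherwise demand an edge into the forbidden $(z,y)$, instead add $(y,z)$ (which is the allowed direction and is in any case $e_3$) or simply avoid creating the triggering pair of co-successors at all; a short finite check then closes the case. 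Everything else is routine verification on graphs with at most three vertices, so I would present the witnesses in a compact list and leave these checks to the reader, exactly as the proof of the companion Fact~\ref{fact:contagiousness} does.
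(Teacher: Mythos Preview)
Your approach matches the paper's: same witnesses for transitivity and the Euclidean properties (with $S^+=S^-=\emptyset$), and the same move for connectedness of pushing one disjunct of the consequent into $S^-$ (the paper happens to pick the other disjunct, rewriting connectedness as $[\neg yEz]\imp[(xEy\wedge xEz)\imp zEy]$, but this is immaterial by symmetry).

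One small slip in your richness check: for the right Euclidean property, the graph $\{e_1,e_2,e_3\}=\{(x,y),(x,z),(y,z)\}$ is \emph{not} right Euclidean, since instantiating the defining formula with $y$ and $z$ swapped gives $(xEz\wedge xEy)\imp zEy$, and $(z,y)$ is missing. This does no real damage---Definition~\ref{def:implicative}$(ii)$ only asks that $\{e_1,e_2,e_3\}\subseteq E_{123}$, so your own ``close up minimally'' recipe (here: add $(z,y)$, which lies outside $\{e_1,e_2,e_3\}$ and outside $S^-=\emptyset$) yields a valid $E_{123}$---but the parenthetical ``$\{e_1,e_2,e_3\}$ is exactly the \ldots\ Euclidean triangle'' should be amended. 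The left Euclidean case is symmetric. With that correction, your write-up is a faithful (and more detailed) rendering of the paper's proof sketch.
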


\begin{proof}[Proof (sketch)]
Let $V=\{v_1,v_2,v_3,\ldots\}$.
To see that transitivity satisfies Definition~\ref{def:implicative}, choose $S^+=S^-=\emptyset$, $e_1=(v_1,v_2)$, $e_2=(v_2,v_3)$, and $e_3=(v_1,v_3)$. 
Transitivity implies that, if both $e_1$ and $e_2$ are accepted, then also $e_3$ should be accepted.
All remaining acceptance/rejection patterns of $e_1$, $e_2$, and $e_3$ are possible, in accordance with condition~$(ii$).
The proofs for the Euclidean properties are similar.
Rewriting connectedness as $[\neg yEz] \imp  [(xEy\wedge xEz) \imp zEy]$ shows that it is implicative as well.
\end{proof}

\noindent
Note that implicativeness is a very weak requirement: even transitivity restricted to a single triple of edges is sufficient to satisfy it. Next, we define \emph{disjunctive} graph properties as properties that force us to include at least one of two given edges, as is the case, for instance, for completeness.

%\begin{definition}
%A graph property $P\subseteq 2^{V\times V}\!$ is \textbf{disjunctive} if there exists a set $S\subseteq V\times V$ with $|S|\geq 2$ such that 
%$(i)$~$S\cap E\not=\emptyset$ for all graphs $E\in P$ and
%$(ii)$~there exist two graphs $E_1,E_2\in P$ with 
%$|E_1\cap S| = 1$,
%$|E_2\cap S| = 1$, and
%$E_1\cap S \not= E_2\cap S$.
%\end{definition}
%\noindent
%Intuitively speaking, for $S=\{e_1,\ldots,e_k\}$ part~$(i)$ ensures that all graphs with property $P$ `satisfy' the formula $e_1\vee\cdots\vee e_k$.  
%Part~$(ii)$ is a richness condition ensuring that there are at least two graphs that each include only a single edge from $S$ and that these two edges are not the same.
%Examples for disjunctive properties are completeness (with $S=\{(x,y),(y,x)\}$ for arbitrary $x,y\in V$) and seriality (with $S=\{(x,y)\mid y\in V\}$ for arbitrary $x\in V$).
%Like implicativeness, disjunctiveness is a very weak (and `local') requirement, as it only postulates the existence of one suitable set~$S$. 

\begin{definition}\label{def:disjunctive}
A graph property $P\subseteq 2^{\Edges}$ is called \textbf{disjunctive} if there exist two disjoint sets $S^+\!,S^-\subseteq\Edges$ and two distinct edges $e_1, e_2\in\Edges \setminus (S^+\cup S^-)$ such that the following conditions hold:
\begin{enumerate}
\item[$(i)$] for every graph $E\in\CONFIG$ we have $e_1\in E$ or $e_2\in E$; and
\item[$(ii)$] there exist two graphs $E_1,E_2\in\CONFIG$ with $E_1\cap\{e_1,e_2\}=\{e_1\}$ and $E_2\cap\{e_1,e_2\}=\{e_2\}$.
\end{enumerate}
\end{definition}

\noindent
Part~$(i)$ ensures that all graphs with property $P$ (that meet the precondition of including all edges in $S^+$ and none from $S^-$) satisfy the formula $e_1\vee e_2$.
Part~$(ii)$ is a richness condition ensuring that there are at least two graphs that each include only one of $e_1$ and $e_2$.
Definition~\ref{def:disjunctive} also has an existential form, and it may be paraphrased as the formula $[\bigwedge S^+ \wedge \neg\bigvee S^-] \imp [e_1 \vee e_2]$.

\begin{fact}\label{fact:disjunctiveness}
For $|V|\geq 3$, negative transitivity, connectedness, completeness, %strong completeness, 
nontriviality, and seriality are all disjunctive graph properties.
\end{fact}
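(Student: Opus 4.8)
The plan is to handle each of the five properties separately and, in each case, exhibit explicit disjoint sets $S^+,S^-\subseteq\Edges$ together with two distinct edges $e_1,e_2\notin S^+\cup S^-$ that witness Definition~\ref{def:disjunctive}. Two simple templates cover everything. For a property whose first-order definition already contains a disjunction in the consequent --- \emph{completeness}, \emph{connectedness}, \emph{negative transitivity} --- I would put into $S^+$ exactly the edges needed to fire one relevant instance of that sentence, leave $S^-$ empty, and let $e_1,e_2$ be the two disjuncts, so that condition~$(i)$ is immediate. For the two ``existential'' properties --- \emph{nontriviality} and \emph{seriality} --- the defining sentence carries no disjunction, so instead I would use $S^-$ to forbid all edges except two designated ones (for seriality: all out-edges of a fixed vertex except two), so that the existential demand collapses to ``$e_1$ or $e_2$''; here $S^+$ serves only to keep $\CONFIG$ nonempty and rich enough for condition~$(ii)$.

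Concretely, fix distinct vertices $v_1,v_2,v_3\in V$ (available since $|V|\geq 3$) and an enumeration of $V$. For \emph{completeness}: $S^+=S^-=\emptyset$, $e_1=(v_1,v_2)$, $e_2=(v_2,v_1)$; for~$(ii)$ take the two strict linear orders $E_1=\{(v_i,v_j):i<j\}$ and $E_2=\{(v_i,v_j):i>j\}$. For \emph{negative transitivity}, rewritten as $xEy\imp(xEz\vee zEy)$: $S^+=\{(v_1,v_2)\}$, $S^-=\emptyset$, $e_1=(v_1,v_3)$, $e_2=(v_3,v_2)$; for~$(ii)$ take $E_1=\Edges\setminus\{(v_3,v_2)\}$ and $E_2=\Edges\setminus\{(v_1,v_3)\}$, using the observation that deleting a single edge from the complete graph never destroys negative transitivity, because the two missing edges needed for a violation would have to share a contradictory middle vertex. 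For \emph{connectedness}: $S^+=\{(v_1,v_2),(v_1,v_3)\}\cup\{(v,v):v\in V\}$, $S^-=\emptyset$, $e_1=(v_2,v_3)$, $e_2=(v_3,v_2)$; for~$(ii)$ take $E_i=S^+\cup\{e_i\}$, each of which is connected because the only vertices with two distinct successors are $v_1$ and (in $E_i$) one of $v_2,v_3$, and all the pairs one must check are already related. For \emph{nontriviality}: $S^+=\emptyset$, $e_1=(v_1,v_2)$, $e_2=(v_2,v_1)$, $S^-=\Edges\setminus\{e_1,e_2\}$; any nontrivial graph in $\CONFIG$ is a nonempty subset of $\{e_1,e_2\}$, and $E_i=\{e_i\}$ witnesses~$(ii)$. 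For \emph{seriality}: $e_1=(v_1,v_2)$, $e_2=(v_1,v_3)$, $S^-=\{(v_1,v):v\in V\setminus\{v_2,v_3\}\}$, and $S^+=\{(v,v):v\in V\setminus\{v_1\}\}$; then any serial graph in $\CONFIG$ must give $v_1$ a successor, necessarily $v_2$ or $v_3$, and $E_i=S^+\cup\{e_i\}$ witnesses~$(ii)$. In every case I would also record the routine sanity checks: $S^+\cap S^-=\emptyset$, $e_1\neq e_2$, and $e_1,e_2\notin S^+\cup S^-$, all of which follow at once from $v_1,v_2,v_3$ being distinct.

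The hard part is not condition~$(i)$ but the richness condition~$(ii)$: one must make sure the proposed witness graphs satisfy the property \emph{everywhere}, not merely at the triple of interest. This is why for negative transitivity I take almost-complete graphs, for which the global verification is a one-line contradiction argument, rather than the sparse graphs one might first reach for; and it is why for connectedness I load all self-loops into $S^+$ --- without them, any connected graph containing the non-loop edge $(v_1,v_2)$ would be forced to contain $(v_2,v_2)$ as well, so the naive sparse witnesses would fail to lie in $P$. Once the witness graphs are chosen with this in mind, everything else is bookkeeping.
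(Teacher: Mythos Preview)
Your proof is correct and follows essentially the same approach as the paper: for each property you choose the same $S^+$, $S^-$, $e_1$, $e_2$ (up to relabeling of the distinguished vertices), and your two ``templates'' match the paper's informal treatment. The only substantive difference is that you work harder on the richness condition~$(ii)$, supplying explicit witness graphs where the paper leaves this implicit; your addition of self-loops to $S^+$ for connectedness and seriality is a harmless convenience that makes those witnesses easier to describe, though it is not strictly needed (one could instead just include the required loops directly in the witness graphs, as the paper's bare choice of $S^+$ implicitly expects).
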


\begin{proof}
Let $V=\{v_1,\ldots,v_m\}$. 
For negative transitivity, choose $S^+=\{v_1,v_2\}$, $S^-=\emptyset$, $e_1=(v_1,v_3)$, and $e_2=(v_3,v_2)$
to see that the conditions are satisfied. 
For connectedness, choose $S^+=\{(v_1,v_2),(v_1,v_3)\}$, $S^-=\emptyset$, $e_1=(v_2,v_3)$, and $e_2=(v_3,v_2)$.
For completeness, %and strong completeness, 
choose $S^+=S^-=\emptyset$, $e_1=(v_1,v_2)$, and $e_2=(v_2,v_1)$. 
For nontriviality, choose $S^+=\emptyset$, $S^-=\{(v_i,v_j) : \{i,j\}\not=\{1,2\}\}$, $e_1=(v_1,v_2)$, and $e_2=(v_2,v_1)$.
Finally, for seriality, choose $S^+=\emptyset$, $S^-=\{(v_1,v_1),(v_1,v_2),\ldots,(v_1,v_{m-2})\}$, $e_1=(v_1,v_{m-1})$, and $e_2=(v_1,v_m)$.
\end{proof}

\noindent
Note that some of these results could be strengthened to the case of $|V|=2$, but doing so would not be useful for our purposes here.

%Finally, an \emph{exclusive} graph property is a property that forces as to exclude at least one of two given edges. An example is acyclicity.
%
%\begin{definition}\label{def:exclusive}
%A graph property $P\subseteq 2^{\Edges}$ is called \textbf{exclusive} if there exist two disjoint sets $S^+,S^-\subseteq\Edges$ and two distinct edges $e_1, e_2\in\Edges \setminus (S^+\cup S^-)$ such that the following conditions hold:
%\begin{enumerate}
%\item[$(i)$] for every graph $E\in\CONFIG$ we have $e_1\not\in E$ or $e_2\not\in E$; and
%\item[$(ii)$] there exist two graphs $E_1,E_2\in\CONFIG$ with $E_1\cap\{e_1,e_2\}=\{e_1\}$ and $E_2\cap\{e_1,e_2\}=\{e_2\}$.
%\end{enumerate}
%\end{definition}
%
%\noindent
%Thus, Definition~\ref{def:exclusive} corresponds to the formula $[\bigwedge S^+ \wedge \neg\bigvee S^-] \imp [\neg e_1 \vee \neg e_2]$. %The richness condituion~$(ii)$ requires the existence of two graphs that each include one of the two special edges.
%
%\begin{fact}
%For $|V|\geq 2$, acyclicity is an exclusive graph property.
%\end{fact}
%
%\begin{proof}
%Let $V=\{v_1,\ldots,v_m\}$ and choose $S^+=\emptyset$, $S^-=\emptyset$, $e_1=(v_1,v_2)$, and $e_2=(v_2,v_1)$.
%\end{proof}

%%%%%%%%%%%%%%%%%%%%%%%%%%%%%%%%%%%%%%%%%%%%%%%%%%%%%%%%%%%%%%%%%%%%%%%%%%%%%%%%
\subsection{Two General Impossibility Theorems for Graph Aggregation}
\label{sec:general-theorems}
%%%%%%%%%%%%%%%%%%%%%%%%%%%%%%%%%%%%%%%%%%%%%%%%%%%%%%%%%%%%%%%%%%%%%%%%%%%%%%%%

\noindent
We are now ready to present our impossibility results. We will prove two main theorems. What they have in common is that they talk about Arrovian aggregation rules~$F$ that are collectively rational with respect to a graph property~$P$ that is contagious and implicative. % (and thus neutral, by Lemma~\ref{lem:neutrality}). 
%They differ in the additional assumptions made. 
For the first theorem, %we also assume that $F$ is monotonic, and 
we will show that under these assumptions $F$ must be oligarchic (at least as far as nonreflexive edges are concerned). For the second theorem, we also assume that $P$ is disjunctive, and show that then $F$ must be dictatorial (at least on nonreflexive edges). 

%Arrovian + CR for exclusive:
%intersection of all winning coalitions over all edges (need not be same) is set of agnets who can veto every edge
%can use CR for exclusiveness to show that this intersection is nonempty: how? ... not really, only between certain edges
%do not need monotonicity

\begin{theorem}[Oligarchy Theorem]\label{thm:oligarchy}
For $|V|\geq 3$, any unanimous, grounded, %monotonic, 
and IIE aggregation rule~$F$ that is collectively rational with respect to a graph property~$P$ that is contagious and implicative must be oligarchic on nonreflexive edges.
\end{theorem}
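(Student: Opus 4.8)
The plan is to reduce the theorem to a statement about the family of winning coalitions of $F$ on nonreflexive edges, and then invoke the Filter Lemma (Lemma~\ref{lem:oligarchy-filter}). Since $P$ is contagious and $F$ is unanimous, grounded, and IIE with $|V|\geq 3$, the Neutrality Lemma (Lemma~\ref{lem:neutrality}) gives that $F$ is NR-neutral, so there is a single set $\mathcal{W}\subseteq 2^\N$ with $(x,y)\in F(\prof{E})\Leftrightarrow N^\prof{E}_{(x,y)}\in\mathcal{W}$ for every nonreflexive edge $(x,y)$. It then suffices to verify that $\mathcal{W}$ satisfies the three conditions of Definition~\ref{def:filter}. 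For this I would fix once and for all the witnesses supplied by implicativeness: disjoint sets $S^+,S^-$, three distinct edges $e_1,e_2,e_3\notin S^+\cup S^-$, and graphs $E_0,E_1,E_2,E_{13},E_{123}\in\CONFIG$ as in Definition~\ref{def:implicative}. (One also wants these witness edges to be nonreflexive, so that $\mathcal{W}$ governs them; this is the case for all the concrete properties to which the theorem will be applied, and I would note it here.) Condition~$(i)$, $\emptyset\notin\mathcal{W}$, is immediate from groundedness, exactly as observed in the bulleted list preceding Definition~\ref{def:filter}.

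For condition~$(ii)$, closure under intersection, take $C_1,C_2\in\mathcal{W}$ and build the profile in which every $i\in C_1\cap C_2$ submits $E_{123}$, every $i\in C_1\setminus C_2$ submits $E_1$, every $i\in C_2\setminus C_1$ submits $E_2$, and every $i\notin C_1\cup C_2$ submits $E_0$. Then all individual graphs lie in $\CONFIG\subseteq P$, edge $e_1$ is accepted by exactly $C_1$, edge $e_2$ by exactly $C_2$, and edge $e_3$ by exactly $C_1\cap C_2$. Unanimity forces $S^+\subseteq F(\prof{E})$ and groundedness forces $S^-\cap F(\prof{E})=\emptyset$, so by collective rationality $F(\prof{E})\in\CONFIG$; since $C_1,C_2\in\mathcal{W}$ we get $e_1,e_2\in F(\prof{E})$, whence clause~$(i)$ of implicativeness yields $e_3\in F(\prof{E})$, i.e.\ $C_1\cap C_2\in\mathcal{W}$. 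This is the familiar Arrow-style step.

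The delicate part is condition~$(iii)$, closure under supersets, and it is precisely here that the strengthened form of implicativeness (the extra graph $E_{13}$, which accepts the consequent $e_3$ together with only the first antecedent $e_1$) is needed. Given $C\in\mathcal{W}$ and $D\supseteq C$, I would use the profile in which every $i\in C$ submits $E_{123}$, every $i\in D\setminus C$ submits $E_{13}$, and every $i\notin D$ submits $E_1$; here $e_1$ is accepted by all of $\N$, $e_2$ by exactly $C$, and $e_3$ by exactly $D$. Unanimity gives $e_1\in F(\prof{E})$ and $S^+\subseteq F(\prof{E})$, groundedness gives $S^-\cap F(\prof{E})=\emptyset$, collective rationality gives $F(\prof{E})\in\CONFIG$, and $C\in\mathcal{W}$ gives $e_2\in F(\prof{E})$; clause~$(i)$ of implicativeness then forces $e_3\in F(\prof{E})$, so $D\in\mathcal{W}$. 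Having verified all three conditions, $\mathcal{W}$ is a filter, and the Filter Lemma concludes that $F$ is oligarchic on nonreflexive edges. The main obstacle, as the argument shows, is finding the profile for closure under supersets that lets the one-directional implication $e_1\wedge e_2\imp e_3$ do the work: a naive construction only re-derives $C\in\mathcal{W}$, whereas forcing $e_3$ to be accepted over the full set $D$ while keeping $e_2$'s supporters pinned to $C$ requires exactly a graph of the type $E_{13}$.
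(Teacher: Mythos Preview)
Your proposal is correct and matches the paper's proof essentially step for step: invoke the Neutrality Lemma to obtain a single $\mathcal{W}$, verify the three filter conditions using exactly the same two profiles built from $E_0,E_1,E_2,E_{13},E_{123}$, and finish via the Filter Lemma. Your parenthetical remark that the witness edges $e_1,e_2,e_3$ should be nonreflexive (so that $\mathcal{W}$ indeed governs them) is a point the paper glosses over, so you are being slightly more careful than the original.
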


\begin{proof}
Take any graph property~$P$ that is contagious and implicative, 
and any aggregation rule~$F$ that is unanimous, grounded, %monotonic, 
IIE, and collectively rational with respect to~$P$.
By Lemma~\ref{lem:neutrality}, $F$ must be NR-neutral.
Hence, there exists a set of winning coalitions $\mathcal{W}\subseteq 2^\N$ determining $F$ in the sense that $e\in F(\prof{E}) \Leftrightarrow N^{\prof{E}}_e\in\mathcal{W}$ for any nonreflexive edge~$e$. 

We shall prove that $\mathcal{W}$ is a filter (see Definition~\ref{def:filter}), from which the theorem then follows by Lemma~\ref{lem:oligarchy-filter}.
Condition~$(i)$ holds, as $F$ is grounded. %; and condition~$(iii)$ holds, as $F$ is monotonic.
So we still need to show that $\mathcal{W}$ satisfies condition~$(ii)$, i.e., that it is closed under intersection, and condition~$(iii)$, i.e., that it is closed under supersets.
To do so, we will make use of the assumption that $P$ is implicative.
Let $S^+\!,S^-\subseteq\Edges$ and $e_1,e_2,e_3\in \Edges$; and let $E_0,E_1,E_2,E_{13},E_{123}\in\CONFIG$ be defined as in Definition~\ref{def:implicative}. 

First, take any two winning coalitions $C_1,C_2\in\mathcal{W}$.
Consider a profile of graphs $\prof{E}$ satisfying $P$ in which exactly the individuals in $C_1\cap C_2$ propose $E_{123}$, those in $C_1\setminus\! C_2$ propose $E_1$, those in $C_2\setminus\! C_1$ propose $E_2$, and all others propose $E_0$. Thus, exactly the individuals in $C_1$ accept $e_1$, exactly those in $C_2$ accept $e_2$, and exactly those in $C_1\cap C_2$ accept $e_3$. Furthermore, all individuals accept $S^+$ and all of them reject $S^-\!$.
Hence, due to unanimity, all edges in $S^+$ must be part of the collective graph $F(\prof{E})$, while due to groundedness, none of the edges in $S^-$ can be part of $F(\prof{E})$.
As $F$ is collectively rational with respect to $P$, we get $F(\prof{E})\in\CONFIG$. 
Now, since $C_1$ and $C_2$ are winning coalitions, $e_1$ and $e_2$ must be part of $F(\prof{E})$.
As $P$ is implicative, this means that $e_3\in F(\prof{E})$. 
Hence, we must have $C_1\cap C_2\in\mathcal{W}$, i.e., $\mathcal{W}$ is closed under intersection.

Now, take any winning coalition $C_1\in\mathcal{W}$ and any other coalition $C_2$ with $C_1\subseteq C_2$.
Consider a profile of graphs $\prof{E}$ satisfying $P$ in which the individuals in $C_1$ propose $E_{123}$, those in $C_2\setminus\! C_1$ propose $E_{13}$, and those in $\N\setminus\! C_2$ propose $E_1$. In other words, the coalition of supporters of $e_1$ is $\N$, the coalition of supporters of $e_2$ is $C_1$, the coalition of supporters of $e_3$ is $C_2$, all individuals accept $S^+$, and all of them also reject $S^-$.
Due to unanimity and as $C_1\in\mathcal{W}$, $e_1$ and $e_2$ will be part of the collective graph~$F(\prof{E})$. As $F$ is collectively rational with respect to $P$, we thus also get $e_3\in F(\prof{E})$. 
Hence, as $e_3$ was supported by $C_2$, it must be the case that $C_2\in\mathcal{W}$, i.e., $\mathcal{W}$ is closed under supersets.
\end{proof}

\noindent
If the graph property to be preserved under aggregation also is required to be disjunctive, we can further tighten this impossibility result and obtain a dictatorship. 
%Monotonicity turns out to become redundant in this case, so is not listed amongst our assumptions in the next theorem. 
The proof is very similar to that of Theorem~\ref{thm:oligarchy}, the only added difficulty being that of proving maximality of the filter from collective rationality with respect to a disjunctive graph property.

\begin{theorem}[Dictatorship Theorem]\label{thm:dictator}
For $|V|\geq 3$, any unanimous, grounded, and IIE aggregation rule~$F$ that is collectively rational with respect to a graph property~$P$ that is contagious, implicative, and disjunctive must be dictatorial on nonreflexive edges.
\end{theorem}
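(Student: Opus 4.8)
The plan is to build directly on the proof of the Oligarchy Theorem. Given an Arrovian aggregation rule~$F$ that is collectively rational with respect to a graph property~$P$ that is contagious, implicative, and disjunctive, Lemma~\ref{lem:neutrality} already gives us NR-neutrality, hence a single family~$\mathcal{W}\subseteq 2^\N$ of winning coalitions for nonreflexive edges with $e\in F(\prof{E})\Leftrightarrow N^\prof{E}_e\in\mathcal{W}$. Theorem~\ref{thm:oligarchy} (using contagiousness and implicativeness) tells us that $\mathcal{W}$ is a filter. By Lemma~\ref{lem:dictator-ultrafilter}, it therefore suffices to upgrade this filter to an \emph{ultrafilter}, i.e., to show that $\mathcal{W}$ additionally satisfies the maximality condition: for every $C\subseteq\N$, either $C\in\mathcal{W}$ or $\N\setminus C\in\mathcal{W}$. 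This is the one extra ingredient beyond Theorem~\ref{thm:oligarchy}, and it is exactly where disjunctiveness of~$P$ enters.

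For maximality, I would fix an arbitrary coalition $C\subseteq\N$ and use the disjunctive structure of~$P$. Let $S^+,S^-\subseteq\Edges$ and two distinct edges $e_1,e_2\in\Edges\setminus(S^+\cup S^-)$, together with graphs $E_1,E_2\in\CONFIG$ with $E_i\cap\{e_1,e_2\}=\{e_i\}$, be as in Definition~\ref{def:disjunctive}. Now construct a profile $\prof{E}$ in which exactly the individuals in~$C$ propose~$E_1$ and all the others propose~$E_2$. Then the coalition of supporters of~$e_1$ is precisely~$C$, the coalition of supporters of~$e_2$ is precisely~$\N\setminus C$, every individual accepts all edges in~$S^+$, and no individual accepts any edge in~$S^-$. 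By unanimity $S^+\subseteq F(\prof{E})$, by groundedness $S^-\cap F(\prof{E})=\emptyset$, and by collective rationality $F(\prof{E})\in\CONFIG$. Since $P$ is disjunctive, part~$(i)$ of Definition~\ref{def:disjunctive} forces $e_1\in F(\prof{E})$ or $e_2\in F(\prof{E})$, which means $C\in\mathcal{W}_{e_1}$ or $\N\setminus C\in\mathcal{W}_{e_2}$. Because $F$ is NR-neutral (so $\mathcal{W}_{e_1}=\mathcal{W}_{e_2}=\mathcal{W}$, provided $e_1,e_2$ are nonreflexive), this gives $C\in\mathcal{W}$ or $\N\setminus C\in\mathcal{W}$, which is maximality.

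One technical point I would need to handle carefully is that Definition~\ref{def:disjunctive} does not a priori guarantee that $e_1$ and $e_2$ are nonreflexive edges, whereas the family~$\mathcal{W}$ delivered by the Neutrality Lemma only controls nonreflexive edges. So the main (minor) obstacle is a bookkeeping one: I would either check that for the disjunctive properties of interest one may always choose $e_1,e_2$ nonreflexive (as the proof of Fact~\ref{fact:disjunctiveness} in fact does), or, more cleanly, note that contagiousness already lets us propagate winning-coalition membership from any edge to the nonreflexive ones, so $C\in\mathcal{W}_{e_1}$ still implies $C\in\mathcal{W}$. Once maximality is in hand, $\mathcal{W}$ is a filter satisfying the ultrafilter maximality condition, hence an ultrafilter, and Lemma~\ref{lem:dictator-ultrafilter} yields that $F$ is NR-dictatorial, completing the proof.
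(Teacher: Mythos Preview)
Your proposal is correct and follows essentially the same route as the paper: invoke the Neutrality Lemma to obtain a single family $\mathcal{W}$ of winning coalitions for nonreflexive edges, use implicativeness (as in the Oligarchy Theorem) for the filter conditions, use disjunctiveness via the same two-block profile to obtain maximality, and then apply the Ultrafilter Lemma. The nonreflexivity issue you flag for $e_1,e_2$ is not addressed in the paper's own proof either; your first fix (choosing $e_1,e_2$ nonreflexive, as is done in all the concrete instances in Fact~\ref{fact:disjunctiveness}) is the right one, whereas your second fix via contagiousness is less clear since Definition~\ref{def:contagious} only constrains triples of \emph{distinct} vertices and hence says nothing about reflexive edges.
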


\begin{proof}
Take any graph property~$P$ that is contagious, implicative, and disjunctive,
and any aggregation rule~$F$ that is unanimous, grounded, and IIE, and collectively rational with respect to~$P$.
By Lemma~\ref{lem:neutrality}, $F$ must be NR-neutral, i.e., on nonreflexive edges, $F$ must be determined by a single family~$\mathcal{W}$ of winning coalitions. 
We shall prove that the $\mathcal{W}$ is an ultrafilter (see Definition~\ref{def:ultrafilter}), from which the theorem then follows by Lemma~\ref{lem:dictator-ultrafilter}.
Condition~$(i)$ holds, as $F$ is grounded. Condition~$(ii)$ follows from $P$ being implicative and can be proved exactly as for Theorem~\ref{thm:oligarchy}.

To derive condition~$(iii)$, we will make use of the assumption that $P$ is disjunctive. 
Let $S^+\!,S^-\subseteq\Edges$ and $e_1,e_2\in\Edges$; and let $E_1,E_2\in\CONFIG$ be defined as in Definition~\ref{def:disjunctive}.
Now take any winning coalition $C\in\mathcal{W}$.
Consider a profile $\prof{E}$ satisfying $P$ in which exactly the individuals in $C$ propose $E_1$ and exactly those in $\N\setminus\! C$ propose $E_2$.
Recall that $S^+\subseteq E_1$ and $S^+\subseteq E_2$, i.e., all individuals accept $S^+\!$. Thus, due to unanimity, all of the edges in $S^+$ must be part of the collective graph $F(\prof{E})$.
Analogously, due to groundedness, none of the edges in $S^-$ can be part of $F(\prof{E})$. 
Thus, as $F$ is collectively rational with respect to $P$, we get $F(\prof{E}) \in \CONFIG$. As $P$ is disjunctive, this means that one of $e_1$ and $e_2$ has to be part of $F(\prof{E})$.
Hence, $C\in\mathcal{W}$ or $(\N\setminus\! C)\in\mathcal{W}$.
\end{proof}

%The properties of contagiousness and implicativeness have a similar structure. They both speak about acceptance of two particular edges implying the acceptance of a third. Contagiousness is more demanding in that it applies to \emph{all} triples of edges meeting certain conditions, while implicativeness only postulates the existence of one such triple. On the other hand, implicativeness comes with a slightly more demanding richness condition. In practice, most natural graph properties will either be both contagious and implicative, or they will be neither. This certainly is the case for all of the standard properties listed in Table~\ref{tab:graph-properties}. To account for this empirical fact, we introduce some further terminology and define a graph property to be \emph{strongly implicative} if it is both contagious and implicative.

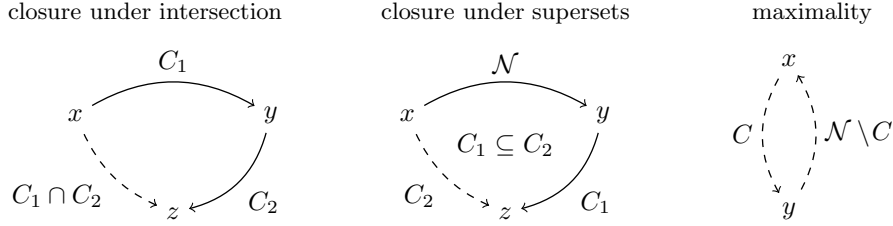
\begin{figure}[t]
\centering
\begin{tabular}{ccccc}
{\small closure under intersection} && {\small closure under supersets} && {\small maximality} \\[5pt]
\begin{tikzpicture}[auto]
  \node          (a)          {$x$};
  \node          (b) at (2.6,0) {$y$};
  \node          (c) at (1.3,-1.3) {$z$};
  \path[->] (a) edge [->, line width=0.5pt,bend left=30]        node       {$C_1$} (b)
            (b) edge [->, line width=0.5pt,bend left=30]        node        {$C_2$} (c) 
            (c) edge [<-,dashed,bend left=20, line width=0.5pt, dashed] node {$C_1\cap C_2$} (a);
\end{tikzpicture}
& \hspace{0.4cm} &
\begin{tikzpicture}[auto]
  \node          (middle) at (1.3,-0.4) {$C_1\subseteq C_2$};
  \node          (a)          {$x$};
  \node          (b) at (2.6,0) {$y$};
  \node          (c) at (1.3,-1.3) {$z$};
  \path[->] (a) edge [->, line width=0.5pt,bend left=30]        node       {$\N$} (b)
            (b) edge [->, line width=0.5pt,bend left=30]        node        {$C_1$} (c) 
            (c) edge [<-,dashed,bend left=20, line width=0.5pt, dashed] node {$C_2$} (a);
\end{tikzpicture}
& \hspace{0.4cm} &
\begin{tikzpicture}[auto]
  \node          (a)        {$x$};
  \node          (b) at (0,-2) {$y$};  
  \path[->] (b) edge [<-, dashed, line width=0.5pt,bend left=30]  node  {$C$} (a)
            (a) edge [<-, dashed, line width=0.5pt,bend left=30]  node  {$\N\setminus\! C$} (b);          
\end{tikzpicture}
\end{tabular}
\caption{Using collective rationality with respect to transitvity and completeness.\label{fig:ultrafilter}}
\end{figure}

\noindent
It may be helpful to illustrate the main arguments in the proofs of Theorem~\ref{thm:oligarchy} and~\ref{thm:dictator} by instantiating them for specific graph properties rather than generic meta-properties. 
For instance, we can derive closure of intersection of $\mathcal{W}$ by using collective rationality with respect to transitivity, which by Fact~\ref{fact:implicativeness} is an implicative property.
Consider the profile depicted on the left in Figure~\ref{fig:ultrafilter}, in which exactly the individuals in $C_1$ accept edge $e_1=(x,y)$, exactly those in $C_2$ accept $e_2=(y,z)$, and exactly those in $C_1\cap C_2$ accept $e_3=(x,z)$. As both $C_1$ and $C_2$ are winning coalitions, we obtain that both $(x,y)$ and $(y,z)$ need to be collectively accepted. We can now conclude, since $F$ is collectively rational with respect to transitivity, that the edge $(x,z)$ should also be accepted. Hence, the coalition accepting $(x,z)$, which is $C_1\cap C_2$, must be a winning coalition as well.
Similarly, we can obtain closure under supersets from collective rationality with respect to transitivity using the profile shown in the middle of Figure~\ref{fig:ultrafilter}. Here, all individuals accept $e_1=(x,y)$, those in $C_1$ accept $e_2=(y,z)$, and those in $C_2$, which is a superset of $C_1$, accept $e_3=(x,z)$. As both $\N$ and $C_1$ are winning coalitions, both $(x,y)$ and $(y,z)$ get accepted. Thus, as $F$ is collectively rational with respect to transitivity, so does $(x,z)$. Hence, $C_2$, the coalition of supportes of $(x,z)$, must also be winning. 
Finally, we can prove maximality of $\mathcal{W}$ by using collective rationality with respect to, say, completeness, which by Fact~\ref{fact:disjunctiveness} is a disjunctive property. 
Consider the profile on the righthand side of Figure~\ref{fig:ultrafilter}, in which exactly the individuals in $C$ accept $e_1=(x,y)$ and exactly those in $\N\setminus\! C$ accept $e_2=(y,x)$. As $F$ is collectively rational with respect to completeness, one of the two edges has to get accepted in the outcome, i.e., one of the two coalitions accepting these two edges must be winning, meaning that either $C\in\mathcal{W}$ or $(\N\setminus\! C) \in \mathcal{W}$.

%At this point, the reader may wonder whether the monotonicity axiom really is required to obtain Theorem~\ref{thm:oligarchy}, or whether it may not be possible to instead derive closure under supersets using collective rationality. The following counterexample shows that monotonicity is required. Suppose we want to aggregate a profile of transitive graphs into a single transitive graph. Then the rule that accepts an edge if and only if either all agents accept it or only agent~1 does meets all the requirements of  Theorem~\ref{thm:oligarchy}, except for monotonicity, and it indeed is not a rule that conforms to our technical definition of an oligarchy (even if, conceptually, it of course is very similar).

Observe that the converse of Theorem~\ref{thm:dictator} holds as well: any dictatorship is unanimous, grounded, IIE, and collectively rational with respect to any graph property (and certainly with respect to those that are contagious, implicative, and disjunctive).\footnote{The same is not true for Theorem~\ref{thm:oligarchy}: it is not the case that every oligarchy is collectively rational with respect to every contagious and implicative graph property. The reason is that not every contagious and implicative graph property is closed under intersection, although many concrete such properties (e.g., transitivity) are. For example, the intersection rule does not preserve connectedness (which we have seen to be both contagious and implicative): if agent~1 provides the connected graph $\{(x,y),(x,z),(y,z)\}$ and agent~2 provides the connected graph $\{(x,y),(x,z),(z,y)\}$, then their intersection $\{(x,y),(x,z)\}$ nevertheless fails to be connected.} Thus, an alternative reading of Theorem~\ref{thm:dictator} is as that of a family of characterisation theorems of the dictatorships (with one characterisation for every $P$ that is contagious and implicative).

Our Theorem~\ref{thm:dictator} is related to generalisations of Arrow's Theorem to judgment aggregation~\cite{DietrichListSCW2007,DokowHolzmanJET2010}, particularly in the formulation due to \citet{DokowHolzmanJET2010}, who model sets of judgments (on $m$ issues) as binary vectors in some subspace of~$\{0,1\}^m$. It is possible to embed graph aggregation into this form of judgment aggregation, by adapting the well-known approach for embedding preference aggregation into judgment aggregation~\cite{DietrichListSCW2007,DokowHolzmanJET2010,GrandiEndrissAIJ2013}. This suggests that it should also be possible to derive Theorem~\ref{thm:dictator} as a special case of the main result of \citeauthor{DokowHolzmanJET2010}, which would involve showing that graph properties that are contagious, implicative, and disjunctive can be mapped into subspaces of $\{0,1\}^m$ (with $m=|\Edges|$) that, in the terminology of \citeauthor{DokowHolzmanJET2010}, are \emph{totally blocked} and \emph{not affine}.\footnote{Note that both \citet{DokowHolzmanJET2010} and \citet{DietrichListSCW2007} in fact prove \emph{characterisation results} (in a different sense of that word than we have used in Section~\ref{sec:axioms})  that have both an impossibility and a possibility component. To use our terminology, they formulate meta-properties that are such that, whenever they are met, then nondictatorial aggregation is impossible, while whenever they are not met, nondictatorial aggregation is possible. We do not consider this second direction here. The reason is that, rather than proving theorems of maximal logical strength, we are interested in theorems that are easy to apply. That this is the case for our choice of meta-properties will be demonstrated in Section~\ref{sec:applications}.} While we conjecture this to be possible in principle, we also conjecture any such proof to be at least as technically involved as our proof given here and certainly much less valuable from a ``didactic'' point of view. Indeed, our proof arguably is easier and clearer than both the proofs for the corresponding result in the more specific domain of preference aggregation (i.e., Arrow's Theorem)\footnote{This is true for proofs of Arrow's Theorem using the ultrafilter method, which is a refinement of the ``decisive coalition method'' going back to Arrow's original work~\cite{Arrow1963}. There are, however, other proofs available that exploit the specific structure of preferences, and thus do not generalise to, e.g., judgment aggregation, which some readers will find more accessible~\cite{GeanakoplosET2005}.} and the proofs for the corresponding results in the more general domain of judgment aggregation (i.e., the result of of \citet{DokowHolzmanJET2010} and its variant due to \citet{DietrichListSCW2007}). The reason is that our meta-properties encode directly what we require in the proof steps where they are used.

%%%%%%%%%%%%%%%%%%%%%%%%%%%%%%%%%%%%%%%%%%%%%%%%%%%%%%%%%%%%%%%%%%%%%%%%%%%%%%%%
\subsection{Variants and Instances of the General Impossibility Theorems}
\label{sec:variants}
%%%%%%%%%%%%%%%%%%%%%%%%%%%%%%%%%%%%%%%%%%%%%%%%%%%%%%%%%%%%%%%%%%%%%%%%%%%%%%%%

\noindent
In the remainder of this section, we shall briefly discuss the implications of of our general impossibility theorems for specific classes of graphs, particularly those that satisfy some of the properties of Table~\ref{tab:graph-properties}. We keep this discussion largely abstract; concrete applications will be discussed in Section~\ref{sec:applications}. But first let us consider a number of variants of our theorems and mention additional assumptions that would allow us to remove the technical constraint on nonreflexive edges in Theorems~\ref{thm:oligarchy} and~\ref{thm:dictator}, and to instead derive results on full dictatorships and full oligarchies, respectively.

First, note that if we remove the requirement of $P$ being contagious but add the assumption of $F$ being NR-neutral to Theorems~\ref{thm:oligarchy} and~\ref{thm:dictator}, we can still derive the same conclusions (namely, $F$ being NR-oligarchic or NR-dictatorial, respectively). If we impose full neutrality rather than just NR-neutrality, these conclusions can be strengthened to $F$ being fully oligarchic or dictatorial, respectively. For ease of reference, we state these variants here explicitly: 

\begin{theorem}\label{thm:neutral-oligarchy}
For $|V|\geq 3$, any unanimous, grounded, %monotonic, 
IIE, and (NR-)neutral  aggregation rule~$F$ that is collectively rational with respect to a graph property~$P$ that is implicative must be (NR-)oligarchic.
\end{theorem}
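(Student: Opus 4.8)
The plan is to follow the proof of Theorem~\ref{thm:oligarchy} almost verbatim, simply dropping the one place where contagiousness was used. Recall that in Theorem~\ref{thm:oligarchy} the role of contagiousness was exactly to invoke the Neutrality Lemma (Lemma~\ref{lem:neutrality}) and thereby obtain a single family $\mathcal{W}$ of winning coalitions describing $F$ on all nonreflexive edges. Here we instead \emph{assume} NR-neutrality (or full neutrality) as a hypothesis, so that family $\mathcal{W}$ is available directly from the observation in Section~\ref{sec:winning-coalitions} that an IIE rule is described by a family $\{\mathcal{W}_e\}_e$ of winning coalitions, and that (NR-)neutrality collapses this to a single $\mathcal{W}$ valid for all (nonreflexive) edges. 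Thus the first step is just to set up $\mathcal{W}$ and note that groundedness gives $\emptyset\notin\mathcal{W}$, i.e. filter condition~$(i)$.

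The second and main step is to show that $\mathcal{W}$ is closed under intersection and under supersets, i.e. filter conditions $(ii)$ and $(iii)$ of Definition~\ref{def:filter}. This is precisely what the implicativeness of $P$ buys us, and the argument is identical to the corresponding part of the proof of Theorem~\ref{thm:oligarchy}: fix $S^+,S^-$ and $e_1,e_2,e_3$ together with the witness graphs $E_0,E_1,E_2,E_{13},E_{123}\in\CONFIG$ from Definition~\ref{def:implicative}. For closure under intersection, given $C_1,C_2\in\mathcal{W}$ build the profile in which $C_1\cap C_2$ submits $E_{123}$, $C_1\setminus C_2$ submits $E_1$, $C_2\setminus C_1$ submits $E_2$, and everyone else submits $E_0$; unanimity and groundedness force $S^+\subseteq F(\prof{E})$ and $S^-\cap F(\prof{E})=\emptyset$, collective rationality gives $F(\prof{E})\in\CONFIG$, the winning-coalition property gives $e_1,e_2\in F(\prof{E})$, and then implicativeness forces $e_3\in F(\prof{E})$, so $C_1\cap C_2\in\mathcal{W}$. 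For closure under supersets, given $C_1\in\mathcal{W}$ and $C_1\subseteq C_2$, use the profile where $C_1$ submits $E_{123}$, $C_2\setminus C_1$ submits $E_{13}$, and $\N\setminus C_2$ submits $E_1$; then $e_1$ is supported by $\N$, $e_2$ by $C_1$, $e_3$ by $C_2$, and the same chain of reasoning yields $e_3\in F(\prof{E})$, hence $C_2\in\mathcal{W}$. One should emphasise here (as the footnote to Definition~\ref{def:implicative} notes) that the existence of the graph $E_{13}$ is exactly what makes the supersets argument go through.

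The third and final step is to read off the conclusion. By Definition~\ref{def:filter}, $\mathcal{W}$ is a filter, so the Filter Lemma (Lemma~\ref{lem:oligarchy-filter}) applies and $F$ is NR-oligarchic. If the stronger hypothesis of full neutrality was assumed, then the same family $\mathcal{W}$ governs \emph{all} edges (reflexive ones included), so $F(\prof{E}) = \bigcap_{i\in C^\star}E_i$ on every edge, i.e. $F$ is the oligarchy $F_{C^\star}$ with $C^\star=\bigcap_{C\in\mathcal{W}}C$, giving the parenthetical ``fully oligarchic'' version.

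I do not expect any real obstacle: this theorem is strictly a bookkeeping variant of Theorem~\ref{thm:oligarchy}, obtained by trading the derived neutrality (via contagiousness and Lemma~\ref{lem:neutrality}) for assumed neutrality. The only point requiring a little care is making sure that, in the full-neutrality case, the filter $\mathcal{W}$ really does control reflexive edges too, so that the oligarchy is genuinely full rather than merely NR-oligarchic — but this is immediate from the definition of (full) neutrality, under which $\mathcal{W}_e=\mathcal{W}_{e'}$ for \emph{all} pairs of edges. Since the proof is so close to that of Theorem~\ref{thm:oligarchy}, in the write-up it would suffice to say ``the proof is identical to that of Theorem~\ref{thm:oligarchy}, except that NR-neutrality is now assumed rather than derived via Lemma~\ref{lem:neutrality}; the strengthening to full oligarchy under full neutrality is immediate since then the single family $\mathcal{W}$ of winning coalitions governs all edges.''
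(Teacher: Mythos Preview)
Your proposal is correct and is exactly the approach the paper takes: Theorem~\ref{thm:neutral-oligarchy} is stated without its own proof precisely because, as you identify, it is the proof of Theorem~\ref{thm:oligarchy} with (NR-)neutrality assumed rather than derived via Lemma~\ref{lem:neutrality}. Your handling of the full-neutrality case---observing that then the single family $\mathcal{W}$ governs reflexive edges too, so the Filter Lemma's conclusion upgrades to a full oligarchy---is also exactly the point the paper is making in the sentence introducing these variants.
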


\begin{theorem}\label{thm:neutral-dictator}
For $|V|\geq 3$, any unanimous, grounded, IIE, and (NR-)neutral  aggregation rule~$F$ that is collectively rational with respect to a graph property~$P$ that is implicative and disjunctive must be (NR-)dictatorial.
\end{theorem}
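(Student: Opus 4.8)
The plan is to mimic the proof of Theorem~\ref{thm:dictator}, but to observe that the single place where contagiousness of $P$ was used can be replaced by the explicitly assumed (NR-)neutrality of $F$. Recall that in Theorem~\ref{thm:dictator} contagiousness was invoked solely through Lemma~\ref{lem:neutrality} (the Neutrality Lemma), whose only purpose was to guarantee that on nonreflexive edges $F$ can be described by a single family $\mathcal{W}\subseteq 2^{\N}$ of winning coalitions, i.e.\ $(x,y)\in F(\prof{E}) \Leftrightarrow N^{\prof{E}}_{(x,y)}\in\mathcal{W}$ for all $x\neq y$. If we instead assume $F$ to be NR-neutral outright, this conclusion is immediate: by IIE, each nonreflexive edge $e$ has its own set $\mathcal{W}_e$, and NR-neutrality forces $\mathcal{W}_e=\mathcal{W}_{e'}$ for all nonreflexive $e,e'$, so a single $\mathcal{W}$ suffices. (If we assume full neutrality, the same $\mathcal{W}$ governs reflexive edges too, which is what lets us upgrade the conclusion from NR-dictatorial to dictatorial.)

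From that point the argument is verbatim the proof of Theorem~\ref{thm:dictator}. First I would show $\mathcal{W}$ is a filter: condition~$(i)$, $\emptyset\notin\mathcal{W}$, is groundedness; closure under intersection and closure under supersets follow from $P$ being implicative, via the two profile constructions used in the proof of Theorem~\ref{thm:oligarchy} (take $E_0,E_1,E_2,E_{13},E_{123}\in\CONFIG$ from Definition~\ref{def:implicative}, build the profiles realising the prescribed coalitions of supporters for $e_1,e_2,e_3$, invoke unanimity and groundedness to pin down the edges in $S^+$ and $S^-$, then use collective rationality with respect to $P$ and the implication $e_1\wedge e_2\imp e_3$ to conclude that the relevant coalition is winning). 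Then I would add maximality: using that $P$ is disjunctive, fix $e_1,e_2$ and $E_1,E_2\in\CONFIG$ as in Definition~\ref{def:disjunctive}, and for an arbitrary $C\subseteq\N$ consider the profile where exactly the agents in $C$ submit $E_1$ and the rest submit $E_2$; unanimity and groundedness handle $S^+$ and $S^-$, collective rationality gives $F(\prof{E})\in\CONFIG$, and disjunctiveness forces $e_1\in F(\prof{E})$ or $e_2\in F(\prof{E})$, i.e.\ $C\in\mathcal{W}$ or $\N\setminus C\in\mathcal{W}$. Hence $\mathcal{W}$ is an ultrafilter. Finally, apply Lemma~\ref{lem:dictator-ultrafilter} to conclude $F$ is NR-dictatorial; in the full-neutrality case the dictator's power extends to reflexive edges as well, giving a full dictatorship.

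There is essentially no genuine obstacle here beyond bookkeeping: the content of the theorem is that assuming NR-neutrality directly is at least as strong, for the purposes of the proof, as deriving it from contagiousness plus the Arrovian axioms. The only point requiring a word of care is making sure the winning-coalition sets $\mathcal{W}_e$ supplied by IIE are genuinely well-defined (Section~\ref{sec:winning-coalitions}) and that NR-neutrality collapses them to one common $\mathcal{W}$ on nonreflexive edges, while full neutrality does so on all edges --- this is the only place the parenthetical ``(NR-)'' in the statement does any work. Everything downstream is an unchanged replay of the filter/ultrafilter verifications already carried out for Theorems~\ref{thm:oligarchy} and~\ref{thm:dictator}, so the proof can be presented compactly by pointing at those arguments rather than repeating them in full.
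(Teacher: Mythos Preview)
Your proposal is correct and takes essentially the same approach as the paper. The paper does not give a separate proof of Theorem~\ref{thm:neutral-dictator} but simply notes (immediately before stating it) that replacing the contagiousness requirement on $P$ by the direct assumption of (NR-)neutrality on $F$ yields the same conclusion via the proof of Theorem~\ref{thm:dictator}, with full neutrality upgrading NR-dictatorial to dictatorial---precisely the plan you describe.
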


\noindent 
As implicativeness and disjunctiveness are much less demanding properties than contagiousness and as neutrality is often a reasonable axiom to impose, these variants of our main theorems are of some practical interest.

Next, recall that by Proposition~\ref{prop:unanimity-reflexivity}, unanimity implies collective rationality with respect to reflexivity. Thus, our theorems remain true if we add reflexivity to the collective rationality requirements. In fact, they can be strengthened: for a unanimous rule and under the assumption that all input graphs are reflexive, every NR-dictatorial rule is in fact a full dictatorship and any NR-oligarchic rule is in fact a full oligarchy. Analogously, by Proposition~\ref{prop:groundedness-irreflexivity} and in view of our assumption of groundedness, we can alternatively add irreflexivity to the collective rationality requirements and strengthen our theorems in the same manner. Thus, we obtain two further variants each for each of Theorem~\ref{thm:dictator} and Theorem~\ref{thm:oligarchy}.

A simple instance of the first of these variants of Theorem~\ref{thm:dictator} is Arrow's Theorem for weak orders (i.e., binary relations that are reflexive, transitive, and complete). An aggregation rule mapping profiles of weak orders to weak orders, i.e., a \emph{social welfare function}~\cite{Arrow1963}, is simply a graph aggregation rule that is collectively rational with respect to reflexivity, transitivity, and completeness. Arrow uses two axioms, namely \emph{independence} (which is the same as our IIE axiom), and the \emph{weak Pareto condition}, according to which unanimously held strict preferences between two alternatives $x$ and $y$ should be respected by the aggregation rule. 

\begin{theorem}[\thmcite{Arrow1963}]\label{thm:arrow}
Any weakly Paretian and independent preference aggregation rule, mapping profiles of weak orders over three or more alternatives to weak orders, must be a dictatorship.
\end{theorem}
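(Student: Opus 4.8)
The plan is to obtain Theorem~\ref{thm:arrow} as an instance of the Dictatorship Theorem (Theorem~\ref{thm:dictator}), in the form strengthened by adding reflexivity to the collective rationality requirements, which as discussed above turns ``dictatorial on nonreflexive edges'' into ``dictatorial''. The first step is to recast the objects: identify a weak order with a graph in the property $W$ obtained by intersecting reflexivity, transitivity, and completeness, so that a social welfare function on a vertex set of size $|V|\geq 3$ is precisely a graph aggregation rule that is collectively rational with respect to~$W$; observe that Arrow's independence of irrelevant alternatives is, verbatim, our axiom IIE, and that the weak Pareto condition says that unanimously held strict preferences over pairs of distinct vertices are respected. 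One technical wrinkle: a social welfare function, and its independence and Pareto properties, are stated only for profiles of weak orders, whereas our framework speaks of rules on all profiles; this is harmless, because the proof of Theorem~\ref{thm:dictator} (together with the Neutrality Lemma feeding it and its winning-coalitions apparatus) only ever evaluates $F$, and only ever invokes its hypotheses, on profiles of graphs satisfying the property under consideration, so Arrow's restricted axioms are exactly what is needed.

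The second step has two parts: showing that $W$ itself is contagious, implicative, and disjunctive, and showing that $F$ is unanimous and grounded. For the meta-properties I would re-run the witness constructions behind Facts~\ref{fact:contagiousness}, \ref{fact:implicativeness}, and~\ref{fact:disjunctiveness}, but verify that all the witness graphs can be taken to be genuine weak orders, by realising them as strict linear orders on the relevant triple of vertices and extending them to all of $V$ by appending the remaining vertices at the bottom in a fixed order (this is where $|V|\geq 3$ is used); transitivity then witnesses that $W$ satisfies clause~$(iii)$ of Definition~\ref{def:contagious} and that $W$ is implicative, while completeness witnesses that $W$ is disjunctive, and the richness conditions carry over unchanged. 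For unanimity and groundedness, note first that IIE equips $F$ with a family $\{\mathcal{W}_e\}$ of sets of winning coalitions. Evaluating $F$ on the profile in which every individual submits the strict linear order $x\succ y\succ\cdots$ shows, by weak Pareto, that $(x,y)\in F(\prof{E})$ while all individuals accept the edge $(x,y)$; hence $\N\in\mathcal{W}_{(x,y)}$, which yields unanimity on every nonreflexive edge. Dually, the profile with every individual submitting $y\succ x\succ\cdots$ forces $(x,y)\notin F(\prof{E})$ while nobody accepts $(x,y)$, so $\emptyset\notin\mathcal{W}_{(x,y)}$ and $F$ is grounded on nonreflexive edges. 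On reflexive edges unanimity holds because the output of a social welfare function is a weak order and hence reflexive, and groundedness holds vacuously because every input graph is reflexive.

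With all pieces in place, $F$ is a unanimous, grounded, IIE aggregation rule that is collectively rational with respect to the contagious, implicative, and disjunctive property $W$, and (since $W$ consists of reflexive graphs) also with respect to reflexivity, so the reflexivity-strengthened Theorem~\ref{thm:dictator} makes $F$ a full dictatorship $F_{i^\star}$, which is exactly Arrow's conclusion. The main obstacle is the second step, and within it the delicate point is the passage from weak Pareto, which constrains only strictly held preferences, to our unanimity and groundedness axioms, which also concern edges accepted together with their converse (indifferences); this succeeds precisely because IIE is an edge-by-edge condition, so it suffices to establish $\N\in\mathcal{W}_e$ and $\emptyset\notin\mathcal{W}_e$ by exhibiting a single profile of unanimous strict preferences for each edge. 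The remaining, more mechanical, task is to check carefully that every witness graph used in establishing that $W$ is contagious, implicative, and disjunctive genuinely lies in $W$.
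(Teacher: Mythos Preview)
Your proposal is correct and follows essentially the same approach as the paper: derive unanimity and groundedness from weak Pareto (together with IIE or completeness), verify that the relevant graph property is contagious, implicative, and disjunctive, apply Theorem~\ref{thm:dictator}, and use reflexivity to upgrade from NR-dictatorial to fully dictatorial. You are more explicit than the paper about two technical points---that the witness graphs for the meta-properties must themselves be weak orders, and that a social welfare function is only defined on profiles of weak orders---and you derive groundedness via IIE rather than via completeness, but these are refinements of the same argument rather than a different route.
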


\begin{proof}
If we use the edges of a graph to represent weak preferences, then strict preference of $x$ over $y$ means that we accept edge $(x,y)$ but reject edge $(y,x)$. Thus, the weak Pareto condition together with IIE (independence) implies unanimity, while the weak Pareto condition together with collective rationality with respect to completeness implies groundedness.

Now the theorem follows immediately from Theorem~\ref{thm:dictator}, together with the insights 
that $(i)$~transitivity is a graph property that is contagious (Fact~\ref{fact:contagiousness}) and implicative (Fact~\ref{fact:implicativeness}),
that $(ii)$~completeness is a graph property that is disjunctive (Fact~\ref{fact:disjunctiveness}),
and that $(iii)$~reflexivity allows us to conclude that the aggregation rule must be a full dictatorship rather than just an NR-dictatorial rule.
\end{proof}

\noindent
Using the same approach, we can also easily derive a variant of Arrow's Theorem for strict linear preference orders (binary relations that are irreflexive, transitive, and complete) from Theorem~\ref{thm:dictator}. In this context, the weak Pareto condition is equivalent to the unanimity axiom, and groundedness is implied by the weak Pareto condition together with the collective rationality requirement for completeness.

\newcommand{\Yes}{$\checkmark$}
\newcommand{\No}{$\times$}
\begin{table}[t]
\centering
\begin{tabular}{lccc} \toprule
\textsc{Property} & \textsc{Contagious?} & \textsc{Implicative?} & \textsc{Disjunctive?} \\ \midrule
Reflexivity & \No & \No & \No \\
Irreflexivity & \No & \No & \No \\
Symmetry & \No & \No & \No \\
Antisymmetry & \No & \No & \No \\
Right Euclidean & \Yes & \Yes & \No \\
Left Euclidean  & \Yes & \Yes & \No \\
Transitivity & \Yes & \Yes & \No \\
Negative Transitivity & \Yes & \No & \Yes \\
Connectedness & \Yes & \Yes & \Yes \\
Completeness & \No & \No & \Yes \\
%Strong Completeness & \No & \No & \Yes \\
Nontriviality & \No & \No & \Yes \\
Seriality &  \No & \No & \Yes \\
\bottomrule
\end{tabular}
\caption{Meta-properties of common graph properties.\label{tab:meta-properties}}
\end{table}

But Arrow's Theorem now is just an example. We can immediately obtain any number of impossibility results such as this one, as long as the properties of the graphs we want to work with hit the appropriate meta-properties. Table~\ref{tab:meta-properties} summarises which of our standard graph properties are contagious (see Fact~\ref{fact:contagiousness}), implicative (see Fact~\ref{fact:implicativeness}), and disjunctive (see Fact~\ref{fact:disjunctiveness}), respectively. Any combination of graph properties that together hit all three graph properties, by Theorem~\ref{thm:dictator}, gives rise to an impossibility theorem saying that all relevant aggregation rules are NR-dictatorial.
% (and if we also assume either reflexivity or irreflexivity, they must be fully dictatorial). 
Similarly, any combination of graph properties that together hit the first two meta-properties, by Theorem~\ref{thm:oligarchy}, gives rise to an impossibility theorem saying that the only relevant aggregation rules are NR-oligarchic.
To be precise, when combining several graph properties, one needs to verify that the relevant richness conditions continue to be satisfied (which is trivially the case for all combinations of properties considered in Table~\ref{tab:meta-properties}).
To exemplify the possibilities, we state two concrete instances of our general results explicitly. They are particularly interesting, because they each require collective rationality with respect to just a single graph-property.

\begin{corollary}
For $|V|\geq 3$, any unanimous, grounded, %monotonic, 
and IIE aggregation rule that is collectively rational with respect to transitivity must be oligarchic on nonreflexive edges.
\end{corollary}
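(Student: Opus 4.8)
The plan is to obtain this corollary as an immediate instantiation of the Oligarchy Theorem (Theorem~\ref{thm:oligarchy}), so the only real work is to confirm that transitivity hits the two meta-properties that theorem requires, namely contagiousness and implicativeness. Since Theorem~\ref{thm:oligarchy} already states that, for $|V|\geq 3$, every unanimous, grounded, and IIE aggregation rule that is collectively rational with respect to a contagious and implicative graph property must be oligarchic on nonreflexive edges, it suffices to point to transitivity as a concrete witness of such a property.

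First I would invoke Fact~\ref{fact:contagiousness}, which establishes that transitivity is contagious for $|V|\geq 3$: it satisfies condition~$(iii)$ of Definition~\ref{def:contagious}, since transitivity can be rewritten either as $[yEz]\imp[xEy\imp xEz]$, witnessing $xy/xz$-contagiousness, or as $[zEx]\imp[xEy\imp zEy]$, witnessing $xy/zy$-contagiousness, for all distinct $x,y,z$. Second I would invoke Fact~\ref{fact:implicativeness}, which establishes that transitivity is implicative for $|V|\geq 3$: taking $S^+=S^-=\emptyset$, $e_1=(v_1,v_2)$, $e_2=(v_2,v_3)$, $e_3=(v_1,v_3)$, condition~$(i)$ of Definition~\ref{def:implicative} is exactly the transitivity constraint on this triple, and the richness condition~$(ii)$ holds because every remaining acceptance/rejection pattern of $e_1,e_2,e_3$ that is not forbidden by transitivity can be realised by an explicit small graph.

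With both meta-properties verified, the hypotheses of Theorem~\ref{thm:oligarchy} are met for $P=$ transitivity, and the conclusion — that $F$ is oligarchic on nonreflexive edges — follows directly. Since we are working with a single graph property rather than a combination, there is no need to separately re-check joint consistency of the richness conditions, so no genuine obstacle arises here; the substance of the argument lives entirely in Theorem~\ref{thm:oligarchy} and in Facts~\ref{fact:contagiousness} and~\ref{fact:implicativeness}, and the corollary is a one-line deduction from them. (A brief remark could be added, paralleling the footnote after Theorem~\ref{thm:dictator}, that the converse fails: not every oligarchy preserves every contagious and implicative property, though the intersection rule — hence every oligarchy — does preserve transitivity, since transitivity is closed under intersection.)
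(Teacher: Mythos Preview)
Your proposal is correct and matches the paper's approach exactly: the paper presents this corollary without proof as an immediate instance of Theorem~\ref{thm:oligarchy}, relying on Facts~\ref{fact:contagiousness} and~\ref{fact:implicativeness} to certify that transitivity is contagious and implicative. Your added remarks on richness and the converse are accurate but go slightly beyond what the paper says here.
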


\begin{corollary}
For $|V|\geq 3$, any unanimous, grounded, and IIE aggregation rule that is collectively rational with respect to connectedness must be dictatorial on nonreflexive edges.
\end{corollary}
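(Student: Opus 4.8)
The plan is to obtain this corollary as an immediate instance of the Dictatorship Theorem (Theorem~\ref{thm:dictator}). That theorem says that, for $|V|\geq 3$, any unanimous, grounded, and IIE aggregation rule that is collectively rational with respect to a graph property that is \emph{simultaneously} contagious, implicative, and disjunctive must be dictatorial on nonreflexive edges. Hence the entire task reduces to verifying that the single property of connectedness hits all three meta-properties, and each of these verifications has in fact already been carried out in the excerpt.

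Concretely, I would first appeal to Fact~\ref{fact:contagiousness}, which records that connectedness is contagious for $|V|\geq 3$, witnessing condition~$(i)$ of Definition~\ref{def:contagious} via the rewriting $[xEz \wedge \neg zEy] \imp [xEy \imp yEz]$, i.e.\ $xy/yz$-contagiousness with $S^+=\{(x,z)\}$ and $S^-=\{(z,y)\}$. Next I would invoke Fact~\ref{fact:implicativeness}, which gives implicativeness via the rewriting $[\neg yEz] \imp [(xEy \wedge xEz) \imp zEy]$, i.e.\ $S^+=\emptyset$, $S^-=\{(y,z)\}$, $e_1=(x,y)$, $e_2=(x,z)$, $e_3=(z,y)$ for three distinct vertices $x,y,z$ (one checks that these three edges are pairwise distinct and disjoint from $S^+\cup S^-$, and that the richness clauses of Definition~\ref{def:implicative} hold). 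Finally I would invoke Fact~\ref{fact:disjunctiveness}, which records that connectedness is disjunctive, using $S^+=\{(v_1,v_2),(v_1,v_3)\}$, $S^-=\emptyset$, $e_1=(v_2,v_3)$, $e_2=(v_3,v_2)$.

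With these three facts in hand, the hypotheses of Theorem~\ref{thm:dictator} are satisfied, and its conclusion---NR-dictatoriality---is exactly the statement of the corollary. Since here we are dealing with a single graph property rather than a combination of several, there is no need to re-check any compatibility between the richness conditions of Definitions~\ref{def:xyzw-contagious}, \ref{def:implicative}, and~\ref{def:disjunctive}: each is established independently in the relevant Fact. There is no real obstacle in this argument; the only point worth a sentence of care is simply to confirm that all three rewritings above are available under the sole hypothesis $|V|\geq 3$, which matches the hypothesis of the corollary. Beyond that, the proof is a one-line citation of Theorem~\ref{thm:dictator} together with Facts~\ref{fact:contagiousness}, \ref{fact:implicativeness}, and~\ref{fact:disjunctiveness}.
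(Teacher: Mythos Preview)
Your proposal is correct and matches the paper's approach exactly: the corollary is stated there as an immediate instance of Theorem~\ref{thm:dictator}, relying on Facts~\ref{fact:contagiousness}, \ref{fact:implicativeness}, and~\ref{fact:disjunctiveness} to confirm that connectedness is simultaneously contagious, implicative, and disjunctive. The specific rewritings and witnesses for $S^+$, $S^-$, and the edges $e_i$ that you spell out are precisely those given in the proofs of these three Facts.
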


%%%%%%%%%%%%%%%%%%%%%%%%%%%%%%%%%%%%%%%%%%%%%%%%%%%%%%%%%%%%%%%%
\section{Integrity Constraints in Modal Logic}
\label{sec:modal}
%%%%%%%%%%%%%%%%%%%%%%%%%%%%%%%%%%%%%%%%%%%%%%%%%%%%%%%%%%%%%%%%

\noindent
So far we have worked with a definition of collective rationality that applies to every possible graph property (and we have specifically focused on common properties, such as transitivity). An alternative approach is to limit attention to properties that can be expressed in a restricted (logical) language. This is useful when we are interested in algorithmic aspects of collective rationality, e.g., the complexity of checking whether a given model satisfies the constraint (model checking). In our previous work on binary aggregation~\cite{GrandiEndrissAIJ2013}, we have focused on properties expressible in the language of propositional logic. Here, instead, we focus on fundamental properties of graphs that can be expressed using the language of \emph{modal logic}~\cite{BlackburnEtAl2001}. 

As we shall see, this is interesting not only because modal logic is a widely used language for describing graphs, but also because the standard semantics of modal logic suggests a new distinction of different \emph{levels} of collective rationality. After a brief review of relevant concepts from modal logic in Section~\ref{sec:modal-logic}, we introduce these three levels in Section~\ref{sec:levels}. One of them operates at the level of \emph{frames}, one at the level of \emph{models}, and one at the levels of possible \emph{worlds}. The first is equivalent to the basic notion of collective rationality used in the first part of this paper. Results for the other two are presented in Section~\ref{sec:model-CR} and~\ref{sec:world-CR}, respectively.

%%%%%%%%%%%%%%%%%%%%%%%%%%%%%%%%%%%%%%%%%%%%%%%%%%%%%%%%%%%%%%%%
\subsection{Background: Modal Logic}
\label{sec:modal-logic}
%%%%%%%%%%%%%%%%%%%%%%%%%%%%%%%%%%%%%%%%%%%%%%%%%%%%%%%%%%%%%%%%

\noindent
In what follows, we briefly review the basic concepts of modal logic and introduce the relevant notation~\cite{BlackburnEtAl2001}.
Fix a finite set $\Phi$ of propositional variables. The set of well-formed \emph{formulas}~$\phi$ is defined as follows (with $p$ ranging over the elements of~$\Phi$):
\begin{eqnarray*}
\phi & ::= & p \mid \neg\phi \mid \phi\wedge\phi \mid \phi\vee\phi \mid \phi\imp\phi \mid \Box\phi \mid \Diamond\phi
\end{eqnarray*}
A (Kripke) \emph{model} $M=\langle G,\val\,\rangle$ consists of a graph $G=\langle V,E\rangle$ and a \emph{valuation function} $\val:\Phi\to 2^V$. In line with standard terminology, we also refer to $G$ as a (Kripke) \emph{frame}, to $V$ as the set of \emph{possible worlds}, and to $E$ as an \emph{accessibility relation}. The valuation $\val$ is mapping propositional variables~$p$ to sets of possible worlds---the worlds where the $p$ in question is true.
The \emph{truth} of an arbitrary formula~$\phi$ at a world $x\in V$ in a model $M=\langle G,\val\,\rangle$, denoted $M,x\models\phi$, is defined recursively:
\begin{itemize}[noitemsep]
\item $M,x\models p$ if $x\in\val(p)$ for any $p\in\Phi$
\item $M,x\models \neg\phi$ if $M,x\not\models \phi$
\item $M,x\models \phi\wedge\psi$ if $M,x\models \phi$ and $M,x\models \psi$
%\item $M,x\models \phi\vee\psi$ if $M,x\models \phi$ or $M,x\models \psi$
%\item $M,x\models \Box\phi$ if $M,y\models \phi$ for all $y\in E(x)$
\item $M,x\models \Diamond\phi$ if $M,y\models \phi$ for some $y\in E(x)$
\end{itemize}

\noindent
Furthermore, $A\vee B$ is short for $\neg (\neg A\wedge \neg B)$, $A\imp B$ is short for $\neg(A\wedge \neg B)$, and $\Box A$ is short for $\neg\Diamond\neg A$.
Besides this notion of truth of $\phi$ at a given world, the semantics of modal logic provides two further ways of interpreting a formula~$\phi$ on a graph~$G$.
First, a formula $\phi$ is \emph{globally true} in model $M=\langle G,\val\,\rangle$, denoted $M\models\phi$, if $M,x\models\phi$ for every $x\in V$.
Second, $\phi$ is \emph{valid on frame} $G$, denoted $G\models\phi$, if $\langle G,\val\,\rangle\models\phi$ for every valuation~$\val$. 
Two formulas $\phi$ and $\psi$ are \emph{equivalent} if $M,x\models\phi$ implies $M,x\models\psi$ and \textit{vice versa}, for every model~$M$ and every world~$x$.

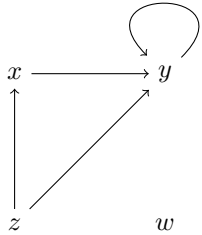
\begin{figure}%[t]
\centering
\begin{tikzpicture}
\node (x) at (0,0) {$x$};
\node (y) at (2,0) {$y$};
\node (z) at (0,-2) {$z$};
\node (w) at (2,-2) {$w$};
\draw [->] (x) edge (y); 
\draw [->] (z) edge (x); 
\draw [->] (z) edge (y); 
\draw [->] (y) edge[loop] (y); 
\end{tikzpicture}
\caption{Example for a modal logic frame with four possible worlds.\label{fig:modal-example}}
\end{figure}

\begin{example}[Frame validity and global truth]
Consider the frame~$G=\langle V,E\rangle$, with $V=\{x,y,z,w\}$, shown in Figure~\ref{fig:modal-example}. An example for a formula that is valid in this frame is $\Box q\imp\Box\Box q$, because---whatever the model---in every world for which all accessible worlds satisfy $q$ also all worlds accessible in exactly two steps satisfy~$q$. The formula $p\imp\Diamond p$, on the other hand, is not valid in $G$, because there exist models based on $G$, e.g., the model with $\val(p)=\{z\}$, in which it is not the case that from every world in which $p$ is true we can access some world that also satisfies~$p$. However, $p\imp\Diamond p$ is globally true in some models based on $G$, e.g., in the model with $\val(p)=\emptyset$.
\end{example}

\noindent
Recall that both truth at a world and global truth in a model are concepts that require the introduction of a valuation $\val$. Validity on a frame, on the other hand, is independent of the valuation and can be used to express global properties of frames, i.e., of graphs alone. For instance, it is well-known that $G=\langle V,E\rangle$ is reflexive (i.e., $E$ is a reflexive relation on $V$) if and only if the formula $p\imp \Diamond p$ is valid on $G$. Results of this kind belong to the realm of \emph{modal correspondence theory}~\cite{Benthem2001}. 
%directedness $\forall xyz.[xRy\wedge xRz \imp \exists s.(yRs\wedge zRs)]$, 
%$\Diamond\Box A\imp\Box\Diamond A$
%Goldblatt has $\forall xy.\exists z.(xRz \vee yRz)$. Blackburn et al.\ have a directedness condition involving two accessibility relations.
That is, through the concept of validity on a frame, we are able to express a property of a graph by means of a formula in modal logic. 

Some of the most fundamental frame properties considered in correspondence theory are listed in Table~\ref{tab:correspondences}. Such formulas can also be combined to characterise classes of graphs of interest. An equivalence relation, for instance, is a frame on which $p\rightarrow\Diamond p$, $p\rightarrow\Box\Diamond p$, and $\Diamond\Diamond p \rightarrow \Diamond p$ are valid. 
Note that not all graph properties have modal formulas defining them (e.g., irreflexivity, completeness, and negative transitivity do not).\footnote{The left-Euclidean property defined in Table~\ref{tab:graph-properties} also cannot be expressed directly. It corresponds to the formula for the right-Euclidean property interpreted on the inverse relation $E^{-1}$.}

%Let us briefly remark here that connectedness is a weak form of \emph{completeness}, which requires that any two vertices can be compared: $\forall xy.(xEy\vee yEx)$. Completeness plays a central role in social choice theory, as preference orders are usually assumed to be complete, but it is not a frame property that can be characterised by a modal formula.

\begin{table}%[t]
\centering
\begin{tabular}{ll}\toprule
\textsc{Property} & \textsc{Modal Formula}  \\ \midrule 
Reflexivity & $ p\imp \Diamond p$  \\
Symmetry & $p\imp\Box\Diamond p$ \\
Right Euclidean & $\Diamond p\imp\Box\Diamond p$   \\
Transitivity & $\Diamond\Diamond p\imp \Diamond p$  \\
Connectedness & $\Box(\Box p\imp q) \vee \Box(\Box q\imp p)$ \\ % Geach axiom
Seriality & $\Diamond(p\vee \neg p)$ \\
%Church-Rosser & $\Diamond\Box p\rightarrow \Box\Diamond p$ \\
\bottomrule
\end{tabular}
\caption{Common frame properties and the corresponding modal formulas.\label{tab:correspondences}}
\end{table}

%\ugnote{Should we add a paragraph or a section on model checking using modal logic?}
%\uenote{Not sure what you had in mind.}

%%%%%%%%%%%%%%%%%%%%%%%%%%%%%%%%%%%%%%%%%%%%%%%%%%%%%%%%%%%%%%%%%%%%%%%%%%%%%%%%
\subsection{Three Levels of Collective Rationality}
\label{sec:levels}
%%%%%%%%%%%%%%%%%%%%%%%%%%%%%%%%%%%%%%%%%%%%%%%%%%%%%%%%%%%%%%%%%%%%%%%%%%%%%%%%

\noindent
Given a set of propositional variables~$\Phi$, we shall refer to modal formulas $\phi$ constructed from $\Phi$ as \emph{modal integrity constraints}.
We now introduce three definitions of collective rationality with respect to a modal integrity constraint. What distinguishes them is the level (frame, model, world) at which the modal integrity constraint is interpreted. 

\begin{definition}%[F-CR]
\label{def:F-CR}
An aggregation rule $F$ is \textbf{frame collectively rational} with respect to a modal integrity constraint~$\phi$ if $\langle V,E_i\rangle\models\phi$ for all $i\in\N$ implies $\langle V,F(\prof{E})\rangle\models\phi$.
\end{definition}

\noindent
That is, $F$ is frame collectively rational with respect to~$\phi$ if validity of $\phi$ on all individual frames $\langle V,E_i\rangle$ implies validity of $\phi$ on the collective frame $\langle V,F(\prof{E})\rangle$. This is equivalent to our original Definition~\ref{def:CR}, with the only difference being that the property with respect to which we require collective rationality now has to be expressed by means of a modal formula.

\begin{definition}%[M-CR]
\label{def:M-CR}
An aggregation rule $F$ is \textbf{model collectively rational} with respect to a modal integrity constraint~$\phi$ if for every valuation $\val:\Phi\to 2^V$ we have $\langle\langle V,E_i\rangle,\val\,\rangle\models\phi$ for all $i\in\N$ implying $\langle\langle V,F(\prof{E})\rangle,\val\,\rangle\models\phi$.
\end{definition}

\noindent
That is, $F$ is model collectively rational with respect to $\phi$ if---for any valuation $\val$---global truth of $\phi$ in all individual models $\langle\langle V,E_i\rangle,\val\,\rangle$ implies global truth of $\phi$ in the collective model $\langle\langle V,F(\prof{E})\rangle,\val\,\rangle$.

\begin{definition}%[W-CR]
\label{def:W-CR}
An aggregation rule $F$ is \textbf{world collectively rational} with respect to a modal integrity constraint~$\phi$ if for every valuation $\val:\Phi\to 2^V$ and every world $x\in V$ we have $\langle\langle V,E_i\rangle,\val\,\rangle,x\models\phi$ for all $i\in\N$ implying $\langle\langle V,F(\prof{E})\rangle,\val\,\rangle,x\!\models\phi$.
\end{definition}

\noindent
Thus, $F$ is world collectively rational with respect to $\phi$ if---again, for any valuation---truth of $\phi$ at a given world in all individual models implies truth of $\phi$ at the same world in the collective model.

\begin{example}[Levels of collective rationality]\label{ex:modal-CR}
Let us go back to our Example~\ref{ex:CR}, in which aggregating three graphs that are serial by means of the majority rule yielded a fourth graph that fails to be serial. Specifically, in the majority graph the world~$w$ does not have a successor. In our discussion of Example~\ref{ex:CR}, we concluded that the majority rule is not collectively rational with respect to seriality, which in the terminology of Definition~\ref{def:F-CR} is expressed as the majority rule not being frame collectively rational with respect to $\Diamond (p\vee \neg p)$, a modal formula that corresponds to seriality. 
On the other hand, by Fact~\ref{fact:majority}, the majority rule is frame collectively rational with respect to $p\imp \Diamond p$, corresponding to reflexivity. %, because the individual graphs are not reflexive to begin with. 
But note that the majority rule is not model collectively rational with respect to the same formula $p\imp \Diamond p$. To see this, consider a model with a valuation $\val$ such that $p$ is true at every world. Then $p\imp \Diamond p$ is globally true in all individual models, but it is not globally true in the collective model, since the bottom world~$w$ is not connected to any of the other $p$-worlds (i.e., $p$ is true at $w$, but $\Diamond p$ is not).
%A rule that does preserve seriality is the simple successor-approval rule that accepts an edge if it is (tied for being) most often accepted amongst those with the same source. % (in this case, we obtain the union of the four individual graphs when we apply this rule).
\end{example}

%UG: commented the entire paragraph
%The three definitions can be summarised in the following way: 
%Individuals are assumed to submit an accessibility relation over a set of worlds. 
%If we have no information about the valuation function or the world that will be of interest for our application, then our focus will on properties that are valid on the individual frames (think of global properties of graphs, such as transitivity in preference orders), i.e., we will be interested in F-collectively rational. 
%If, instead, we have information about the valuation (for instance in the case of epistemic and doxastic applications in which traditionally the valuation is used to define the possible worlds), but we have no information about the particular world that will be of interest, then we will be interested in a weaker but still global form of collective rationality as given by M-collectively rational. 
%If we know the valuation function and we know that the focus will be on a given world (although we may not know which), then  we can use the local version of collective rationality given by W-collectively rational.
%F-collectively rational directly corresponds to our Definition~\ref{def:CR}, while M-collectively rational and W-collectively rational do not have close counterparts in other areas of social choice theory. That is, while the basic framework of graph aggregation may be considered a special case of binary aggregation, this correspondence does not extend to these novel notions of collective rationality.

\noindent
A straightforward analysis of Definitions~\ref{def:F-CR}--\ref{def:W-CR} yields the following result: 

\begin{proposition}\label{prop:CRimplications}
Let $F$ be an aggregation rule and let $\phi$ be a modal integrity constraint. Then the following implications hold:
\begin{enumerate}%[nolistsep]
\item[$(i)$] If $F$ is world collectively rational with respect to $\phi$, then $F$ is also model collectively rational with respect to~$\phi$.
\item[$(ii)$] If $F$ is model collectively rational with respect to $\phi$, then $F$ is also frame collectively rational with respect to~$\phi$.
\end{enumerate}
\end{proposition}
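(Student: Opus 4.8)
The plan is to prove both implications by directly unwinding the definitions of global truth and frame validity in terms of truth at individual worlds, exploiting the fact that both of these notions are purely universal quantifications --- over worlds, respectively over valuations --- layered on top of the world-level notion of truth. No genuine new idea is needed; the statement is essentially a bookkeeping consequence of the way the three semantic levels of modal logic are stacked.

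For part~$(i)$, I would fix an arbitrary valuation $\val:\Phi\to 2^V$ and assume that $\phi$ is globally true in every individual model $\langle\langle V,E_i\rangle,\val\,\rangle$. By the definition of global truth this says that for every world $x\in V$ and every $i\in\N$ we have $\langle\langle V,E_i\rangle,\val\,\rangle,x\models\phi$. Since $F$ is world collectively rational with respect to $\phi$, it follows that $\langle\langle V,F(\prof{E})\rangle,\val\,\rangle,x\models\phi$ for that same $x$; as $x$ was arbitrary, $\phi$ is globally true in the collective model $\langle\langle V,F(\prof{E})\rangle,\val\,\rangle$. Since $\val$ was arbitrary, this establishes that $F$ is model collectively rational with respect to $\phi$.

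For part~$(ii)$, the argument is entirely analogous, one level up. Assume $\phi$ is valid on every individual frame $\langle V,E_i\rangle$. By the definition of frame validity, for every valuation $\val$ and every $i\in\N$ the formula $\phi$ is globally true in $\langle\langle V,E_i\rangle,\val\,\rangle$. Applying model collective rationality for each such $\val$ gives that $\phi$ is globally true in $\langle\langle V,F(\prof{E})\rangle,\val\,\rangle$, and since this holds for every $\val$, $\phi$ is valid on the collective frame $\langle V,F(\prof{E})\rangle$, i.e., $F$ is frame collectively rational with respect to $\phi$.

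The only point requiring (mild) care is the order of the quantifiers: the universal quantifier over worlds in part~$(i)$ (resp. over valuations in part~$(ii)$) can be moved to the front of the argument precisely because world collective rationality (resp. model collective rationality) is itself stated with that quantifier already in front, so there is no uniformity or choice issue to worry about. Consequently I do not anticipate any real obstacle; the proof is a handful of lines of quantifier manipulation, and the main "work" is simply citing the correct clause of Definitions~\ref{def:F-CR}--\ref{def:W-CR} at each step.
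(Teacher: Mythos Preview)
Your proposal is correct and matches the paper's approach: the paper does not give a formal proof at all but simply states that the result follows from ``a straightforward analysis of Definitions~\ref{def:F-CR}--\ref{def:W-CR}'', which is precisely the quantifier-unwinding you have written out in detail.
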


\noindent
These inclusions are strict. For example, the aggregation rule~$F$ that returns the full graph in case all individual graphs satisfy $\Diamond (p\vee \neg p)$, and the empty graph otherwise, is model collectively rational but not world collectively rational.  To see this, consider a profile of graphs with two worlds where $E_i=\{(x,y)\}$ for all $i\in\N$. The outcome returned by $F$ is the empty graph, in violation of world collective rationality with respect to $\Diamond (p\vee \neg p)$ at world~$x$. 
Moreover, Example~\ref{ex:modal-CR} can be used to show the strict implication in item~$(ii)$, since it concerns an aggregation rule that is frame collectively rational with respect to modal formula $p\rightarrow\Diamond p$ but not model collectively rational with respect to the same formula. 

Thus, frame collective rationality is the least demanding of our three notions of collective rationality and world collective rationality is the most demanding. Hence, negative results are strongest when formulated for frame collective rationality, while positive results are strongest when formulated for world collective rationality. Our (negative) impossibility results of Section~\ref{sec:impossibility} were indeed proved for frame collective rationality and these results thus immediately extend also to the other two levels (in those cases where the graph property in question has a corresponding modal formula). The (positive) possibility results for frame collective rationality of Section~\ref{sec:CR}, however, do not automatically transfer. Indeed, as we will see next, they cannot be extended even to the next level, namely that of model collective rationality. Following this, we will complete the picture by establishing a number of positive results for world collective rationality, which immediately transfer to the other two levels as well.

%%%%%%%%%%%%%%%%%%%%%%%%%%%%%%%%%%%%%%%%%%%%%%%%%%%%%%%%%%%%%%%%%%%%%%%%%%%%%%%%
\subsection{Limitative Results for Collective Rationality at the Level of Models} 
\label{sec:model-CR}
%%%%%%%%%%%%%%%%%%%%%%%%%%%%%%%%%%%%%%%%%%%%%%%%%%%%%%%%%%%%%%%%%%%%%%%%%%%%%%%%

\noindent
Recall that in Section~\ref{sec:CR} we have seen that every unanimous aggregation rule is collectively rational with respect to reflexivity (Proposition~\ref{prop:unanimity-reflexivity}) and every neutral aggregation rule is collectively rational with respect to symmetry (Proposition~\ref{prop:neutrality-symmetry}). Given the well-known results in modal correspondence theory for these two properties, which we recall in Table~\ref{tab:correspondences}, we can reformulate these results as follows:\footnote{Note that Proposition~\ref{prop:groundedness-irreflexivity} cannot be reformulated in an analogous manner, because irreflexivity cannot be expressed in terms of a modal formula.}
\begin{itemize}
\item Any unanimous aggregation rule is frame collectively rational with respect to $p\imp\Diamond p$.
\item Any neutral aggregation rule is frame collectively rational with respect to $p\imp\Box\Diamond p$.
\end{itemize}

%We begin with a simple positive result, showing how a basic aggregation axiom can guarantee the preservation of a simple graph property:
%\begin{proposition}\label{prop:XXXreflexivity}
%Every unanimous aggregator is F-collectively rational wrt.\ $p \imp \Diamond p$, corresponding to reflexivity.
%\end{proposition}
%\begin{proof}
%Let $F$ be a unanimous aggregator, and let $\prof{E}$ be a profile such that $p \imp \Diamond p$ is valid on all individual frames $\langle V,E_i\rangle$.  Recall from Section~\ref{sec:CR} that the validity of some formulas at the frame level corresponds to global properties of graphs, such as reflexivity. We can thus reason with the corresponding property rather than proving the validity of the modal formula directly. If $p \imp \Diamond p$ is valid on all individual frames, then all $E_i$ respect reflexivity, i.e., every individual graph includes all edges of the form $(x,x)$. It is then straightforward to conclude that, by unanimity of $F$, the same must be true for the collective graph, and thus that $p \imp \Diamond p$ is valid on the collective frame $\langle V,F(\prof{E})\rangle$.
%\end{proof}

\noindent
The following two examples show that these results are tight, in the sense that they cease to hold when we replace frame collective rationality by model collective rationality. Both examples use the intersection rule~$F_\cap$, which is both unanimous and neutral.
%The following example shows that Proposition~\ref{prop:XXXreflexivity} is tight, i.e., that unanimous aggregators need not be M-collectively rational (and thus also not W-collectively rational) wrt.\ reflexivity. Hence, Proposition~\ref{prop:XXXreflexivity} is the strongest possibility result that we can get amongst the three notions of collective rationality wrt.\ the modal integrity constraint corresponding to reflexivity.  

\begin{example}[Counterexample for $p\imp\Diamond p$]
Let $V=\{x,y\}$. 
Suppose two individuals provide the following two graphs: $E_1=\{(x,y),(y,y)\}$ and $E_2=\{(y,x),(x,x)\}$, i.e., $F_\cap$ will return the empty graph.
Now consider the three models we obtain for these three graphs when we use the valuation $\val(p)=\{x,y\}$, which makes $p$ true at every world. Then the formula $p\imp\Diamond p$ is globally true in the two individual models, but it is not globally true in the model based on the collective (empty) graph. Hence, the intersection rule, despite being unanimous, is not model collectively rational with respect to $p\imp\Diamond p$.
\end{example}

%Symmetry is more demanding a property than reflexivity, and unanimity alone does not suffice to preserve it. However, if we restrict ourselves to uniform quota rules, we obtain another possibility result:
%\begin{proposition}\label{prop:uniformquota-symmetry} 
%Every uniform quota rule is F-collectively rational wrt.\ $p\imp\Box\Diamond p$, corresponding to symmetry.
%\end{proposition}
%\begin{proof}
%Once more, since validity of the formula $p\imp\Box\Diamond p$ corresponds to the symmetry of the frame, we can reason with graph properties only. Suppose each individual graph respects symmetry. Then the number of individual graphs including edge $(x,y)$ will always equal to the number of individual graphs including $(y,x)$. Hence, either both or neither will meet the uniformly imposed quota.
%\end{proof}
%Note that uniformity is a necessary condition for Proposition~\ref{prop:uniformquota-symmetry} to hold (nonuniform quota rules do not preserve symmetry).
%As for the previous case, we now provide an example that shows that Proposition~\ref{prop:uniformquota-symmetry} ceases to hold when we replace F-collectively rational with the stronger M-collectively rational (and thus also W-collectively rational). 

\begin{example}[Counterexample for $p\imp\Box\Diamond p$]
Let $V=\{x,y,z\}$. Suppose two individuals report the graphs $E_1=\{(x,y),(y,z)\}$ and $E_2=\{(x,y),(y,x)\}$, respectively. If we aggregate using $F_\cap$, we obtain a collective graph with a single edge $(x,y)$. Now consider the valuation $\val(p)=\{x,z\}$. While the formula $p\imp\Box\Diamond p$ is globally true in both individual models, the same formula is not satisfied at $x$ in the collective model (while $p$ is true at $x$, $x$ is connected in the collective graph to $y$ at which $\Diamond p$ is not satisfied). Hence, despite being neutral, $F_\cap$ is not model collectively rational with respect to the modal formula corresponding to symmetry.
\end{example}

%On the other hand, quota rules that are not uniform need not preserve symmetry: if the quota associated with edge $(x,y)$ is higher than the quota associated with $(y,x)$, then we can easily construct a profile where the quota for $(y,x)$ is met but that for $(x,y)$ is not.

\noindent
For more demanding modal integrity constraints, the situation is even more bleak. For example, we have already seen that transitivity is not preserved under the majority rule, which is an aggregation rule that meets essentially all axioms of interest. This is precisely what the Condorcet paradox shows (see Example~\ref{ex:condorcet-paradox}). Thus, neither unanimity nor neutrality (nor any other basic axiom we have considered) could possibly guarantee an aggregation rule to be frame collectively rational, or indeed model collectively rational, with respect to $\Diamond\Diamond p \imp \Diamond p$, the modal formula corresponding to transitivity. The best we can say is that all oligarchic rules are frame collectively rational with respect to $\Diamond\Diamond p \imp \Diamond p$. This is so, because the intersection of several transitive graphs is always transitive itself. We conclude our discussion of limitative results with an example showing that even this basic result does not transfer to collective rationality at the level of models.

\begin{example}[Counterexample for $\Diamond\Diamond p \imp \Diamond p$]
Let $V=\{x,y,z,w\}$ and suppose two individuals submit the two graphs depicted to the left of the dashed line below:

\begin{center}
\begin{tikzpicture}[auto]
  \node          (a)          {$x$};
  \node          (b) at (1.1,0) {$y$};
  \node          (c) at (2.4,0) {$z$};
  \node          (d) at (1.1,-1) {$w$};

  \path[->] (d) edge [->, line width=0.5pt]        node       {$$} (b)
 			 (b) edge [->, line width=0.5pt, bend left=30]        node       {$$} (c)
			 (d) edge [->, line width=0.5pt, bend right=30]        node       {$$} (c);
\end{tikzpicture}
\hspace*{7mm}
\begin{tikzpicture}[auto]
  \node          (a)          {$x$};
  \node          (b) at (1.1,0) {$y$};
  \node          (c) at (2.4,0) {$z$};
  \node          (d) at (1.1,-1) {$w$};

  \path[->] (d) edge [->, line width=0.5pt]        node       {$$} (b)
 			 (b) edge [->, line width=0.5pt, bend left=30]        node       {$$} (c)
			 (d) edge [->, line width=0.5pt, bend left=30]        node       {$$} (a);
\end{tikzpicture}
\hspace{4mm}
\begin{tikzpicture}
\draw [dashed] (0,1.6) edge[line width=3pt] (0,0);
\end{tikzpicture}
\hspace*{4mm}
\begin{tikzpicture}[auto]
  \node          (a)          {$x$};
  \node          (b) at (1.1,0) {$y$};
  \node          (c) at (2.4,0) {$z$};
  \node          (d) at (1.1,-1) {$w$};

  \path[->] (d) edge [->, line width=0.5pt]        node       {$$} (b)
 			 (b) edge [->, line width=0.5pt, bend left=30]        node       {$$} (c);
\end{tikzpicture}
\end{center}

\noindent
Under a valuation with $\val(p)=\{x,z,w\}$, the formula $\Diamond\Diamond p \imp \Diamond p$ is globally true in both models: world $w$ is the only world where $\Diamond\Diamond p$ is true, and in both models $w$ also satisfies $\Diamond p$. Now, the intersection rule will return the graph shown to the right of the dashed line.
%$E=\{(w,y),(y,z)\}$. 
In the corresponding model, the antecedent $\Diamond\Diamond p$ is still true at $w$, but $\Diamond p$ is not, since $p$ is false at $y$. Hence, the intersection rule is not model collectively rational with respect to $\Diamond\Diamond p\imp\Diamond p$.
\end{example}

%As noted in Section~\ref{sec:CR}, not all properties of graphs can be represented with modal formulas. Still, in line with the possibility results proven in this section, we can also study the preservation of graph properties that do not have a corresponding modal formula. Here is a simple example:

%\begin{proposition}\label{prop:irreflexivity}
%Every grounded aggregator preserves the property of irreflexivity: if all individual graphs are irreflexive, then so is the collective graph.
%\end{proposition}

%%%%%%%%%%%%%%%%%%%%%%%%%%%%%%%%%%%%%%%%%%%%%%%%%%%%%%%%%%%%%%%%%%%%%%%%%%%%%%%%
\subsection{Possibility Results for Collective Rationality at the Level of Worlds}
\label{sec:world-CR}
%%%%%%%%%%%%%%%%%%%%%%%%%%%%%%%%%%%%%%%%%%%%%%%%%%%%%%%%%%%%%%%%%%%%%%%%%%%%%%%%

\noindent
To complete the picture, we are now going to look for possibility results at the level of individual worlds. Recall that, by Proposition~\ref{prop:CRimplications}, any such result we are able to establish will immediately transfer to our other two notions of collective rationality as well.
%
%A \emph{safety result} establishes conditions under which an aggregation procedure is guaranteed to satisfy collective rationality with respect to certain properties (which we now model as modal integrity constraints). A safety result for the strong notion of world collectively rational implies that the same result holds also for the weaker notions of model- and frame collectively rational (just as an impossibility result for frame collectively rational implies the same impossibility also for model- and world collectively rational). 
%
Unlike in Section~\ref{sec:CR}, where we proved a number of simple possibility results for collective rationality at the level of frames for specific graph properties, the following results apply to all graph properties that can be expressed as modal integrity constraints meeting certain \emph{syntactic} restrictions. 
%The results we are able to provide are relatively weak. This is not surprising: they apply to the most demanding of our notions of collective rationality and they apply to an infinite range of graph properties.

Recall that a formula is said to be in \emph{negation normal form} (NNF) if it does not make use of the implication operator~$\imp$ and the negation operator~$\neg$ only occurs immediately in front of propositional variables. As is well known, any modal formula can be translated into an equivalent formula in NNF. We call a formula in NNF that does not have any occurrences of~$\Diamond$ a $\Box$-formula, and a formula in NNF without any occurrence of~$\Box$ a $\Diamond$-formula. 

The first straightforward observation to be made is that, if a formula~$\phi$ does not involve any modal operators ($\Box$ and $\Diamond$), then \emph{any} aggregation rule will be world collectively rational with respect to~$\phi$. This is immediate from Definition~\ref{def:W-CR}: the truth of such a $\phi$ only depends on the valuation~$\val$, which is not subject to change during aggregation.
For formulas involving only the universal modality~$\Box$, we need to ensure that the frame resulting from the aggregation does not include ``too many'' edges: 

\begin{proposition}\label{prop:box}
If an aggregation rule~$F$ is such that for every profile $\prof{E}$ there exists an individual $i^\star \in \N$ such that $F(\prof{E})\subseteq E_{i^\star}$, then $F$ is world collectively rational with respect to all $\Box$-formulas.
%An aggregator $F$ is W-collectively rational wrt.\ all $\Box$-formulas if and only if $F$ is grounded. FALSE
\end{proposition}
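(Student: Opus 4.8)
The plan is to proceed by structural induction on the $\Box$-formula $\phi$, proving the stronger statement that for the specific individual $i^\star$ guaranteed by the hypothesis (i.e., the one with $F(\prof{E})\subseteq E_{i^\star}$), truth of $\phi$ at a world $x$ in the individual model $\langle\langle V,E_{i^\star}\rangle,\val\,\rangle$ transfers to truth at $x$ in the collective model $\langle\langle V,F(\prof{E})\rangle,\val\,\rangle$. Since a $\Box$-formula is in NNF and contains no $\Diamond$, the only cases to handle are: propositional variables $p$, negated variables $\neg p$, conjunctions $\psi\wedge\chi$, disjunctions $\psi\vee\chi$, and the modal case $\Box\psi$.

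First I would fix an arbitrary profile $\prof{E}$, let $i^\star\in\N$ be an individual with $F(\prof{E})\subseteq E_{i^\star}$, fix an arbitrary valuation $\val$ and world $x$, and assume $\phi$ is true at $x$ in every individual model; in particular it is true at $x$ in the model built on $E_{i^\star}$. The base cases $p$ and $\neg p$ are immediate, since truth of a literal at $x$ depends only on whether $x\in\val(p)$, which is unchanged by aggregation. The conjunction and disjunction cases follow directly from the induction hypothesis applied to the subformulas (for disjunction, note that whichever disjunct holds at $x$ in the $E_{i^\star}$-model also holds at $x$ in the collective model by the inductive claim — here it is crucial that we carry the stronger statement about the \emph{fixed} witness $i^\star$ through the induction, rather than just "all individuals", since the disjunct witnessing $\psi\vee\chi$ in $E_{i^\star}$ need not be the one witnessing it in the other individual models). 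The key case is $\Box\psi$: suppose $\langle\langle V,E_{i^\star}\rangle,\val\,\rangle,x\models\Box\psi$, i.e., $\psi$ holds at every $y\in E_{i^\star}(x)$. Since $F(\prof{E})\subseteq E_{i^\star}$, we have $F(\prof{E})(x)\subseteq E_{i^\star}(x)$, so $\psi$ holds (in the $E_{i^\star}$-model) at every successor of $x$ in the collective graph. We cannot directly conclude, because we need $\psi$ to hold at those worlds in the \emph{collective} model. But $\psi$ is again a $\Box$-formula, and by the induction hypothesis (applied at each such world $y$ with the same witness $i^\star$, which still satisfies $F(\prof{E})\subseteq E_{i^\star}$) truth of $\psi$ at $y$ in the $E_{i^\star}$-model transfers to truth of $\psi$ at $y$ in the collective model. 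Hence $\Box\psi$ holds at $x$ in the collective model.

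The main obstacle — really the only subtlety — is getting the induction hypothesis phrased correctly so that the $\Box$ case goes through: one must fix a single witness $i^\star$ for the whole profile and prove the transfer relative to that witness at \emph{every} world simultaneously, rather than proving a world-by-world or individual-by-individual statement. Once the statement is set up as "for all $x$, if $\phi$ holds at $x$ in $\langle\langle V,E_{i^\star}\rangle,\val\,\rangle$ then $\phi$ holds at $x$ in $\langle\langle V,F(\prof{E})\rangle,\val\,\rangle$", the induction is routine: shrinking the accessibility relation can only make a $\Box$-formula easier to satisfy, and there are no $\Diamond$-subformulas to spoil monotonicity. Finally, the hypothesis of the proposition says exactly that such an $i^\star$ exists for every $\prof{E}$, so if $\phi$ is true at $x$ in all individual models it is in particular true at $x$ in the $E_{i^\star}$-model, and the induction delivers truth of $\phi$ at $x$ in the collective model; as $\val$ and $x$ were arbitrary, $F$ is world collectively rational with respect to $\phi$.
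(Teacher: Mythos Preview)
Your proof is correct and follows essentially the same approach as the paper: both arguments rest on the monotonicity fact that $\Box$-formulas are preserved when the accessibility relation shrinks, and then use the witness $i^\star$ to transfer truth from the individual model to the collective one. The only difference is that the paper invokes this monotonicity as a ``basic property of $\Box$-formulas'' without proof, whereas you supply an explicit structural induction establishing it (and you correctly identify the need to carry a universally quantified statement about the fixed witness $i^\star$ through the induction so that the $\Box$ case goes through).
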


\begin{proof}
The proof hinges on a basic property of $\Box$-formulas, namely that of being preserved if the set of edges in a model gets reduced by deleting some of the edges.
So let $\phi$ be a $\Box$-formula and let $\prof{E}$ be a profile. Fix a world $x\in V$ and a valuation $\val$ such that $\langle \langle V,E_i\rangle,\val\,\rangle, x\models \phi$ for all $i\in \N$. 
In particular, we have $\langle\langle V,E_{i^\star}\rangle,\val\,\rangle, x\models \phi$. Since, by assumption, $F$ is such that  $F(\prof{E})\subseteq E_{i^\star}$, all boxed formulas that are true in $\langle\langle V,E_i\rangle,\val\,\rangle$ at $x$ are also true in the collective model $\langle\langle V,F(\prof{E})\rangle,\val\,\rangle$ at $x$; thus, $\langle\langle V,F(\prof{E})\rangle,\val\,\rangle,x\models\phi$.
\end{proof}

\noindent
Note that the individual $i^\star$ in Proposition~\ref{prop:box} need not be the same in all profiles. But of course, it \emph{can} be. This observation immediately leads to the following corollary:

\begin{corollary}\label{cor:oligarchy-box}
Any oligarchic aggregation rule is world collectively rational with respect to all $\Box$-formulas.
\end{corollary}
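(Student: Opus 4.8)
The plan is to derive Corollary~\ref{cor:oligarchy-box} immediately from Proposition~\ref{prop:box}, so the proof is essentially a one-line observation: every oligarchic aggregation rule satisfies the hypothesis of that proposition. First I would recall that an oligarchy $F_{C^\star}$ of a nonempty coalition $C^\star\subseteq\N$ is defined by $F_{C^\star}(\prof{E})=\bigcap_{i\in C^\star}E_i$. Since $C^\star$ is nonempty, we may pick any $i^\star\in C^\star$, and then $F_{C^\star}(\prof{E})=\bigcap_{i\in C^\star}E_i\subseteq E_{i^\star}$ for every profile $\prof{E}$. Thus the condition ``for every profile $\prof{E}$ there exists $i^\star\in\N$ with $F(\prof{E})\subseteq E_{i^\star}$'' required by Proposition~\ref{prop:box} is met (indeed with a single fixed $i^\star$ working across all profiles, as the remark preceding the corollary points out). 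Applying Proposition~\ref{prop:box} then yields that $F_{C^\star}$ is world collectively rational with respect to all $\Box$-formulas.

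There is essentially no obstacle here; the only thing to be careful about is invoking the nonemptiness of $C^\star$, which is part of the definition of an oligarchy and is exactly what guarantees that a witnessing $i^\star$ exists. I would phrase the proof in two sentences, as follows.

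\begin{proof}
Let $F_{C^\star}$ be the oligarchy of a nonempty coalition $C^\star\subseteq\N$, and fix any $i^\star\in C^\star$. Then for every profile $\prof{E}$ we have $F_{C^\star}(\prof{E})=\bigcap_{i\in C^\star}E_i\subseteq E_{i^\star}$, so the hypothesis of Proposition~\ref{prop:box} is satisfied; hence $F_{C^\star}$ is world collectively rational with respect to all $\Box$-formulas.
\end{proof}
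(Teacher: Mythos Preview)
Your proposal is correct and matches the paper's approach exactly: the corollary is presented as an immediate consequence of Proposition~\ref{prop:box}, via the observation that for an oligarchy one can fix any $i^\star\in C^\star$ (using nonemptiness of $C^\star$) and then $F_{C^\star}(\prof{E})\subseteq E_{i^\star}$ for every profile.
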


\noindent
For formulas involving only the existential modality~$\Diamond$, we have to ensure that the collective model includes ``enough'' edges:

\begin{proposition}\label{prop:diamond}
If an aggregation rule~$F$ is such that for every profile $\prof{E}$ there exists an individual $i^\star \in \N$ such that $F(\prof{E})\supseteq E_{i^\star}$, then $F$ is world collectively rational with respect to all $\Diamond$-formulas.
\end{proposition}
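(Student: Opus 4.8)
The plan is to mirror the proof of Proposition~\ref{prop:box}, exploiting the dual preservation behaviour of $\Diamond$-formulas: whereas $\Box$-formulas are preserved when edges are \emph{deleted}, $\Diamond$-formulas are preserved when edges are \emph{added}. So the key structural fact to establish first is the following monotonicity lemma: if $E\subseteq E'$ and $\psi$ is a $\Diamond$-formula (a formula in NNF with no occurrence of $\Box$), then for every valuation $\val$ and every world $x$, $\langle\langle V,E\rangle,\val\,\rangle,x\models\psi$ implies $\langle\langle V,E'\rangle,\val\,\rangle,x\models\psi$. This is proved by a routine induction on the structure of $\psi$: the base cases $p$ and $\neg p$ depend only on $\val$ and are unaffected; $\wedge$ and $\vee$ are immediate from the induction hypothesis; and the $\Diamond$ case uses that $E(x)\subseteq E'(x)$, so a witness $y\in E(x)$ with $\langle\langle V,E\rangle,\val\,\rangle,y\models\chi$ remains a witness (applying the induction hypothesis to $\chi$ at $y$) in the larger frame. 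Note that it is precisely the absence of $\Box$ that makes this work, since $\Box$ is anti-monotone in the edge set.

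Given this lemma, the proof of the proposition is short. Fix a $\Diamond$-formula $\phi$, a profile $\prof{E}$, a world $x\in V$, and a valuation $\val$ such that $\langle\langle V,E_i\rangle,\val\,\rangle,x\models\phi$ for all $i\in\N$. By hypothesis there is some $i^\star\in\N$ with $F(\prof{E})\supseteq E_{i^\star}$. In particular $\langle\langle V,E_{i^\star}\rangle,\val\,\rangle,x\models\phi$, and since $E_{i^\star}\subseteq F(\prof{E})$, the monotonicity lemma gives $\langle\langle V,F(\prof{E})\rangle,\val\,\rangle,x\models\phi$. As $x$ and $\val$ were arbitrary, $F$ is world collectively rational with respect to~$\phi$.

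There is really no serious obstacle here; the only point requiring a little care is getting the induction in the monotonicity lemma exactly right, and in particular being explicit that the restriction to NNF without $\Box$ is what rules out the problematic anti-monotone case. (If one wanted to be thorough one could also remark that "$\Diamond$-formula" as defined already excludes $\imp$, so every connective appearing is one of $\neg$ applied to atoms, $\wedge$, $\vee$, $\Diamond$, which is exactly the list handled by the induction.) I would also note in passing, as the analogue of Corollary~\ref{cor:oligarchy-box}, that any rule returning one of the input graphs verbatim—in particular any representative-voter rule, and hence any dictatorship—satisfies the hypothesis of the proposition, since $F(\prof{E})=E_{i^\star}\supseteq E_{i^\star}$; this makes the result non-vacuous and connects it to the classes of rules discussed earlier.
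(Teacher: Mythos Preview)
Your proof is correct and follows essentially the same approach as the paper: both argue by the monotonicity of $\Diamond$-formulas under edge addition and then apply this to the individual $i^\star$ whose graph is contained in $F(\prof{E})$. You are simply more explicit than the paper in spelling out the structural induction behind the monotonicity lemma, which the paper asserts without proof.
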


\begin{proof}
The proof is analogous to that of Proposition~\ref{prop:box}, this time using the property of $\Diamond$-formulas being preserved when the set of edges in a model is expanded by adding edges.
Let $\phi$ be a $\Diamond$-formula and let $\prof{E}$ be a profile such that, for a given world $x\in V$ and valuation $\val$, we have that $\langle \langle V,E_i\rangle,\val\,\rangle, x\models \phi$ for all $i\in \N$. 
By assumption, we know that that $F(\prof{E})\supseteq E_{i^\star}$. Hence, from the fact that $\langle\langle V,E_{i^\star}\rangle,\val\,\rangle, x\models \phi$ and that $\phi$ is a $\Diamond$-formula we can conclude that $\langle\langle V,F(\prof{E})\rangle,\val\,\rangle,x\models\phi$.
\end{proof}

\noindent
Examples for aggregation rules that satisfy the assumptions of Proposition~\ref{prop:diamond} are the dictatorships and the union rule. Oligarchic rules (other than the dictatorships), however, do not. Instead, in analogy to Corollary~\ref{cor:oligarchy-box}, any aggregation rule that always returns the union of the graphs provided by some fixed coalition is world collectively rational with respect to all $\Diamond$-formulas.

Propositions~\ref{prop:box} and~\ref{prop:diamond} together suggest a sufficient condition for an  aggregation rule to preserve truth for any kind of formula. Recall that a representative-voter rule is an aggregation rule~$F$ that is such that for every profile $\prof{E}$ there exists an individual $i^\star\in\N$ such that $F(\prof{E}) = E_{i^\star}$ (see Definition~\ref{def:representative-voter-rule}).

\begin{proposition}\label{prop:formulas}
%If an aggregation rule~$F$ is such that for every profile $\prof{E}$ there exists an individual $i^\star\in\N$ such that $F(\prof{E}) = E_{i^\star}$, then $F$ 
Any representative-voter rule is world collectively rational with respect to all modal integrity constraints.
\end{proposition}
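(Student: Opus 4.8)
The statement is that any representative-voter rule is world collectively rational with respect to *all* modal integrity constraints — not just $\Box$-formulas or $\Diamond$-formulas, but arbitrary formulas mixing both modalities. The key structural fact to exploit is that a representative-voter rule returns, for each profile~$\prof{E}$, one of the input graphs *exactly*: there is some $i^\star\in\N$ with $F(\prof{E}) = E_{i^\star}$. So the collective frame is literally identical to one of the individual frames, edge for edge.

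**Main argument.** Let $\phi$ be an arbitrary modal integrity constraint, let $\prof{E}$ be a profile, fix a world $x\in V$ and a valuation $\val:\Phi\to 2^V$, and suppose $\langle\langle V,E_i\rangle,\val\,\rangle, x \models \phi$ for every $i\in\N$. Since $F$ is a representative-voter rule, there exists $i^\star\in\N$ with $F(\prof{E}) = E_{i^\star}$. In particular the hypothesis gives $\langle\langle V,E_{i^\star}\rangle,\val\,\rangle, x \models \phi$. But $\langle\langle V,F(\prof{E})\rangle,\val\,\rangle$ is the very same model as $\langle\langle V,E_{i^\star}\rangle,\val\,\rangle$ — same set of worlds~$V$, same accessibility relation (since $F(\prof{E})=E_{i^\star}$), same valuation~$\val$ — so the truth of every formula at every world is the same in both. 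Hence $\langle\langle V,F(\prof{E})\rangle,\val\,\rangle, x \models \phi$, which is exactly what world collective rationality with respect to~$\phi$ demands. As $\phi$, $\prof{E}$, $x$, and $\val$ were arbitrary, $F$ is world collectively rational with respect to all modal integrity constraints.

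**Where the work is (and isn't).** There is essentially no obstacle here: the whole point of a representative-voter rule is that it commits no "aggregation" in the sense relevant to modal semantics — it copies one agent's frame wholesale, so no edge is added or deleted relative to that agent, and truth at a world is a property of the model alone. This is in fact the common strengthening of Propositions~\ref{prop:box} and~\ref{prop:diamond}: a representative-voter rule simultaneously satisfies $F(\prof{E})\subseteq E_{i^\star}$ and $F(\prof{E})\supseteq E_{i^\star}$ with the *same* $i^\star$, so one could alternatively just invoke those two propositions and note that the two preservation arguments (truth of $\Box$-formulas is preserved under edge deletion, truth of $\Diamond$-formulas under edge addition) can be run together on an arbitrary formula when the edge set is unchanged. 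I would present the direct one-model argument above, since it is cleanest and makes explicit that the result holds for *arbitrary* $\phi$, not merely $\Box$- or $\Diamond$-formulas. The only thing to be slightly careful about is that $i^\star$ may depend on $\prof{E}$ (just as in Proposition~\ref{prop:box}), but this causes no difficulty: the argument is carried out profile by profile, and within a fixed profile $i^\star$ is fixed.
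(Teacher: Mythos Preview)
Your proof is correct and follows exactly the same approach as the paper's own proof, which simply notes that the result is immediate from Definition~\ref{def:W-CR} because the collective graph is a copy of one of the individual graphs. Your version is more detailed, but the underlying argument---that the collective model is literally identical to some individual model, so truth at any world is preserved---is the same.
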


\begin{proof}
Immediate from Definition~\ref{def:W-CR}: If the collective graph is a copy of one of the individual graphs, then all formulas that are true at the individual level will remain true at the collective level.
\end{proof}

\noindent 
Proposition~\ref{prop:formulas} is related to a result for binary aggregation characterising the representative-voter rules as those binary aggregation rules that are collectively rational with respect to all \emph{propositional} integrity constraints~\cite{GrandiEndrissAIJ2013}. 
%
%[[UG: this is one possibility to continue]] 
Interestingly, for graph aggregation and \emph{modal} integrity constraints, we do \emph{not} obtain such a result; the converse of Proposition~\ref{prop:formulas} does not hold. %, as shown by the following proposition:
The reason is that modal logic is not fully expressive on graphs: it cannot distinguish between models that are \emph{bisimilar}~\cite{BlackburnEtAl2001}. Rather than recalling the formal definition of bisimilarity here, we conclude by giving an example for an aggregation rule that, despite not being a representative-voter rule, is world collectively rational with respect to all modal integrity constraints.

\begin{example}[Aggregation and bisimilarity]
Let $V=\{x,y\}$ and $\Phi=\{p\}$. Suppose we are interested in world collective rationality at world~$x$ and relative to models with valuation~$\val(p)=\{x,y\}$. Let $F$ be the aggregation rule that is almost the dictatorship of agent~1, except that in case $E_1=\{(x,x),(y,y)\}$, rather than reproducing that graph, it returns the special graph $\{(x,y),(y,x)\}$. The two models based on these two graphs are bisimilar: no modal formula can distinguish them. For example, $\Diamond\Diamond p$ is true at both worlds in both of them, while $\neg p$ is false at both worlds in both of them. Clearly, $F$ is world collectively rational with respect to every modal integrity constraint, just as the dictatorship of agent~1 is. 
\end{example}

%%%%%%%%%%%%%%%%%%%%%%%%%%%%%%%%%%%%%%%%%%%%%%%%%%%%%%%%%%%%%%%%%%%%%%%%%%%%%%%%
\section{Applications in Artificial Intelligence}
\label{sec:applications}
%%%%%%%%%%%%%%%%%%%%%%%%%%%%%%%%%%%%%%%%%%%%%%%%%%%%%%%%%%%%%%%%%%%%%%%%%%%%%%%%

\noindent
In Section~\ref{sec:examples}, we introduced several scenarios that together exemplify the range of applications in which graph aggregation can play a role. In this section, we will revisit some of these scenarios, particularly those featuring prominently in AI research, and show how our results, notably our general impossibility theorems, can be put to use in these domains. Some of the results we will present are new, but most of them instead highlight how our approach can be used to clarify known results and to obtain significantly simpler proofs for them.
We will discuss applications of our approach 
to preference aggregation for agents that are not perfectly rational %and thus may have incomplete preferences 
(Section~\ref{sec:incomplete-preferences}), 
to nonmonotonic reasoning and belief merging (Section~\ref{sec:nonmono}),
to clustering analysis (Section~\ref{sec:clustering}),
and to abstract argumentation in multiagent systems (Section~\ref{sec:argumentation}).

Recall that an Arrovian aggregation rule is a rule that is unanimous, grounded, and IIE. We will use this terminology throughout this section. Also, to simplify the statements of theorems, when in this section we speak of ``aggregation rules for~$X$'', with $X$ being some family of graphs, we are referring to aggregation rules that are collectively rational with respect to the graph properties characterising~$X$. For example, Arrow's Theorem speaks about aggregation rules for weak orders, i.e., aggregation rules that are collectively rational with respect to the three graph properties defining weak orders.

%%%%%%%%%%%%%%%%%%%%%%%%%%%%%%%%%%%%%%%%%%%%%%%%%%%%%%%%%%%%%%%%%%%%%%%%%%%%%%%%
\subsection{Bounded Rationality: Aggregation of Incomplete Preferences}
\label{sec:incomplete-preferences}
%%%%%%%%%%%%%%%%%%%%%%%%%%%%%%%%%%%%%%%%%%%%%%%%%%%%%%%%%%%%%%%%%%%%%%%%%%%%%%%%

\noindent
In the economics literature, and thus in essentially all classical contributions to social choice theory, preferences are usually assumed to be \emph{complete}. Thus, for any two alternatives, a decision maker is assumed to be able to decide which of them she prefers or whether she is indifferent between them. In AI, on the other hand, such an assumption would often be considered controversial. Rather, an agent may not always be able to provide a complete preference order. This kind of bounded rationality could be due to the agent lacking relevant information or due to her lacking the necessary computational resources to arrive at a complete ranking. This is particularly relevant in domains where agents are asked to express preferences over very large sets of alternatives. Indeed, many of the formal preference representation languages developed in AI, such as CP-nets~\cite{BoutilierEtAlJAIR2004}, are not even able to express all complete preference orders~\cite{LangAMAI2004}. 

It therefore is important to understand the options available to us for aggregating \emph{incomplete preferences}, which are often modelled as \emph{preorders}, i.e., binary relations that are reflexive and transitive.\footnote{Thus, a weak order, which we have used to model preferences up to this point, is a preorder that is complete.} 
First, observe that Arrow's Theorem does \emph{not} apply to the aggregation of such incomplete preferences. A simple counterexample is the intersection rule, which is unanimous, grounded, IIE, and collectively rational with respect to both reflexivity and transitivity, i.e., it correctly maps profiles of preorders to single preorders---yet it is not a dictatorship. Of course, the intersection rule does not qualify as a very attractive rule either. It is an oligarchic rule, and in fact we can easily prove the following characterisation result:

\begin{theorem}
Let $F$ be an aggregation rule for preferences---modelled as preorders---over three or more alternatives.
Then $F$ is Arrovian %and monotonic 
if and only if it is oligarchic.  
\end{theorem}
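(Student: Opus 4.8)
The plan is to prove both directions using the machinery already developed. For the "if" direction, I would show directly that every oligarchic aggregation rule (with oligarchy $C^\star\subseteq\N$) is Arrovian \emph{and} collectively rational with respect to reflexivity and transitivity: groundedness is immediate since $\bigcap_{i\in C^\star}E_i\subseteq E_1\cup\cdots\cup E_n$; unanimity holds because $E_1\cap\cdots\cap E_n\subseteq\bigcap_{i\in C^\star}E_i$; IIE is clear since membership of $e$ in the outcome depends only on $N^{\prof{E}}_e$ (namely on whether $C^\star\subseteq N^{\prof{E}}_e$); reflexivity transfers because the intersection of reflexive graphs is reflexive; and transitivity transfers because the intersection of transitive graphs is transitive (if $(x,y),(y,z)$ are in every $E_i$, $i\in C^\star$, then so is $(x,z)$). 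Hence every oligarchy is an Arrovian aggregation rule for preorders.

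For the "only if" direction, I would apply the general Oligarchy Theorem (Theorem~\ref{thm:oligarchy}) together with the strengthening noted in Section~\ref{sec:variants}. The key point is that preorders are exactly the graphs satisfying reflexivity and transitivity. By Fact~\ref{fact:contagiousness} and Fact~\ref{fact:implicativeness}, transitivity is both contagious and implicative (and one checks the relevant richness conditions are undisturbed by also requiring reflexivity, which is trivial here). So any unanimous, grounded, IIE rule that is collectively rational with respect to reflexivity and transitivity is, by Theorem~\ref{thm:oligarchy}, oligarchic on nonreflexive edges. Then, using Proposition~\ref{prop:unanimity-reflexivity} — unanimity already forces collective rationality with respect to reflexivity, and under the assumption that all input graphs are reflexive the collective graph contains all loops — an NR-oligarchic rule is in fact a \emph{full} oligarchy: the outcome on reflexive edges $(x,x)$ is forced to be "accept" (all individuals, hence all oligarchs, accept $(x,x)$), which agrees with $\bigcap_{i\in C^\star}E_i$ on those edges as well.

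The main obstacle, and the only place requiring any care, is the bookkeeping in the "only if" direction: one must confirm that the conclusion of Theorem~\ref{thm:oligarchy} — oligarchic \emph{on nonreflexive edges} — genuinely upgrades to a full oligarchy in the presence of reflexivity, rather than merely matching some oligarchy off the diagonal. This is exactly the variant of Theorem~\ref{thm:oligarchy} flagged in Section~\ref{sec:variants} (add reflexivity to the collective rationality requirements, and for a unanimous rule on reflexive inputs NR-oligarchic becomes fully oligarchic), so the argument is: the rule agrees with $F_{C^\star}$ on all $(x,y)$ with $x\neq y$ by Theorem~\ref{thm:oligarchy}, and agrees with $F_{C^\star}$ on all $(x,x)$ because both sides accept every loop (the left side by collective rationality with respect to reflexivity via Proposition~\ref{prop:unanimity-reflexivity}, the right side because $\bigcap_{i\in C^\star}E_i$ is reflexive whenever each $E_i$ is). Everything else is a routine instantiation of results already proved, so the proof should be short.
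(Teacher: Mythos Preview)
Your proposal is correct and follows essentially the same approach as the paper's own proof: invoke Theorem~\ref{thm:oligarchy} via the contagiousness and implicativeness of transitivity (Facts~\ref{fact:contagiousness} and~\ref{fact:implicativeness}), then use reflexivity together with unanimity to upgrade NR-oligarchic to fully oligarchic, and treat the converse as immediate. The paper compresses all of this into two sentences, whereas you spell out the bookkeeping (in particular the loop-edge argument and the richness check under the added reflexivity requirement), but the underlying route is identical.
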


%\uenote{Check relationship to Gibbard's oligarchy theorem (?). Also check whether Dietrich and List (``JA w/o full rationality'') have a theorem of this kind. UG: It is hard to find a decent statement of Gibbard's thm. The main point is that it does not use monotonicity. It is rather like Arrow's thm, but requires collectively rational wrt to weaker properties. But the input may be different than the output, individuals send preferences and the outcome can be a preorder (though I am not 100\% sure about this). It seems difficult to compare with our result for this last reason.}

\begin{proof}
The left-to-right direction follows from Theorem~\ref{thm:oligarchy}, as transitivity is contagious and implicative, and as reflexivity permits us to reduce NR-oligarchies to full oligarchies. 
The other direction is immediate.
\end{proof}

\noindent
Let us say that a preorder~$E$ \emph{has maxima} if there exists at least one element such that no other element is strictly preferred to it:
\[\exists x.\forall y.xEy \quad \mbox{($E$ has maxima)}\]
Thus, the preference order modelled by $E$ may be incomplete, but there is at least one element that is at least as preferable as any other. Similarly, let us say that $E$ \emph{has minima} if there exists at least one element that is at least as bad as any other element:
\[\exists x.\forall y.yEx \quad \mbox{($E$ has minima)}\]
\citet{PiniEtAlJLC2009} study Arrovian impossibilities for incomplete preferences in detail. They call an incomplete preference order (i.e., a preorder) ``restricted'' if it has maxima or minima (or both). Their main result is a variant of Arrow's Theorem for such restricted incomplete preferences~\cite[Theorem~5]{PiniEtAlJLC2009}:

\begin{theorem}[\thmcite{PiniEtAlJLC2009}]\label{thm:pini}
Any Arrovian aggregation rule for preferences---when modelled as preorders that have maxima or minima---over three or more alternatives must be a dictatorship.
\end{theorem}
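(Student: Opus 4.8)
The plan is to derive this as an instance of the Dictatorship Theorem (Theorem~\ref{thm:dictator}), using the variant that strengthens NR-dictatorship to full dictatorship in the presence of reflexivity (the first of the variants discussed after Theorem~\ref{thm:dictator}). So I need to check that the graph property defining the class of graphs in question---reflexive, transitive preorders that additionally have maxima or minima---is contagious, implicative, and disjunctive, and that the relevant richness conditions survive the intersection of these properties. Reflexivity contributes nothing to contagiousness/implicativeness/disjunctiveness, but it lets us upgrade the conclusion from "NR-dictatorial" to "dictatorial"; transitivity supplies both contagiousness (Fact~\ref{fact:contagiousness}) and implicativeness (Fact~\ref{fact:implicativeness}). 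The missing ingredient is disjunctiveness, which for plain preorders is \emph{not} available---that is exactly why the previous theorem only gave an oligarchy. Here the extra structure "has maxima or minima" must be what forces a disjunction.

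The heart of the argument is therefore to show that the property "transitive and (has maxima or has minima)" is disjunctive, i.e., to exhibit disjoint edge sets $S^+, S^-$ and distinct edges $e_1, e_2 \notin S^+ \cup S^-$ such that every graph in $\CONFIG$ contains $e_1$ or $e_2$, while there are witnessing graphs in $\CONFIG$ containing only $e_1$ and only $e_2$ respectively. The idea is to pick three distinct vertices $v_1, v_2, v_3$ and put into $S^+$ enough edges, and into $S^-$ enough edges, to pin down almost the entire structure so that the only remaining freedom is whether the edge $(v_3, v_1)$ or the edge $(v_3, v_2)$ is present---and to arrange things so that in a reflexive transitive graph satisfying these side constraints, the "has maxima or minima" clause cannot be satisfied unless at least one of those two edges is present. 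Concretely, one natural attempt: force (via $S^+$) all reflexive loops and the edges making $v_1$ and $v_2$ mutually incomparable "top-ish" elements while making $v_3$ sit below both, and use $S^-$ to forbid $v_1 E v_2$, $v_2 E v_1$, and any edge from $v_1$ or $v_2$ down to $v_3$; then the only candidate maximum would have to be $v_3$ (impossible, since $v_1, v_2$ are not below it) unless $v_3 E v_1$ or $v_3 E v_2$ makes something work out, and symmetrically one fine-tunes so that the minima clause also fails unless one of $e_1, e_2$ holds. I will need to do this bookkeeping carefully, possibly on a small fixed vertex set and then noting that adding isolated-enough extra vertices (pushed into $S^-$ appropriately) preserves everything for $|V| \geq 3$.

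Once disjunctiveness is established, the rest is routine: transitivity gives contagiousness and implicativeness by Facts~\ref{fact:contagiousness} and~\ref{fact:implicativeness}; one checks the combined richness conditions (the witnessing graphs $E_0, E_1, E_2, E_{13}, E_{123}$ for implicativeness and $E_1, E_2$ for disjunctiveness can all be taken to also have maxima---indeed by making $v_1$ a global maximum in each of them---and to be reflexive and transitive); then Theorem~\ref{thm:dictator} yields NR-dictatorship, and reflexivity (built into the preorder requirement, so Proposition~\ref{prop:unanimity-reflexivity} applies, or more simply: every input graph is reflexive) upgrades this to a full dictatorship. I expect the main obstacle to be precisely the construction of $S^+$ and $S^-$ for disjunctiveness: one must simultaneously (i) leave $e_1$ and $e_2$ genuinely free, (ii) force the disjunction $e_1 \vee e_2$ using only transitivity plus the maxima-or-minima clause, and (iii) keep the configuration $\CONFIG$ nonempty with the various witnesses, all while the side constraints $S^+, S^-$ are themselves consistent with being extendable to a reflexive transitive preorder with maxima. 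Getting a single clean choice that handles the "maxima \emph{or} minima" disjunction (rather than just "has maxima") is the delicate point, and it may be cleanest to choose the gadget so that, under the side constraints, \emph{neither} a maximum nor a minimum can exist unless $e_1 \vee e_2$ holds.
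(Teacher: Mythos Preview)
Your approach is exactly the paper's: derive the result from Theorem~\ref{thm:dictator} using transitivity for contagiousness and implicativeness (Facts~\ref{fact:contagiousness} and~\ref{fact:implicativeness}), the ``has maxima or minima'' clause for disjunctiveness, and reflexivity to upgrade NR-dictatorship to full dictatorship. The paper, like you, does not spell out the disjunctiveness gadget and simply asserts it; your instinct that this is the only nontrivial check is right, and a clean choice is to put into $S^+$ all reflexive loops together with all edges $(v_i,v_1)$ and $(v_i,v_2)$ for $i\geq 3$, into $S^-$ all remaining nonreflexive edges except $e_1=(v_1,v_2)$ and $e_2=(v_2,v_1)$, so that no vertex can be a maximum and only $v_1$ or $v_2$ can be a minimum, forcing $e_1\vee e_2$.
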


\begin{proof}
The claim follows from Theorem~\ref{thm:dictator}, considering that transitivity is contagious and implicative, having maxima or minima is disjunctive, and reflexivity allows us to remove the restriction to nonreflexive edges.
In other words, the proof is identical to that of Theorem~\ref{thm:arrow}, except that now the disjunctive property of having maxima or minima takes over the role of the disjunctive property of completeness.
\end{proof}

\noindent
In fact, Theorem~\ref{thm:pini} is slightly stronger than the result stated by \citeauthor{PiniEtAlJLC2009}, who only require preferences to be restricted in the output but admit arbitrary preorders in the input (note that by admitting a wider range of inputs, encountering an impossibility becomes more likely). 
Besides making available a much simpler proof than the one originally given by \citeauthor{PiniEtAlJLC2009}, our approach shows that the focus on preorders that have maxima or minima is somewhat arbitrary. Any other property that is disjunctive, such as the strictly weaker nontriviality property (see Table~\ref{tab:graph-properties}), would have delivered the same result.

\citeauthor{PiniEtAlJLC2009} also prove variants of other classical theorems, notably the Muller-Satterthwaite Theorem and the Gibbard-Satterthwaite Theorem. Discussing these results is beyond  the scope of this paper. Having said this, it is well known that in the classical setting they can be obtained as relatively simple corollaries to Arrow's Theorem~\cite{EndrissLPT2011}, so our approach is likely to have fruitful applications also here.

%%%%%%%%%%%%%%%%%%%%%%%%%%%%%%%%%%%%%%%%%%%%%%%%%%%%%%%%%%%%%%%%%%%%%%%%%%%%%%%%
\subsection{Nonmonotonic Reasoning  and Belief Merging}
\label{sec:nonmono}
%%%%%%%%%%%%%%%%%%%%%%%%%%%%%%%%%%%%%%%%%%%%%%%%%%%%%%%%%%%%%%%%%%%%%%%%%%%%%%%%

\noindent
Aggregation plays a role in several contributions to the literature on nonmonotonic reasoning in AI. This is the case both for models of commonsense reasoning for a single intelligent agent who has to aggregate the possibly conflicting views arising from several different inference rules~\cite{DoyleWellmanAIJ1991}, and for work on merging the beliefs of several agents in a multiagent system~\cite{MaynardZhangLehmannJAIR2003}. In some approaches to nonmonotonic reasoning, alternative states of belief that an agent or a multiagent system might adopt are structured in terms of plausibility orderings that indicate which states are preferred to which other states according to a given criterion or a given individual agent. Such plausibility orders (often referred to as preferences in the literature) of course are graphs, so this boils down to a question of graph aggregation.\footnote{In other approaches to belief merging, belief bases themselves rather than the underlying plausibility orders are being aggregated~\cite{KoniecznyPinoPerezJLC2002}. These approaches are closely related to judgment aggregation~\cite{EveraereEtAlAAMAS2015}, rather than graph aggregation, and we shall not discuss them here.} Plausibility orders are reflexive and transitive, i.e., they are naturally modelled as preorders. In addition, different authors impose different additional requirements. We now review two contributions to nonmonotonic reasoning that involve graph aggregation.

The starting point of \citet{DoyleWellmanAIJ1991} is the observation that prior attempts at integrating various specialised patterns of commonsense inference into a universal logic of nonmonotonic reasoning have failed, and they try to explain this observation in terms of an Arrovian impossibility result for plausibility orders. They recognise that Arrow's Theorem does not extend to the aggregation of preorders, but also do not consider adding a completeness requirement as being appropriate in this context. Instead, besides independence and the weak Pareto condition, they invoke one additional axiom. \citeauthor{DoyleWellmanAIJ1991} call an aggregation rule~$F$ \emph{conflict-resolving} if, for all $x,y\in V$, it is the case that if $(x,y)\in E_i$ holds for at least one $i\in\N$, then $(x,y)\in F(\prof{E})$ or $(y,x)\in F(\prof{E})$ must hold as well. That is, if at least one agent ranks $x$ and $y$, then the output of $F$ must rank $x$ and $y$ as well (but not necessarily in the same direction). The main theorem of \citeauthor{DoyleWellmanAIJ1991} may be paraphrased as follows~\cite[Theorem~4.3]{DoyleWellmanAIJ1991}:

\begin{theorem}[\thmcite{DoyleWellmanAIJ1991}]
Any aggregation rule for plausibility orders---when modelled as preorders---over three or more states of belief that is Arrovian and conflict-resolving must be a dictatorship.
\end{theorem}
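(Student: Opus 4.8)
The plan is to derive this theorem from the Oligarchy Theorem (Theorem~\ref{thm:oligarchy}) in essentially the same way that Theorem~\ref{thm:arrow} and Theorem~\ref{thm:pini} are derived from the Dictatorship Theorem, with the conflict-resolving axiom playing the role that a disjunctive graph property plays in those proofs. Since plausibility orders are preorders, i.e., reflexive and transitive graphs, and since transitivity is both contagious (Fact~\ref{fact:contagiousness}) and implicative (Fact~\ref{fact:implicativeness}) for $|V|\geq 3$, Theorem~\ref{thm:oligarchy} immediately tells us that any Arrovian aggregation rule~$F$ that is collectively rational with respect to reflexivity and transitivity must be oligarchic on nonreflexive edges. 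Fix the associated nonempty coalition $C^\star\subseteq\N$, so that $(x,y)\in F(\prof{E}) \Leftrightarrow (x,y)\in\bigcap_{i\in C^\star}E_i$ for all $x\neq y$.

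The heart of the argument is then to show that the conflict-resolving property forces $|C^\star|=1$. Suppose for contradiction that $C^\star$ contains two distinct individuals $i$ and $j$, and fix two distinct vertices $x,y\in V$. Consider the profile $\prof{E}$ in which individual~$i$ reports the preorder $\{(v,v):v\in V\}\cup\{(x,y)\}$, individual~$j$ reports $\{(v,v):v\in V\}\cup\{(y,x)\}$, and every other individual reports the identity relation $\{(v,v):v\in V\}$; one checks immediately that all of these graphs are indeed preorders. Since $(x,y)\in E_i$ for at least one individual, the conflict-resolving property requires $(x,y)\in F(\prof{E})$ or $(y,x)\in F(\prof{E})$. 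But $(x,y)$ is rejected by the oligarch~$j$, so $(x,y)\notin\bigcap_{k\in C^\star}E_k$ and hence $(x,y)\notin F(\prof{E})$; symmetrically, $(y,x)$ is rejected by the oligarch~$i$, so $(y,x)\notin F(\prof{E})$ --- a contradiction. Hence $C^\star=\{i^\star\}$ for some $i^\star\in\N$, i.e., $F$ is dictatorial on nonreflexive edges.

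It remains to lift this to a full dictatorship, which is where we use that all input graphs are reflexive together with unanimity: by Proposition~\ref{prop:unanimity-reflexivity}, $F(\prof{E})$ contains every loop $(v,v)$ whenever all inputs are preorders, and so does $E_{i^\star}$; combined with the agreement on nonreflexive edges this yields $F(\prof{E})=E_{i^\star}$ on every profile of preorders, exactly as in the proof of Theorem~\ref{thm:arrow}. I expect the only genuinely substantive step to be the middle one --- identifying the profile that pits two putative oligarchs against each other on a single pair of vertices so that the conflict-resolving property becomes inconsistent with the oligarchic structure --- and once the Oligarchy Theorem is available this is routine, requiring no new filter- or ultrafilter-theoretic work. (Alternatively, one could bypass Theorem~\ref{thm:oligarchy} and instead mimic the proof of Theorem~\ref{thm:dictator} directly, using the conflict-resolving property in place of a disjunctive property to establish the maximality of the family~$\mathcal{W}$ of winning coalitions; this is the same argument repackaged.)
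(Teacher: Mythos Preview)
Your argument is correct, but the route differs from the paper's. The paper does not invoke the Oligarchy Theorem at all; instead it observes that, on any profile in which at least one agent reports a nonreflexive edge, the conflict-resolving axiom forces the output to contain a nonreflexive edge as well, so on this sub-domain the rule is collectively rational with respect to the conjunction of reflexivity, transitivity, and ``having at least one nonreflexive edge''. Since the latter is a disjunctive property (choose $S^-$ to forbid all nonreflexive edges except $(v_1,v_2)$ and $(v_2,v_1)$), the Dictatorship Theorem applies directly; the remaining degenerate profiles (all agents report the identity relation) are handled by groundedness. Your approach instead first gets an oligarchy $C^\star$ from Theorem~\ref{thm:oligarchy} and then uses a single explicit two-oligarch profile to show that conflict-resolving is incompatible with $|C^\star|\geq 2$. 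This buys you a more self-contained and transparent argument that avoids the slightly delicate domain-restriction step in the paper's proof, at the cost of not exhibiting the result as a straight instance of the Dictatorship Theorem. The alternative you sketch at the end (using conflict-resolving to establish maximality of $\mathcal{W}$) is closer in spirit to the paper's reduction, though the paper applies Theorem~\ref{thm:dictator} as a black box rather than re-running its proof.
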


\begin{proof}
First, restrict attention to profiles where at least one agent submits a nonempty graph. But then the claim is strictly weaker than claiming that, for $|V|\geq 3$, any Arrovian aggregation rule for nontrivial preorders must be dictatorial, which follows from Theorem~\ref{thm:dictator} using the by now familiar approach.
Second, in case where every agent submits an empty graph, due to groundedness the collective graph will be empty as well, i.e., also in this case the dictator gets her will.
\end{proof}

\noindent
\citeauthor{DoyleWellmanAIJ1991} prove their result by inspection of a published proof of Arrow's Theorem, noting that, in that proof, collective rationality with respect to completeness is only ever used when at least one individual expresses a preference between the relevant two alternatives. This is a valid approach, and indeed, the result of \citeauthor{DoyleWellmanAIJ1991} is the theorem most similar to Arrow's original result amongst all the impossibility theorems discussed in this paper. Having said this, we believe that there is some added value in showing their result to be an immediate corollary to another theorem (as we have done here) rather than just showing how it follows \emph{from the proof} of another theorem (as \citeauthor{DoyleWellmanAIJ1991} have done), as this makes it considerably easier for others to verify the result and to prove similar new results themselves.

In work on belief merging, \citet{MaynardZhangLehmannJAIR2003} model plausibility orders as preorders that satisfy the property of negative transitivity (see Table~\ref{tab:graph-properties}), which they call \emph{modularity}. They argue that assuming negative transitivity rather than completeness, together with a modification of the independence axiom, allows them to circumvent Arrow's Theorem and to make reasonable aggregation rules available for belief merging. In the discussion of their result, they stress the significance of both of these changes. However, our analysis clearly shows that replacing completeness by negative transitivity alone has no effect on Arrow's impossibility, as negative transitivity is also a disjunctive property (see Fact~\ref{fact:disjunctiveness}). Hence, the crucial source for the possibility result of \citeauthor{MaynardZhangLehmannJAIR2003} must be their modification of the independence axiom. Indeed, this modification is rather substantial, as it allows for independence to be violated whenever not doing so would lead to what they term a ``conflict''. Thus, our approach is helpful also in this context in pinpointing the precise sources of impossibilities, thereby providing guidance on how they can be avoided.

%%%%%%%%%%%%%%%%%%%%%%%%%%%%%%%%%%%%%%%%%%%%%%%%%%%%%%%%%%%%%%%%%%%%%%%%%%%%%%%%
\subsection{Consensus Clustering}
\label{sec:clustering}
%%%%%%%%%%%%%%%%%%%%%%%%%%%%%%%%%%%%%%%%%%%%%%%%%%%%%%%%%%%%%%%%%%%%%%%%%%%%%%%%

\noindent
Given a set of data points, \emph{clustering} is the task of partitioning that set into subsets, in a way that in some sense is meaningful or useful~\cite{TanEtAl2005}. For example, someone designing an advertising campaign may wish to cluster a dataset about the past purchasing behaviour of a large group of people into a small number of groups of people with similar characteristics. Or someone designing a medical treatment may wish to cluster a medical dataset into subsets of patients with similar symptoms. Clustering has been exceptionally successful in practice, but is still lacking precise theoretical foundations. It is often difficult---and sometimes arguably impossible---to define what would constitute a ``correct'' clustering. The process of trying to find a compromise between the output of several different clustering algorithms is known as \emph{consensus clustering}. Consensus clustering can be modelled as a problem of graph aggregation. To see this, observe that a specific clustering of a given set of data points can be modelled as an \emph{equivalence relation} (i.e., a graph) on that set, by stipulating that two points are equivalent if and only of they belong to the same cluster. 

Recall that an equivalence relation on a set~$V$ is a binary relation on $V$ that is reflexive, symmetric, and transitive. To the best of our knowledge, \citet{Mirkin1975} was the first to analyse the aggregation of equivalence relations using the axiomatic method (see also \citet{BarthelemyEtAlJC1986}). Below we state a very similar result due to \citet{FishburnRubinsteinJC1986}, who in their paper refer to oligarchies as ``conjunctive operators''. 

\begin{theorem}[\thmcite{FishburnRubinsteinJC1986}]\label{thm:fishburn-rubinstein}
Any Arrovian aggregation rule for equivalence relations---which may represent alternative clusterings of a common dataset---over three or more data points must be an oligarchy. 
\end{theorem}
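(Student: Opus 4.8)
The plan is to reduce the claim to the Filter Lemma (Lemma~\ref{lem:oligarchy-filter}), following the same route as the proof of Theorem~\ref{thm:oligarchy} but carrying out the combinatorial steps directly inside the class of equivalence relations. First I would observe that the property ``being an equivalence relation'' is contagious: it is $xy/xz$-contagious for all distinct $x,y,z$, witnessed by $S^+=\{(y,z)\}$ and $S^-=\emptyset$ (the equivalence relation whose only non-singleton class is $\{y,z\}$ contains $(y,z)$ but not $(x,z)$, and the one whose only non-singleton class is $\{x,y,z\}$ contains both $(y,z)$ and $(x,y)$), and symmetrically $xy/zy$-contagious, witnessed by $S^+=\{(z,x)\}$; so it meets condition~$(iii)$ of Definition~\ref{def:contagious}, exactly as transitivity does in Fact~\ref{fact:contagiousness}. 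By the Neutrality Lemma (Lemma~\ref{lem:neutrality}), the Arrovian rule $F$ is therefore NR-neutral, hence governed on nonreflexive edges by a single family of winning coalitions $\mathcal{W}\subseteq2^{\N}$ with $(x,y)\in F(\prof{E})\Leftrightarrow N^{\prof{E}}_{(x,y)}\in\mathcal{W}$ for all distinct $x,y$.

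Next I would prove that $\mathcal{W}$ is a filter. That $\emptyset\notin\mathcal{W}$ follows from groundedness. For closure under intersection, fix distinct vertices $x,y,z$ (available since $|V|\geq3$) and, given $C_1,C_2\in\mathcal{W}$, consider the profile of equivalence relations in which the agents in $C_1\cap C_2$ report the relation with sole non-singleton class $\{x,y,z\}$, those in $C_1\setminus C_2$ report class $\{x,y\}$, those in $C_2\setminus C_1$ report class $\{y,z\}$, and all remaining agents report the identity relation. Then $(x,y)$ is supported exactly by $C_1$, $(y,z)$ by $C_2$, and $(x,z)$ by $C_1\cap C_2$; hence $(x,y),(y,z)\in F(\prof{E})$, and since $F(\prof{E})$ is transitive by collective rationality we get $(x,z)\in F(\prof{E})$, i.e.\ $C_1\cap C_2\in\mathcal{W}$. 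For closure under supersets, given $C_1\in\mathcal{W}$ and $C_1\subseteq C_2\subseteq\N$, use instead the profile in which the agents in $C_1$ report class $\{x,y,z\}$, those in $C_2\setminus C_1$ report the sole non-singleton class $\{x,z\}$, and the rest report the identity; here $(x,y)$ and $(y,z)$ are each supported by $C_1$ (hence accepted), while $(x,z)$ is supported by $C_2$, and transitivity of $F(\prof{E})$ forces $(x,z)\in F(\prof{E})$, so $C_2\in\mathcal{W}$.

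By the Filter Lemma, $F$ is then NR-oligarchic for some nonempty coalition $C^{\star}$. To lift this to a genuine oligarchy I would invoke reflexivity: since every input graph is an equivalence relation, each loop $(v,v)$ lies in every $E_i$, hence in $\bigcap_{i\in C^{\star}}E_i$, and also in $F(\prof{E})$ by unanimity; together with NR-oligarchy this yields $F(\prof{E})=\bigcap_{i\in C^{\star}}E_i$ on every profile of equivalence relations, so $F=F_{C^{\star}}$. The converse direction is routine: every oligarchy is Arrovian and maps profiles of equivalence relations to equivalence relations.

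The delicate point --- and the reason this does not follow by merely quoting Theorem~\ref{thm:oligarchy} --- is that ``being an equivalence relation'' is contagious but \emph{not} implicative in the strict sense of Definition~\ref{def:implicative}: the required richness witness $E_{13}$, an equivalence relation containing $e_1=(x,y)$ and $e_3=(x,z)$ but not $e_2=(y,z)$, cannot exist, because in an equivalence relation any two sides of a triangle force the third via symmetry and transitivity. This same rigidity is precisely what makes the closure-under-supersets step work with the bespoke profile above, in which the extra agents contribute a two-element class in place of a generic $E_{13}$-graph; verifying that this substitute profile still produces the right supporting coalitions is the one place where the argument genuinely departs from, rather than merely instantiates, the proof of Theorem~\ref{thm:oligarchy}.
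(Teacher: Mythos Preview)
Your proof is correct, and in fact it is more careful than the paper's own argument on the one delicate point you flag.

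The paper derives the result by simply invoking Theorem~\ref{thm:oligarchy}, asserting that transitivity is contagious and implicative and that ``the additional requirement of collective rationality with respect to symmetry does not affect the result; in particular, it is easy to verify that the richness conditions in the definitions of contagiousness and implicativeness can still be met.'' Your observation that the richness witness $E_{13}$ of Definition~\ref{def:implicative} cannot exist inside the class of equivalence relations is correct and shows that this assertion is not right as stated. Indeed, if $E_1$ and $E_{123}$ exist and condition~$(i)$ holds, then one can check that in the finest equivalence relation containing $S^+\cup\{e_1\}$ the classes of the endpoints of $e_2$ coincide with the classes of the endpoints of $e_3$; merging the latter (to obtain any candidate $E_{13}$) therefore also merges the former, forcing $e_2$ in. So the property ``equivalence relation'' is genuinely \emph{not} implicative in the sense of Definition~\ref{def:implicative}, and Theorem~\ref{thm:oligarchy} cannot be quoted as a black box.

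Your remedy---reproving closure under supersets with the bespoke profile in which agents in $C_2\setminus C_1$ report the class $\{x,z\}$ (an ``$E_3$''-type witness) rather than an $E_{13}$-type witness---is exactly the right fix, and the rest of your argument (contagiousness via condition~$(iii)$, closure under intersection, Filter Lemma, and the upgrade from NR-oligarchy to full oligarchy via reflexivity and unanimity) mirrors the intended route. So your proof follows the same overall strategy as the paper but is the more rigorous of the two on the supersets step.
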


\begin{proof}
This follows from Theorem~\ref{thm:oligarchy}, together with the fact that transitivity is both a contagious and an implicative graph property, and the observation that collective rationality with respect to reflexivity eliminates the need to distinguish between NR-oligarchic and fully oligarchic rules. The additional requirement of collective rationality with respect to symmetry does not affect the result; in particular, it is easy to verify that the richness conditions in the definitions of contagiousness and implicativeness can still be met.
\end{proof}

\noindent
In fact, not every possible clustering will be useful. In particular, the clustering that puts every single data point in its own little cluster might meet most of the required definitions (e.g., it vacuously ensures that similarity between data points of the same cluster is always greater than similarity between data points belonging to different clusters), but it hardly will be helpful in understanding the structure of the data or in using it. Note that this kind of trivial clustering corresponds to the empty graph. Thus, we may assume that all individual graphs are nontrivial (as defined in Table~\ref{tab:graph-properties}) and we may wish to impose the same constraint on the result of the aggregation rule, i.e., we may wish to impose collective rationality with respect to nontriviality. If we do so, we can further tighten the impossibility result of \citeauthor{FishburnRubinsteinJC1986}:\footnote{We are grateful to Shai Ben-David for alerting us to this connection between consensus clustering and our Dictatorship Theorem (personal communication, June~2015).}

\begin{theorem}
Any Arrovian aggregation rule for nontrivial equivalence relations---which may represent alternative clusterings of a common dataset---over three or more data points must be a dictatorship. 
\end{theorem}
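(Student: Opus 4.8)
The plan is to read off the theorem from the Dictatorship Theorem (Theorem~\ref{thm:dictator}), in exactly the manner in which Theorems~\ref{thm:arrow}, \ref{thm:pini}, and~\ref{thm:fishburn-rubinstein} were obtained. Here the relevant graph property~$P$ is ``being a nontrivial equivalence relation'', i.e., the conjunction of reflexivity, symmetry, transitivity, and nontriviality. Transitivity is contagious and implicative (Facts~\ref{fact:contagiousness} and~\ref{fact:implicativeness}) and nontriviality is disjunctive (Fact~\ref{fact:disjunctiveness}), so $P$ should inherit all three meta-properties; Theorem~\ref{thm:dictator} then forces $F$ to be dictatorial on nonreflexive edges, and collective rationality with respect to reflexivity---which, by Proposition~\ref{prop:unanimity-reflexivity}, comes for free from unanimity---upgrades this to a full dictatorship, just as in the proof of Theorem~\ref{thm:arrow}. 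The entire content of the proof therefore lies in checking that the meta-properties, and especially their \emph{richness} clauses, survive the simultaneous presence of symmetry and nontriviality.

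For disjunctiveness I would exhibit the following witness: fix three distinct vertices $v_1,v_2,v_3$, put $S^+=\emptyset$ and let $S^-$ be the set of all nonreflexive edges other than those joining $v_1$ with $v_2$ and those joining $v_1$ with $v_3$; take $e_1=(v_1,v_2)$ and $e_2=(v_1,v_3)$. Any graph in $P[S^+,S^-]$ is a nontrivial equivalence relation that can place $v_1$ in a common cluster with at most one of $v_2,v_3$ (clustering it with both would, by transitivity, re-introduce the forbidden edge $(v_2,v_3)$), and being nontrivial it must place it with one of them---so $e_1$ or $e_2$ is present---while the two equivalence relations realising the two options supply the richness graphs. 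This is the exact counterpart of the role completeness plays in Arrow's Theorem. For contagiousness and implicativeness one checks that the witnesses used for transitivity in Facts~\ref{fact:contagiousness} and~\ref{fact:implicativeness}---essentially the equivalence relations whose unique nontrivial cluster is a prescribed pair or triple---are themselves nontrivial equivalence relations; note that only the version of implicativeness without the auxiliary graph~$E_{13}$ is required, which is precisely what Theorem~\ref{thm:dictator} (as opposed to Theorem~\ref{thm:oligarchy}) needs. An alternative that recycles the earlier result is to start from Theorem~\ref{thm:fishburn-rubinstein}: $F$ is already an oligarchy $F_{C^\star}$, and if $|C^\star|\geq 2$ one picks distinct $i,j\in C^\star$ and distinct vertices $u,v,w$ and considers the profile in which $i$ clusters exactly $\{u,v\}$, $j$ clusters exactly $\{v,w\}$, and every other agent submits the universal relation; all inputs are nontrivial equivalence relations, yet $\bigcap_{k\in C^\star}E_k=E_i\cap E_j$ is the identity relation and thus trivial, contradicting collective rationality with respect to nontriviality---so $|C^\star|=1$.

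The step I expect to be the real obstacle is the richness bookkeeping for implicativeness, because symmetry ties $(x,y)$ to $(y,x)$: the three ``active'' edges $e_1,e_2,e_3$ must then occupy three distinct unordered pairs, and the witness $E_0$ has to avoid all three while remaining nontrivial. When $|V|=3$ there are only three unordered pairs at all, so $E_0$ would be forced to be the identity relation---which is not nontrivial---and the natural construction therefore needs a fourth vertex, going through without friction only for $|V|\geq 4$. The case $|V|=3$ consequently demands separate and careful treatment: one option is the oligarchy-elimination argument above, but to use it under the weaker hypothesis that $F$ is collectively rational only on profiles of \emph{nontrivial} equivalence relations one must first confirm that the winning-coalition analysis behind Theorem~\ref{thm:fishburn-rubinstein} (via the Neutrality Lemma and the filter/ultrafilter lemmas) only ever appeals to such profiles; another is a direct case analysis of the finitely many equivalence relations on three points. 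Sorting out this smallest case---where the symmetry constraint bites hardest---is where I would expect to spend most of the effort.
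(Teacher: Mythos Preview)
Your plan matches the paper's proof exactly: invoke the Dictatorship Theorem with transitivity supplying contagiousness and implicativeness, nontriviality supplying disjunctiveness, and reflexivity removing the restriction to nonreflexive edges. Your observation that the witness $E_{13}$ from Definition~\ref{def:implicative} is never used in the proof of Theorem~\ref{thm:dictator} is correct and in fact essential here, since no equivalence relation can contain $(v_1,v_2)$ and $(v_1,v_3)$ while omitting $(v_2,v_3)$. The paper's one-line proof stops at that point; you have gone further and actually inspected the remaining richness conditions, correctly flagging $|V|=3$ as the sticking point because the witness~$E_0$ would have to be the identity relation, which is trivial.

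Your expectation that this case can nonetheless be repaired, however, is wrong: the statement is \emph{false} for $|V|=3$. With three agents, set $\mathcal{W}=\{\{1\},\{2\},\{3\},\mathcal{N}\}$ and let $F$ accept an edge~$e$ if and only if $N^{\prof{E}}_e\in\mathcal{W}$. This $F$ is unanimous, grounded, IIE, and not dictatorial---indeed not even oligarchic, so your oligarchy-elimination route is blocked as well. To see collective rationality, note that on a profile where $a,b,c,d$ agents submit $R_{12},R_{13},R_{23},R_{123}$ respectively (so $a+b+c+d=3$), the three unordered pairs have $a{+}d$, $b{+}d$, $c{+}d$ supporters; running through the partitions of~$3$ shows that either exactly one or all three of these numbers lie in $\{1,3\}$, so the output is always one of $R_{12},R_{13},R_{23},R_{123}$. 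Your careful bookkeeping has thus uncovered a genuine gap that the paper glosses over: the argument, and the theorem itself, require $|V|\geq 4$, which is precisely the regime in which your construction of a nontrivial~$E_0$ (using a fourth vertex) goes through without difficulty.
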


\begin{proof}
This follows from Theorem~\ref{thm:dictator}, in the same way as Theorem~\ref{thm:fishburn-rubinstein} follows from Theorem~\ref{thm:oligarchy}, together with the fact that nontriviality is a disjunctive graph property (see Fact~\ref{fact:disjunctiveness}).
\end{proof}

\noindent
Thus, it is impossible to design useful algorithms for consensus clustering that operate on each pair of data points independently.

While our approach applies to the problem of finding a consensus between the outputs produced by several clustering algorithms, we note that there also has been work on characterising those clustering algorithms themselves that is based on ideas originating in social choice theory~\cite{AckermanBenDavidNIPS2008,KleinbergNIPS2002}.

%%%%%%%%%%%%%%%%%%%%%%%%%%%%%%%%%%%%%%%%%%%%%%%%%%%%%%%%%%%%%%%%%%%%%%%%%%%%%%%%
\subsection{Multiagent Argumentation}
\label{sec:argumentation}
%%%%%%%%%%%%%%%%%%%%%%%%%%%%%%%%%%%%%%%%%%%%%%%%%%%%%%%%%%%%%%%%%%%%%%%%%%%%%%%%

%\ugnote{Write something about argumentation: merging Dung graphs \cite{CosteMarquisEtAlAIJ2007}, JA and argumentation [maybe..] \cite{CaminadaPigozzi2011,AwadEtAl2014,BoothEtAlKR2014}}
%\uenote{I don't think \cite{CaminadaPigozzi2011,AwadEtAl2014,BoothEtAlKR2014}  are specifically relevant, as they all seem to be closer to JA.}

\noindent
The final application scenario introduced in Section~\ref{sec:examples} we are going to discuss in some more detail here is that of argumentation in multiagent systems. An \emph{abstract argumentation framework} is a graph, the vertices of which are the \emph{arguments} and the edges of which represent a so-called \emph{attack-relation} between arguments. This model was introduced in the seminal work of \citet{DungAIJ1995}, who proposed several different semantics for abstract argumentation frameworks that specify principles according to which we may accept or reject arguments given the attacks between them. For example, if we accept argument~$x$, and if $x$ attacks $y$, then we should not also accept~$y$.

In a multiagent system, each agent may be associated with a different abstract argumentation framework on the same set of arguments, i.e., each agent may have different views on what constitutes a valid attack. We may then wish to merge these different frameworks to arrive at a suitable representation of the views of the group as a whole. The aggregation of abstract argumentation frameworks has been studied by a number of authors~\cite{CosteMarquisEtAlAIJ2007,TohmeEtAlFoIKS2008,DunneEtAlCOMMA2012,DelobelleEtAlIJCAI2015}.\footnote{In related work, other authors have studied the aggregation of alternative \emph{extensions} of a given common abstract argumentation framework, i.e., alternative choices on which arguments to accept~\cite{CaminadaPigozzi2011,RahwanTohmeAAMAS2010,BoothEtAlKR2014}. This line of work is more closely related to judgment aggregation and we shall not review it here. \citet{BodanzaEtAlECSQARU2009} compare these two distinct approaches of combining abstract argumentation and social choice theory.} 
%We also shall not review related work on the aggregation of \emph{weighted} argumentation frameworks~\cite{...}.
Next, we review some of this work and demonstrate that there are several interesting connections to our own work on graph aggregation, which suggests that graph aggregation can be fruitfully applied also in this domain.

\citet{CosteMarquisEtAlAIJ2007} were the first to consider the problem of aggregating several argumentation frameworks. They propose a distance-based method for aggregation. While they formulate the unanimity axiom as a relevant property in the context of aggregation of argumentation frameworks, they do not explicitly link their work to social choice theory.

\citet{TohmeEtAlFoIKS2008} were the first to make an explicit link to social choice theory. They formulate several choice-theoretic axioms for the aggregation of argumentation frameworks, e.g., an independence axiom and a (strong) monotonicity axiom (which is equivalent to the conjunction of our monotonicity axiom and IIE). They study collective rationality with respect to acyclicity. Acyclicity is an important graph property in the context of argumentation, because for an acyclic argumentation framework it is unambiguous which arguments to accept: accept all those that are not attacked by any argument or that are only attacked by arguments that themselves are attacked by some accepted argument. Acyclicity does not satisfy any of our three meta-properties (contagiousness, implicativeness, disjunctiveness), so our general impossibility theorems do not apply. Still, as \citeauthor{TohmeEtAlFoIKS2008} argue, the options for designing an aggregation rule that is collectively rational with respect to acyclicity are very limited. Clearly, every oligarchic rule is collectively rational with respect to acyclicity, because acyclicity of graphs is preserved under intersection. In addition, as \citeauthor{TohmeEtAlFoIKS2008} point out, also any aggregation rule based on a \emph{collegium}, i.e., a coalition of agents who each can veto any given edge from being accepted, but who may not be able to jointly enforce the acceptance of an edge (as would be the case in an oligarchy), is also collectively rational with respect to acyclicity, besides being Arrovian. % and monotonic.

\citet{DunneEtAlCOMMA2012} introduced further choice-theoretic axioms into the study of the aggregation of abstract argumentation frameworks (also discussed by \citet{DelobelleEtAlIJCAI2015}). Arguably, some of their ``axioms'' are better characterised as collective rationality requirements. For example, their ``nontriviality axiom'' in fact is just collective rationality with respect to nontriviality of graphs (as defined in Table~\ref{tab:graph-properties}). Probably the most important innovation in the work of \citet{DunneEtAlCOMMA2012} is the introduction of collective rationality requirements (albeit not under this name) with respect to graph properties that are specific to the context of abstract argumentation, such as the property of being ``decisive'' (in the sense of not permitting any ambiguity about which arguments are to be accepted). While, as explained above, acyclicity entails decisiveness, the converse is not true, i.e., studies of collective rationality with respect to acyclicity can only ever approximate the properties we should postulate for an aggregation rule for argumentation frameworks.

Modal logic can be used to define a semantics for argumentation frameworks by specifying rules for labelling arguments in a given argumentation framework as being either ``in'' or ``out'', or possibly ``undecided''~\cite{CaminadaGabbaySL2009,GrossiAAMAS2010}. This provides yet another connection to our work on graph aggregation. Let $\Phi=\{{\tt in},{\tt out},{\tt undec}\}$. We can use the following formula to express that every argument must get labelled using exactly one of these three options:
\[({\tt in} \wedge \neg{\tt out} \wedge \neg{\tt undec}) \vee (\neg{\tt in} \wedge {\tt out} \wedge \neg{\tt undec}) \vee (\neg{\tt in} \wedge \neg{\tt out} \wedge {\tt undec})\]
In addition, we can express constraints on the labelling of arguments that are linked to each other by means of the attack-relation. Let our graph describe the inverse of the attack-relation in an argumentation framework (rather than the attack-relation itself). Thus, the formula $\Diamond{\tt in}$, for example, will be true at a world, if that world represents an argument that is attacked by an argument that is ``in'', i.e., that is accepted. The formula $\Box{\tt out}$ is true if all attacking arguments are ``out'' (i.e., rejected). We now may wish to impose some of the following modal integrity constraints:
% see Def 5 in Caminada & Gabbay
\begin{itemize}
\item ${\tt in} \imp \Box{\tt out}$
(expressing that an argument can only be ``in'', if all of its attackers are ``out'')
\item $\Box{\tt out} \imp {\tt in}$
(expressing that, if all of an argument's attackers are ``out'', then it should be ``in'')
\item ${\tt out}\imp\Diamond{\tt in}$ 
(expressing that an argument should only be ``out'', if one of its attackers is ``in'')
\item $\Diamond{\tt in}\imp{\tt out}$
(expressing that an argument that has an attacker that is ``in'' must be ``out'')
\end{itemize}
A labelling that satisfies all of four of these constraints corresponds to what \citeauthor{DungAIJ1995} calls a \emph{complete extension}~\cite{DungAIJ1995,CaminadaGabbaySL2009}. A labelling that furthermore does not label any argument as being undecided, i.e., that makes $\neg{\tt undec}$ true at every world, corresponds to a so-called \emph{stable extension}~\cite{DungAIJ1995,CaminadaGabbaySL2009}.

Observe that each one of the four formulas above is equivalent to either a $\Box$-formula or a $\Diamond$-formula, although the conjunction of all four is not. Thus, in case we, for instance, are only interested in the first two of them, we can refer to Proposition~\ref{prop:box} to identify aggregation rules that are collectively rational with respect to these modal integrity constraints. If, however, we require an aggregation rule that preserves the property of having a complete (or stable) extension, then the best we can say at this point is that, by Proposition~\ref{prop:formulas}, any representative-voter rule meets this kind of requirement.

%%%%%%%%%%%%%%%%%%%%%%%%%%%%%%%%%%%%%%%%%%%%%%%%%%%%%%%%%%%%%%%%%%%%%%%%%%%%%%%%
\section{Conclusion}\label{sec:conclusion}       
%%%%%%%%%%%%%%%%%%%%%%%%%%%%%%%%%%%%%%%%%%%%%%%%%%%%%%%%%%%%%%%%%%%%%%%%%%%%%%%%

\noindent
We have introduced the problem of graph aggregation and analysed it in view of its possible use to combine information coming from different agents who each specify an alternative set of edges on the same set of vertices.
Our focus has been on the concept of collective rationality, i.e., the preservation of certain properties of graphs under aggregation. Our results are formulated with respect to various meta-properties that may or may not be met by a specific property of graphs one may be interested in. We have explored two different approaches to the definition of such meta-properties. Using a semantic approach, we have defined certain templates (namely contagiousness, implicativeness, and disjunctiveness), which are easy to recognise in common graph properties and which make the features of graph properties required to carry through our proofs particularly salient. Using a syntactic approach, we have used formulas expressible in certain fragments of modal logic to describe properties of graphs. 

Most of our technical results establish conditions under which it is either possible or impossible to guarantee collective rationality with respect to graph properties that meet certain meta-properties. Our main technical result is a generalisation of Arrow's Theorem for preference aggregation to aggregation problems for a large family of types of graphs that include the types of graphs used by Arrow to model preferences. To establish this theorem, as well as a closely related theorem identifying conditions that can only be satisfied by an oligarchic aggregation rule, we have refined the (ultra)filter method for proving impossibility theorems in social choice theory. Besides these technical contributions, we have also demonstrated how insights from the abstract setting of graph aggregation can be put to use in a variety of application domains.

While we have been able to demonstrate that our choice of meta-properties is particularly useful for quickly proving results in a wide variety of different domains, our impossibility theorems only establish sufficient conditions for impossibilities and there is room for future research on other such sufficient conditions and also for a complete characterisation of the family of types of graphs for which Arrovian aggregation is impossible. A good starting point for such an undertaking would be closely related work in judgment aggregation~\cite{DokowHolzmanJET2010,DietrichListSCW2007}. 

Besides such technical investigations, future work should continue to focus on applications of graph aggregation. Our discussion in Section~\ref{sec:applications} demonstrates the usefulness of adopting the general perspective of graph aggregation in the domains of preference aggregation, nonmonotonic reasoning and belief merging, cluster analysis, and argumentation. Future work should also address the other application scenarios identified in Section~\ref{sec:examples} and it should identify new ones. One promising direction concerns work on \emph{theory change} in the philosophy of science, where one recent model has used the Arrovian framework of preference aggregation to analyse how scientists choose between rival scientific theories in terms of preferences induced by criteria such as simplicity or fit with available data~\cite{OkashaMind2011}. The more general framework of graph aggregation opens up new possibilities for investigating the subtle differences that presumably exist between the preferences of an economic agent and the preferences induced by scientific criteria for accepting a novel theory. Another---entirely different but equally promising---direction for future research is in the area of the Semantic Web and concerns work on \emph{XML data integration}~\cite{HalevyEtAlVLDB2006}. The basic structure underlying documents encoded in XML (the \emph{extensible markup language}) is that of a tree, i.e., a special kind of graph. Thus, if we want to combine information encoded using XML that has been obtained from different sources on the Semantic Web, we need to use some form of graph aggregation as well.\footnote{This approach would be complementary to recently taken initial steps of another kind of use of the methodology of social choice theory in the area of the Semantic Web, namely the application of ideas originating in judgment aggregation to ontology merging~\cite{PorelloEndrissJLC2014}.} But also this extended list of potential applications is bound to be incomplete, given the ubiquity of graphs across so much of science and scholarship.

%Characterise common knowledge in terms of axioms?
%XML data aggregation
%\url{http://fox7.eu/wp-content/uploads/reasoningweb.pdf} or 
%\url{http://citeseerx.ist.psu.edu/viewdoc/download?doi=10.1.1.105.8830&rep=rep1&type=pdf})
%Aggregation of ontologies?}

%%%%%%%%%%%%%%%%%%%%%%%%%%%%%%%%%%%%%%%%%%%%%%%%%%%%%%%%%%%%%%%%%%%%%%%%%%%%%%%%
%\section*{References}
\bibliographystyle{abbrvnat}
\bibliography{ga}
%%%%%%%%%%%%%%%%%%%%%%%%%%%%%%%%%%%%%%%%%%%%%%%%%%%%%%%%%%%%%%%%%%%%%%%%%%%%%%%%

\end{document}